\documentclass[11pt]{article}

\usepackage{fullpage}
\usepackage{amsmath}
\usepackage{amsfonts}
\usepackage{amsthm}
\usepackage{smile}
\usepackage{subfigure}

\usepackage{tablefootnote}

\usepackage[colorlinks,
            linkcolor=red,
            anchorcolor=blue,
            citecolor=blue
            ]{hyperref}
\usepackage{enumitem}
\usepackage{mathtools}

\usepackage{colortbl}
\definecolor{LightCyan}{rgb}{0.8, 0.9, 1}

\usepackage{amssymb}
\usepackage{pifont}

\newcommand{\rE}{\mathbb E}

\newcommand{\KL}{\mathsf{KL}}

\newcommand{\la}{\left\langle}
\newcommand{\ra}{\right\rangle}

\newcommand{\E}{\rE}
\newcommand{\seq}[1]{\overline{[#1]}}

\ifdefined\final
\usepackage[disable]{todonotes}
\else
\usepackage[textsize=tiny]{todonotes}
\fi
\allowdisplaybreaks

\title{\huge Fast Rates for Offline Contextual Bandits with Forward-KL Regularization under Single-Policy Concentrability}

\author{
    Qingyue Zhao\thanks{Equal contribution} \thanks{Department of Computer Science, University of California, Los Angeles, CA 90095, USA; e-mail: {\tt zhaoqy24@cs.ucla.edu}}
    ~~
    Kaixuan Ji\footnotemark[1] \thanks{Department of Computer Science, University of California, Los Angeles, CA 90095, USA; e-mail: {\tt kaixuanji@cs.ucla.edu}} 
    ~~
    Heyang Zhao\thanks{Department of Computer Science, University of California, Los Angeles, CA 90095, USA; e-mail: {\tt hyzhao@cs.ucla.edu}} 
    ~~
    Quanquan Gu\thanks{Department of Computer Science, University of California, Los Angeles, CA 90095, USA; e-mail: {\tt qgu@cs.ucla.edu}}
}
\date{}

\newcommand{\piref}{\pi^{\mathsf{ref}}}
\newcommand{\kl}[2]{\ensuremath{{\mathsf{KL}}\left(#1\|#2\right)}}
\newcommand{\tv}[2]{\ensuremath{{\mathsf{TV}}\left(#1\|#2\right)}}

\newcommand{\confbandittb}{\cE_{\mathrm{tab}}}
\newcommand{\confbanditbin}{\cE_{\mathrm{bin}}}
\newcommand{\confbanditfa}{\cE_{\mathrm{gfa}}}

\newcommand{\fls}{{\bar{g}}} 
\newcommand{\fps}{{\hat{g}}} 

\newcommand{\fgt}{{g^*}} 

\newcommand{\fcl}{{\cG}} 

\def \algcb {\text{FKL-PCB}}

\newcommand{\obj}{J}

\newcommand{\pistar}{\pi^*}
\newcommand{\subopt}{\mathrm{SubOpt}}

\newcommand{\objfkl}{J_{\mathrm{FKL}}}
\newcommand{\pistarfkl}{{\pi^*_{\mathrm{FKL}}}}
\newcommand{\suboptfkl}{\mathrm{SubOpt}_{\mathrm{FKL}}}

\DeclareMathOperator{\ri}{ri}
\newcommand{\supp}{\mathrm{supp}}

\newcommand{\breg}[3]{\ensuremath{{\mathsf{B}}_{#1}\left(#2, #3\right)}}

\usepackage{soul}

\begin{document}

\maketitle

\begin{abstract}
  \emph{Kullback-Leibler} (KL) regularization is ubiquitous in reinforcement learning algorithms in the form of \emph{reverse} or \emph{forward} KL. Recent studies have demonstrated $\epsilon^{-1}$-type fast rates for decision making under reverse KL regularization, in contrast to the standard $\epsilon^{-2}$-type sample complexity. However, for forward-KL-regularized objectives, existing statistical analyses are either not applicable or result in $\tilde{O}(\epsilon^{-2})$ slow rates. We take the first step towards addressing this problem via a streamlined analysis of forward-KL-regularized offline CBs. We give the first $\tilde{O}(\epsilon^{-1})$ upper bounds in tabular and general function approximation settings, both under notions of \emph{single-policy concentrability}. In particular, our convex-analytical pipeline unifies these settings by exploiting the pessimism principle in a novel way and completely bypasses the proof routines in previous works based on the mean value theorem, which might be of independent interest. Moreover, we provide rate-optimal lower bounds, manifesting the tightness of our upper bounds in terms of statistical rates. Our lower bounds also demonstrate that the forward-KL-regularized sample complexity recovers the unregularized slow rate in the low-regularization regime, similarly to the reverse-KL regularization.
\end{abstract}

\section{Introduction}

KL-regularized objectives in the form of $J \coloneqq \EE [r] - \eta^{-1} \KL$ have recently become a pivotal prototype for algorithm design in decision making such as robotics~\citep{levine2013guided,schulman2015trust,haarnoja2018soft}, and foundation model finetuning~\citep{ouyang2022training,rafailov2023direct,ji2023language,wang2023beyond,guo2025deepseek,shan2025forward}. Here, $r$ stands for variants of reward or advantage functions, $\eta^{-1}$ is the regularization intensity; and $\KL$ is typically either $\kl{\pi}{\piref}$ or $\kl{\piref}{\pi}$, where $\piref$ is the reference policy. The reverse KL regularizer $\kl{\pi}{\piref}$ has been arguably more popular in algorithmic formulations, which has rich connections with Gibbs distributions and entropy regularization in machine learning~\citep{williams1992simple,mcallester1999pac,ziebart2008maximum,zhang2023ltbook}. But there are also emerging lines of works that formulate their RL fine-tuning algorithms via the forward KL regularizer $\kl{\piref}{\pi}$~\citep{ji2023language,wang2023beyond,shan2025forward}, or \emph{effectively} employ this forward counterpart at the implementation level~\citep{guo2025deepseek,tang2025few,shah2025comedy}.

\newcolumntype{g}{>{\columncolor{LightCyan}}c}
\begin{table*}[h]
\centering
\caption{Comparison of regret or sample complexity upper and lower bounds for forward KL-regularized CBs. In this table, $\epsilon$ is the target suboptimality gap, and $\eta$ is the inverse regularization intensity. Under function approximation, $\fcl$ is the function class, whose covering number is $N_{\fcl}$, and $D_{\pi^*}^2$ is the $D^2$-type single policy concentrability. In the tabular setting, $\cS$ is the context set and $\cA$ is the action set, and $|\cS|$ and $|\cA|$ denotes the corresponding cardinality. Furthermore, $C_{\varepsilon_{\text{FKL}}}$ is the density-ratio-based local forward KL-ball coverage coefficient, and $C^{\pi^*}$ is the density-ratio-based single-policy concentrability. $\tilde{O}(\cdot)$ hides logarithmic factors except $\log (N_{\fcl})$.
}
\renewcommand{\arraystretch}{1}
\resizebox{0.99\columnwidth}{!}{
{
\begin{tabular}{cgggg}
\toprule
\rowcolor{white}Type &Algorithm  & Setting& Sample Complexity \\ 
\midrule
\rowcolor{white}
& Forward KL-Regularized RLHF \\\rowcolor{white}
& \small\citep{aminian2025kl} & \multirow{-2}{*}{Function Approximation}&    \multirow{-2}{*}{$\tilde{O}\big(\log {N}_{\fcl} C_{\varepsilon_{\text{FKL}}}/\color{red}\epsilon^2\color{black}\big)$}    \\
& \algcb & & \\
&\small(This Work) & \multirow{-2}{*}{Function Approximation} &     \multirow{-2}{*}{$\tilde{O}\big(\eta C^{\pi^*} D_{\pi^*}^2 \log N_{\fcl}/\color{red}\epsilon\color{black}\big)$}      \\
\multirow{-4}{*}{Upper Bound}  & \algcb & & \\
&\small(This Work) & \multirow{-2}{*}{Tabular} &     \multirow{-2}{*}{$\tilde{O}\big(\eta (C^{\pi^*})^2 |\cS||\cA| /\color{red}\epsilon\color{black}\big)$}      \\
\midrule
\multirow{-1}{*}{Lower Bound}  & This Work & Tabular & $ \Omega(\eta |\cS||\cA| / \color{red}\epsilon\color{black})$   \rule{0pt}{3.5ex}\\[4pt] 
\bottomrule
\end{tabular}
}}
\label{tab:kl-regularized}
\end{table*}

There has been a line of efforts motivated by the practical relevance of KL-regularized objectives towards understanding the \emph{data efficiency} of learning with respect to \emph{KL-regularized performance metrics}, which dates back to at least \citet{tiapkin2023fast,xie2024exploratory,xiong2024iterative}. In the case of reverse KL, sharpest rates in previous works have been $\polylog(T)$ regret~\citep{zhao2025logarithmic,ji2026near,nayak2025achieving} and $\tilde{\Theta}(\epsilon^{-1})$ sample complexity~\citep{tiapkin2023fast,zhao2025sharp,zhao2026towards,foster2025good,zhang2026beyond} in offline, online, and hybrid settings for contextual bandits (CBs) or Markov decision processes (MDPs). The analysis of the forward KL counterparts is significantly less explored.
\citet{zhao2026towards} derives $\tilde{\Theta}(\epsilon^{-1})$ bounds for general $f$-divergence-regularized suboptimality for offline CBs, but they need $f$ to be strongly convex, and is thus inapplicable to forward KL because forward KL is $f$-divergence with $f(x) = -\log x$, which is only strictly convex. \citet{lee2026regularized} derives a logarithmic regularized regret upper bound with general regularizers in a self-play setting, but the bound directly scales inversely with the minimal eigenvalue of the expected feature covariance matrix, and their regularizer needs to be strongly convex, which may not be the case for forward KL unless the sample complexity is allowed to inversely scale with the minimal mass $\min_{s,a}\piref(a|s)$ of the reference policy.\footnote{See \Cref{lem:fkl:sc} for the concrete exposition.} \citet{aminian2025kl} gives the first $\tilde{O}(\epsilon^{-2})$ upper bound for offline CBs with forward-KL regularization, where no lower bounds are known by far.
Therefore, the following problem remains open \textbf{even for learning from pure $\iid$ data}:
\begin{center}
    \emph{What is the statistical rate of {decision making} with respect to} forward-KL-regularized objectives?
\end{center}
Any meaningful progress regarding this problem in the offline setting is likely to confront the distributional shift between the $\iid$ data and the optimal policy. On one hand, \emph{single-policy concentrability}, where the behavioral policy $\piref$ only covers the optimal policy, is shown to be a sufficient notion that permits the near-optimal $\tilde{\Theta}(1/\epsilon)$ sample complexity in many \textbf{reverse}-KL-regularized settings where the contexts and actions are acquired offline~\citep{zhao2026towards,foster2023foundations}. On the other hand, classic wisdom implies that the forward-KL regularizer and its reverse counterpart adapt $\pi$ to the modes of $\piref$ in distinct ways~\citep{bishop2006pattern,ji2023language,gx2025kl}: in short, (in certain parametrized cases) reverse KL is believed to be mode-seeking, which heavily penalizes $\pi > 0$ where $\piref \approx 0$; while forward KL appears to be mass-covering, which mainly penalizes $\pi \to 0$ where $\piref > 0$. The following problem then arises \textbf{on top of the statistical rate problem} above:
\begin{center}
    \emph{Is} single-policy concentrability \emph{also sufficient for rate-optimal offline learning with forward-KL regularization}?
\end{center}
We solve these problems for offline CBs concretely by the $\tilde{\Theta}(\epsilon^{-1})$ sample complexity characterization\footnote{We omit the dependencies on quantities other than $\epsilon$ here for presentation clarity.}, which is achieved under single-policy concentrability assumptions for the first time. At the core of our achievability results is a novel suboptimality decomposition for forward-KL-regularized objectives, overcoming the limitation of the traditional synergy of Taylor expansion and the mean value theorem \citep{zhao2025sharp,zhao2026towards,zhao2025logarithmic,nayak2025achieving,ji2026near} for analyzing the divergence-regularized sample complexity of bonus-based RL algorithms, which leads to a streamlined analysis of \emph{\textbf{F}orward \textbf{KL}-Regularized \textbf{P}essimistic \textbf{C}ontextual \textbf{B}andits} (\algcb{}) algorithms without any standalone dependency on $\min_{s,a}\piref(a|s)$. Our hardness results further demonstrate the fundamental rate limits in both the high- and low-regularization regimes.

\subsection{Key Related Work}

We survey recent sharp sample complexity analyses of decision making with respect to regularized objectives and defer other related works on pessimism and forward KL regularization to \Cref{app:add-related}.

\noindent\textbf{Fast Rates for Regularized Decision Making.}
The practical relevance of regularized objectives has motivated a broad line of sharper analyses of the statistical efficiency of decision-making under such formulations. The pioneering work of~\citet{tiapkin2023fast} first established an $\epsilon^{-1}$-type fast sample complexity in the pure-exploration setting, followed by a series of works~\citep{zhao2025logarithmic,ji2026near} that focused on regret minimization. A parallel line of research focuses on the offline or hybrid setting, starting from~\citet{zhao2025sharp} that obtained a $\tilde{\Theta}(\epsilon^{-1})$ rate under concentrability of all policies. This requirement was relaxed in subsequent works~\citep{foster2025good,kim2026coverage,zhao2026towards}, in the context of adaptive reward-query, batched online learning, and pure offline learning, respectively. Similar fast convergence has also been shown in online~\citep{nayak2025achieving} and offline \citep{zhang2026beyond} multi-agent RL, and self-play CBs with generalized bilinear function approximation \citep{lee2026regularized}. Despite these advances, the literature has largely focused on reverse KL or regularizers with certain strong-convexity-type properties, leaving a wide range of regularizers, such as forward KL, comparatively underexplored. Notably, \citet{aminian2025kl} gave the first algorithm analysis for offline contextual dueling bandits with multiple reference models under forward-KL regularization; which, however, yields an $\tilde{O}(\epsilon^{-2})$ slow rate.

\noindent\textbf{Notation.}
Boldface capital letters are reserved for matrices. Boldface lowercase letters denote vectors in Euclidean spaces or vector-valued functions.
The calligraphic $\cS$ and $\cA$ denote finite sets throughout this paper. $\Delta(\cA)$ is the set of probability distributions on $\cA$ and $\Delta(\cA|\cS)$ is the family of probability kernels from $\cS$ to $\cA$.
For $P, Q \in \Delta(\cA)$ with $P \ll Q$, the KL divergence from $P$ to $Q$ is denoted by $\kl{P}{Q} \coloneqq \int \log({\ud P}/{\ud Q})\ud P$, the total variation (TV) distance is denoted by $\tv{P}{Q} \coloneqq 0.5\int|\ud P - \ud Q|$, and supp$(P)$ denotes the support set of $P$. For any subset $E$ of a Euclidean space, $\ri(E)$ denotes its \emph{relative interior}. For any extended real-valued convex function $f$, its \emph{domain} is $\mathrm{dom}(f) \coloneqq \{\pi: f(\pi) < +\infty\}$. Any closed convex function $\ell: \RR^d \to \RR \cup {+\infty}$ induces a Bregman divergence $\breg{\ell}{\xb}{\yb} \coloneqq \ell(\xb) - \ell(\yb) - \la \nabla \ell(\yb), \xb - \yb \ra$ for $\xb, \yb \in \RR^d$. Given  $f, g: \cA \to \RR \cup \{+\infty\}$, $\dotp{f}{g}$ denotes the summation $\sum_{a\in \cA} f(a)g(a)$. For dimension-compatible vectors $\xb, \yb$, $\xb \odot \yb$ denotes their element-wise product. We use standard asymptotic notations including $O(\cdot)$, $\Omega(\cdot)$, and $\Theta(\cdot)$; for which $\tilde O(\cdot), \tilde\Omega(\cdot), \tilde\Theta(\cdot)$ further hide $\polylog$ factors.
$\forall n \in \NN_{++}$, $\overline{[n]} \coloneqq \{0, 1, .., n-1\}$.

\section{Preliminaries} \label{sec:prelim}

Consider offline CBs with context space $\cS$, action space $\cA$, mean reward function $r: \cA \to [0, 1]$, context distribution $\rho \in \Delta(\cS)$, and reference policy $\piref \in \Delta(\cA |\cS)$. The agent only learns from an $\iid$ dataset $\cD = \{s_i, a_i, r_i\}_{i=1}^n$, where $s_i \sim \rho$, $a_i \sim \piref(\cdot|s)$ and , $r_i = r(s_i, a_i) + \varepsilon_i$ for each $i$. Here, $\{\varepsilon_i\}_{i=1}^n$ are $\iid$ mean-zero $1$-subgaussian noises, and $\piref$ \emph{serves as the behavioral policy}. In particular, the class of interest satisfying all constraints above is denoted by $\mathrm{CB}(\cS, \cA, r, \rho, \piref)$.
Given the inverse regularization intensity $\eta > 0$, let the forward-KL-regularized objective be
\begin{align}
    \objfkl(\pi) \coloneqq \objfkl(\pi; r) \coloneqq \dotp{r}{\pi} - \eta^{-1}\kl{\piref}{\pi}, \label{eq:fkl:offline-obj}
\end{align}
where we overload $\dotp{r}{\pi} \coloneqq \EE_{s \sim \rho} \dotp{r(s,\cdot)}{\pi(\cdot|s)}$ and $\kl{\piref}{\pi} \coloneqq \EE_{s\sim\rho} \kl{\piref(\cdot|s)}{\pi(\cdot|s)}$. , The reference policy is assumed to have full support only to simplify the technical presentation.
\begin{assumption}\label{assump:full-supp}
    $\min_{s,a \in \cS \times \cA} \piref(a|s)  > 0$.
\end{assumption}
If $\supp(\piref(\cdot|s) \subsetneq \cA$, the dataset $\cD$ will never contain actions outside the support, which makes learning even easier as we can restrict the action set for $s$ to $\supp(\piref(\cdot|s)$. In other words, what really matters is whether the sample complexity will blow up if $\min_{s, a \in \supp(\piref(\cdot|s))} \piref(a|s)$ is super small.
\Cref{assump:full-supp} has also been used in \citet[Lemma~F.4]{huang2024correcting} to derive a closed-form solution of a regularized optimal policy, and in the analyses of \citet{wang2023beyond,aminian2025kl} implicitly in a similar way. It does \emph{not} impose a quantitative gap between $\min_{s,a}\piref(a|s)$ and $0$, which is also not needed in our analysis.
The following closed-form of the forward-KL-regularized optimal policy follows from the classic KKT theory (See, e.g., \citet[Appendix~A]{beck2017first} and \citealp{wang2023beyond,aminian2025kl}). We provide its proof in \Cref{app:other-missing} for completeness.

\begin{lemma}\label{lem:fkl:opt}
    $\forall s \in \cS$, there exists a unique $\lambda_s > \max\{r(s, a): a \in \text{supp}(\piref(\cdot|s))\}$ such that
\begin{align}
    \pistarfkl(\cdot |s) \coloneqq \frac{\eta^{-1}\piref(\cdot |s)}{\lambda_s - r(s,\cdot)} \label{eq:fkl:sol}
\end{align}
solves $\max_{\pi\in\Delta(\cA|\cS)}\objfkl(\pi)$. Moreover, under \Cref{assump:full-supp}, $ \pistarfkl$ is the unique solution.
\end{lemma}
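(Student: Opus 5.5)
The problem decouples across contexts. Since $\objfkl(\pi)=\EE_{s\sim\rho}[\dotp{r(s,\cdot)}{\pi(\cdot|s)}-\eta^{-1}\kl{\piref(\cdot|s)}{\pi(\cdot|s)}]$ and $\pi(\cdot|s)$ may be chosen independently for each $s$, it suffices to solve, for each fixed $s$, the per-context problem
\begin{align*}
\max_{p\in\Delta(\cA)} F_s(p), \qquad F_s(p) \coloneqq \sum_a r(s,a)p(a) + \eta^{-1}\sum_{a\in S}\piref(a|s)\log p(a) + \text{const},
\end{align*}
where $S=\supp(\piref(\cdot|s))$. Here $F_s=-\infty$ whenever $p(a)=0$ for some $a\in S$.

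\textbf{Step 1: existence and uniqueness of $\lambda_s$.} Define
\begin{align*}
h(\lambda)=\sum_{a\in S}\frac{\eta^{-1}\piref(a|s)}{\lambda-r(s,a)} \quad\text{on } \lambda>\lambda_0\coloneqq\max_{a\in S}r(s,a).
\end{align*}
This $h$ is continuous and strictly decreasing. It tends to $+\infty$ as $\lambda\downarrow\lambda_0$, because the maximizing term has positive numerator. It tends to $0$ as $\lambda\to\infty$. The intermediate value theorem then gives a unique $\lambda_s$ with $h(\lambda_s)=1$. Hence $\pistarfkl(\cdot|s)$ as in \eqref{eq:fkl:sol} is a probability vector. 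It is nonnegative since $\lambda_s>r(s,a)$ on $S$, and it vanishes off $S$.

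\textbf{Step 2: optimality.} I will verify optimality directly by concavity rather than by invoking KKT abstractly. The map $F_s$ is concave on $\Delta(\cA)$. For any feasible $p$ with $F_s(p)>-\infty$, take $q=\pistarfkl(\cdot|s)$ and use the supergradient inequality $\log p(a)\le\log q(a)+(p(a)-q(a))/q(a)$ for $a\in S$. This gives
\begin{align*}
F_s(p)-F_s(q)\le\sum_a r(s,a)(p(a)-q(a))+\eta^{-1}\sum_{a\in S}\frac{\piref(a|s)}{q(a)}(p(a)-q(a)).
\end{align*}
On $S$ we have $\eta^{-1}\piref(a|s)/q(a)=\lambda_s-r(s,a)$, so the terms indexed by $S$ collapse to $\lambda_s\sum_{a\in S}(p(a)-q(a))$. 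Since $q$ puts no mass off $S$, the right side equals
\begin{align*}
\lambda_s\sum_{a\in S}p(a)-\lambda_s+\sum_{a\notin S}r(s,a)p(a)=\sum_{a\notin S}(r(s,a)-\lambda_s)p(a).
\end{align*}
This is the point needing care. Off the support, the reward may exceed $\lambda_s$, and then $q$ need not be optimal. Note that the statement's constraint is only $\lambda_s>\max$ over the support. So for the general existence claim I would either restrict actions to $S$, as the paper's discussion after \Cref{assump:full-supp} suggests, or note that the Lemma's formula vanishes off $S$ anyway. Under \Cref{assump:full-supp} the set $S$ is all of $\cA$, this remainder is empty, and $F_s(p)\le F_s(q)$ follows.

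\textbf{Step 3: uniqueness under \Cref{assump:full-supp}.} With full support, $F_s$ is strictly concave on the set where it is finite, because $\log$ is strictly concave and every coordinate carries positive weight $\piref(a|s)$. Hence the maximizer is unique per context. Consequently $\pistarfkl$ is the unique maximizer, up to the $\rho$-null contexts, which I would handle by convention or by assuming $\rho$ has full support on $\cS$.

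I expect the main obstacle to be the support issue in Step 2, namely off-support actions in the non-full-support case. I would resolve it by restricting the feasible action set to $S$, mirroring the remark following \Cref{assump:full-supp}.
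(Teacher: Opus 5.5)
Your argument is correct, and it takes a different route from the paper. The paper's route is the Lagrangian/KKT one:
\begin{itemize}
\item it invokes Slater's condition for strong duality;
\item it reads $\pi(a)=\eta^{-1}\piref(a)/(\lambda-r(a)-\nu(a))$ off the stationarity condition and sets $\nu=0$ on the support by complementary slackness;
\item it solves the same normalization equation as your Step~1, and gets uniqueness from the strong convexity in Lemma~\ref{lem:fkl:sc}.
\end{itemize}
You instead take the candidate as given and certify it with the supergradient inequality for $\log$. That is a pure sufficiency argument: it needs no constraint qualification and no separate argument that a maximizer exists. Your strict-concavity uniqueness step is equally direct. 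The KKT route has the merit of deriving the formula rather than verifying it, and it extends mechanically to other regularizers. Your route is shorter and also produces the exact remainder $\sum_{a\notin S}(r(s,a)-\lambda_s)p(a)$, where $S=\supp(\piref(\cdot|s))$.

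That remainder shows your concern in Step~2 is well founded. Without Assumption~\ref{assump:full-supp}, the first sentence of the lemma is false as literally stated, because $\kl{\piref}{\pi}$ does not penalize mass placed outside the support.
\begin{itemize}
\item Take one context, $\mathcal{A}=\{1,2\}$, $\piref=(1,0)$, $r=(0,1)$, and $\eta=2$.
\item Then $\lambda_s=1/2$, and the formula returns $(1,0)$ with objective value $0$.
\item But $(1/2,1/2)$ attains $\tfrac12(1-\log 2)>0$.
\end{itemize}
The paper's proof slips at ``otherwise, $\pi(a)=\piref(a)=0$''. For $a\notin S$, stationarity reads $\lambda-r(a)=\nu(a)$. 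This does not force $\pi(a)=0$, and together with $\nu(a)\ge 0$ it actually requires $\lambda\ge r(a)$, which fails in the example.

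So the right fix is to restrict the action set to $S$, or to assume $\max_{a\notin S}r(s,a)\le\lambda_s$. Your alternative remark that the formula vanishes off $S$ rescues nothing, since that vanishing is exactly why the formula loses. The downstream results all work under Assumption~\ref{assump:full-supp}, so they are unaffected. Your remark about $\rho$-null contexts is a fair refinement of the uniqueness claim.
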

We then define the sub-optimality gap \emph{regularized by forward KL} as
\begin{align}
   \suboptfkl(\cdot) \coloneqq \suboptfkl(\cdot; \cS, \cA, r, \rho, \piref) = \objfkl(\pistarfkl) - \objfkl(\cdot),
\end{align}
under which, the agent aims to find the \emph{$\epsilon$-optimal policy} $\hat\pi$ with $\suboptfkl(\hat\pi) \leq \epsilon$. For simplicity, we use $\pi^*$ interchangeably with $\pistarfkl$ to avoid notation clutter unless otherwise specified.


\noindent\textbf{Concentrability.} In offline learning and decision making, how the dataset $\cD$ collected by $\piref$ covers a target distribution or a collection of potential target distributions largely shapes the learnability of the problem. This coverage is typically characterized by \emph{the} concentrability in offline RL, which quantifies the ability of the behavior policy to generate diverse actions.
We first introduce the density-ratio-based concentrability~\citep{chen2019information,xie2021bellman,rashidinejad2021bridging,li2024settling} as follows, which characterizes how each pair of $(s,a) \in \cS \times \cA$ is covered by the behavior policy. 

\begin{definition}[\emph{Density-ratio-based} concentrability]\label{def:density-ratio-concentrability}
Given some policy class $\Pi$ and some reference policy $\piref(\cdot|\cdot)$, the density-ratio-based all-policy concentrability $C^{\Pi}$ is defined by $C^{\Pi} \coloneqq \sup_{\pi \in \Pi, s \in \cS, a \in \cA}{\pi(a|s)} / {\piref(a|s)}$, whose single-policy counterpart under the optimal policy $\pistar$ is $C^{\pistar} \coloneqq \sup_{s\in \cS, a \in \cA} {\pistar(a|s)} / {\piref(a|s)}$.
\end{definition}

When the optimal policy $\pi^*$ is contained within the policy class $\Pi$, the relationship $C^{\pi^*} \le C^{\Pi}$ necessarily holds. Consequently, an analytical dependency on the single-policy concentrability coefficient $C^{\pi^*}$ is strictly weaker, and thus more general, than a dependency on the all-policy coefficient $C^{\Pi}$. In this work, we demonstrate that this more relaxed single-policy concentrability is sufficient for learning forward-KL-regularized bandits. This result stands in contrast to the more restrictive requirement of all-policy concentrability typically invoked in previous literature~\citep{zhao2025sharp, wu2025greedy, aminian2025kl}.

\subsection{Function Approximation}

While the tabular setting serves as the minimal decision making formalization without additional structures, algorithms designed directly for this setting become computationally intensive or even intractable in most real-world applications due to the curse of dimensionality. To address this issue, it is generally assumed that the learner has access to some class of functions that incorporate the ground truth reward~\citep{jin2021pessimism,xiong2022nearly,di2023pessimistic}. In this paper, we also consider the scenarios that the reward lies in some known function class $\fcl$, as formalized by the following realizable assumption.

\begin{assumption}\label{assume:general-function-approx}
    For this known function class $\fcl \subseteq (\cS \times \cA \to [0,1])$, $\exists\fgt \in \fcl$ with $\fgt = r$. 
\end{assumption}
To characterize the complexity measure of the reward function class, we adopt the standard notion of covering number \citep[Definition~5.1]{wainwright2019high}.

\begin{definition}[{$\epsilon$-net and covering number}]
Given a function class $\cG \subset (\cS \times \cA \to \RR)$, a finite set $\cG(\epsilon) \subset \cG$ is an $\epsilon$-net
of $\cG$ w.r.t. $\|\cdot\|_\infty$, if for any $g \in \cG$, there exists $g' \in \cG(\epsilon)$ such that $\| g - g'\|_\infty \leq \epsilon$. The $\epsilon$-covering number is the smallest cardinality $N_{\cG}(\epsilon)$ of such $\cG(\epsilon)$.
\end{definition}

\begin{assumption}\label{assume:poly-covering}
For any $\epsilon_c > 0$, the $\epsilon_c$-covering number $N_{\fcl}(\epsilon_c)$ of $\fcl$ is $\poly(\epsilon_c^{-1})$.
\end{assumption}
\Cref{assume:poly-covering} allowing $\log N_{\fcl}(\epsilon)$ to be roughly negligible is arguably mild. For example, when $\fcl$ is the class of linear functions of dimension $d$ and radius $R$, the covering number is $N_{\fcl}(\epsilon) = O((1+R\epsilon^{-1})^d)$ \citep[Lemma~D.6]{jin2020provably}, which satisfies \Cref{assume:poly-covering}.

In the presence of function approximation, the density-ratio-based concentrability in Definition~\ref{def:density-ratio-concentrability} does not precisely capture the quality of the offline dataset due to the absence of the function approximation structure in its definition. For example,  a particular context-action pair $(s,a) \in \cS \times \cA$ might be frequently visited by $\pi^*$ but seldom by $\piref$, leading to very large $C^{\pi^*}$. However, if there is some $(s',a') \in \cS \times \cA$ frequently appearing in the dataset $\cD$ such that $g(s,a) \approx g(s',a')$ for all $g \in \cG$, then the reward of $(s,a)$ is still well estimated, since feature representation associated with $(s,a)$ is frequently visited by visiting $(s',a')$. To capture this dependency, we introduce the following $D^2$-type concentrability~\citep{gentile2022achieving,agarwal2023vo,zhao2025sharp} in function approximation, which characterize the coverage of the behavior policy on the feature space.

 \begin{definition}\label{def:bandit:D-sq}
Given function class $\fcl \subset ( \cS \times \cA \to \mathbb{R})$ and policy $\pi$, their $D^2$-divergence is
\begin{align*}
  D^2_{\fcl}((s,a);\pi) \coloneqq  \sup_{ g,h \in \fcl} \frac{\big( g(s, a) - h(s, a)\big)^2}{\E_{(s',a') \sim \rho \times \pi}[( g(s', a') - h(s', a'))^2]}, \forall (s,a) \in \cS \times \cA.
\end{align*}
\end{definition}

We are now ready to define the $D^2$-type single policy concentrability. 
\begin{assumption}[Single-policy concentrability]\label{assume:single-coverage-bandit}
    $D_{\pi^*}^2 \coloneqq \E_{(s,a) \sim \rho \times \pi^*} D^2_{\fcl}((s,a); \piref) <\infty$.
\end{assumption}
\Cref{assume:single-coverage-bandit} indicates that the errors on the context-action distributions   $\rho \times \pi^*$ can be bounded by the error on the samples from  $\rho \times \piref$ up to some constant. Similar to the density-ratio-based concentrabilities, here it is also true that the single-policy concentrability assumption is strictly weaker than the all-policy concentrability assumption (e.g., \citealt[Assumption 2.7]{zhao2026towards}). 
Remarkably, the two quantities characterizing single-policy concentrability $C^{\pi^*}$ and $D^2_{\pi^*}$ cannot be bounded by each other up to constant factors in general. We provide a detailed discussion of the relation between $C^{\pi^*}$ and $D^2_{\pi^*}$ in Appendix~\ref{app:discussion-coverage}.

\color{black}

\section{Algorithms and Sample Complexity Upper Bounds}\label{sec:ub}

In this section, we present the algorithm, $\algcb$, for learning forward-KL-regularized CBs. For ease of presentation, we present its adaptations for both tabular setting and reward function approximation respectively, in Section~\ref{sec:ub-tab} and Section~\ref{sec:ub-gfa}.

\begin{algorithm*}[t]
\caption{$\algcb$ for Tabular Setting}
\begin{algorithmic}[1]\label{algorithm:bandit-pess-tb}
    \REQUIRE regularization $\eta$, reference policy $\piref$, offline dataset $\cD$

    \STATE Set $N(s,a) = \sum_{i=1}^n \ind \{(s_i, a_i) = (s,a)\}$ for all $a \in \cA$, $s \in \cS$

    \FOR{$s \in \cS$, $a \in \cA$}
    \IF{$N(s,a)=0$}
    \STATE Set the empirical reward $\fls(s,a) \leftarrow 0$, penalty $b(s,a) \leftarrow 1$
    \ELSE
    \STATE Compute the empirical reward $\fls(s,a) \leftarrow  \sum_{i=1}^n r_i \ind \{(s_i, a_i) = (s,a)\} / {N(s,a)}$
    \STATE Compute the penalty to be 
    \begin{align}
        b(s,a) \leftarrow \sqrt{ {4\log(2|\cS||\cA|/\delta)} \big/{N(s,a)}},\label{eq:bandit-pess-tabular}
    \end{align}
    \STATE Set $\fps(s,a) \leftarrow \fls(s,a) - b(s,a)$
    \ENDIF
    \ENDFOR
    \ENSURE $\hat \pi = \argmax_{\pi}\dotp{\fps}{\pi} - \eta^{-1}\kl{\piref}{\pi}$
\end{algorithmic}
\end{algorithm*}

\subsection{Algorithm for Tabular Setting}\label{sec:ub-tab}

We first tackle the most vanilla tabular setting, where the reward function can be any map in $\cS \times \cA \to [0,1]$, as summarized in Algorithm~\ref{algorithm:bandit-pess-tb}. At a high level, the algorithm follows previous pessimism-based algorithms~\citep{rashidinejad2021bridging} for regularized objectives~\citep{zhao2026towards}. In particular, Algorithm~\ref{algorithm:bandit-pess-tb} first employs a least-squares estimator to estimate the reward function. Then following~\citet{zhao2026towards}, we subtract a penalty term to prevent over-estimation, leading to a pessimistic reward estimate. In the tabular case, we adopt the standard tabular confidence bound~\citep{xie2021policy}, leading to the penalty given by~\eqref{eq:bandit-pess-tabular}. The following results show that $\fps$ is indeed a pessimistic estimate with high probability and the number of samples $N(s,a)$ does not deviate too much from its expectation $\rho(s)\piref(a|s)$.

\begin{lemma}\label{lem:pessimistic-tabular}
Given $\delta >0$, let $\confbandittb(\delta)$ denote the event that the reward estimation error is uniformly controlled,
\begin{align}\label{eq:confbandit-tabular}
    \confbandittb(\delta) \coloneqq \Big\{  \big| \fls(s,a) - \fgt(s,a)\big| \leq \Gamma_n(s,a) \text{, for all } (s,a) \in \cS \times \cA \Big\}.
\end{align}
We further use $\confbanditbin(\delta)$ to denote the event under which $N(s,a)$ does not deviate too much from the expectation, that is
\begin{align*}
    \confbanditbin(\delta) \coloneqq \bigg\{ \frac{1}{N(s,a) \vee 1} \leq \frac{8\log(2|\cS||\cA|/\delta)}{n\rho(s)\piref(a|s)} \text{ for all } (s,a) \in \cS \times \cA \bigg\}.
\end{align*}
Then the event $\confbandittb(\delta) \cap \confbanditbin(\delta)$ holds with probability at least $1-\delta$.
\end{lemma}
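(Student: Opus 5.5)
We bound $\PP(\confbandittb(\delta)^c)\le\delta/2$ and $\PP(\confbanditbin(\delta)^c)\le\delta/2$ separately and finish with a union bound. Throughout we read $\Gamma_n(s,a)$ as the penalty $b(s,a)$ of \eqref{eq:bandit-pess-tabular}: it equals $\sqrt{4\log(2|\cS||\cA|/\delta)/N(s,a)}$ when $N(s,a)\ge1$ and $1$ when $N(s,a)=0$. For the reward event we condition on the design $\{(s_i,a_i)\}_{i=1}^n$. Fix $(s,a)$ with $N(s,a)\ge 1$. Conditionally, $\fls(s,a)-\fgt(s,a)=N(s,a)^{-1}\sum_{i:(s_i,a_i)=(s,a)}\varepsilon_i$. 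This is an average of $N(s,a)$ independent mean-zero $1$-subgaussian variables, since the noises are $\iid$ and independent of the design. Hoeffding's inequality for subgaussians then gives
\begin{align*}
\PP\big(|\fls(s,a)-\fgt(s,a)|>t \,\big|\, \{(s_i,a_i)\}_i\big)\le 2\exp\big(-N(s,a)t^2/2\big).
\end{align*}
With $t=\sqrt{4\log(2|\cS||\cA|/\delta)/N(s,a)}$ the right-hand side is $2(\delta/(2|\cS||\cA|))^2\le \delta/(2|\cS||\cA|)$, using $\delta\le1$ and $|\cS||\cA|\ge1$; the constant $4$ leaves slack. If $N(s,a)=0$, then $\fls=0$ and $\fgt\in[0,1]$, so the bound $|\fls-\fgt|\le 1=\Gamma_n$ holds deterministically. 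A union bound over the $|\cS||\cA|$ pairs, followed by integrating out the conditioning, gives $\PP(\confbandittb(\delta)^c)\le\delta/2$.

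For the count event, fix $(s,a)$ and set $p=\rho(s)\piref(a|s)>0$, which is positive by \Cref{assump:full-supp} (pairs with $\rho(s)=0$ can be discarded as they never matter). Then $N(s,a)\sim\mathrm{Bin}(n,p)$. Write $L=\log(2|\cS||\cA|/\delta)$ and $\tilde N=N(s,a)\vee1$. We split into two cases.
\begin{enumerate}
\item If $np\le 8L$, then $1/\tilde N\le 1\le 8L/(np)$ trivially.
\item If $np>8L$, it suffices to show $N\ge np/8$, which forces $N\ge1$ and hence $1/\tilde N\le 8/(np)\le 8L/(np)$ because $L\ge\log 2$. Note that $8L/(np)\ge 1/\tilde N$ needs only $\tilde N\ge np/(8L)$, and the bound $N\ge np/8$ already suffices since $L\ge\log2$ — so we may be generous. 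The multiplicative Chernoff bound gives $\PP(N<np/2)\le \exp(-np/8)<\exp(-L)=\delta/(2|\cS||\cA|)$.
\end{enumerate}
A union bound over pairs yields $\PP(\confbanditbin(\delta)^c)\le\delta/2$. Combining the two events finishes the proof.

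The main obstacle is matching constants and keeping the two cases clean. In the second case we must make sure the Chernoff exponent $np/8$ beats $L$, which is exactly what the threshold $np>8L$ ensures. In the reward bound, the noise conditioning must be handled properly, since $N(s,a)$ is random; conditioning on the design resolves this. Degenerate pairs with $N(s,a)=0$ need the separate deterministic treatment given above. We also assume $\delta\in(0,1)$ throughout.
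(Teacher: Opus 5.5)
Your proof is correct and follows the paper's route. It uses a per-pair concentration bound for the reward averages, conditioning on the design and handling $N(s,a)=0$ deterministically; your two-sided subgaussian Hoeffding bound actually matches the absolute value in $\confbandittb(\delta)$ more faithfully than the paper's one-sided Azuma--Hoeffding step. For the counts it uses a binomial lower-tail bound, which the paper imports as Lemma~\ref{lem:binary-concentration} and you reprove by splitting on $np$ versus $8L$, with each event union-bounded at level $\delta/2$.

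One harmless slip: $8/(np)\le 8L/(np)$ requires $L\ge 1$, not $L\ge\log 2$. It does not matter, because your Chernoff event gives $N\ge np/2$, hence $1/\tilde N\le 2/(np)\le 8L/(np)$ since $L\ge\log 2>1/4$.
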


Finally, we construct the output policy to be the maximizer of the pessimistic reward $\fps$. The following theorem provides the sample complexity guarantee of Algorithm~\ref{algorithm:bandit-pess-tb}.

\begin{theorem}\label{thm:tabular}
For sufficiently small $\epsilon \in (0, 1)$, with probability at least $1 - \delta$, $\tilde{O}\big({\eta (C^{\pi^*})^2 |\cS| |\cA| \epsilon^{-1}}  \big)$ samples
suffice to guarantee the learned policy $\hat \pi$ of \Cref{algorithm:bandit-pess-tb} to be $\epsilon$-optimal.
\end{theorem}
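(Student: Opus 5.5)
The plan is to run the whole argument on the high-probability event $\confbandittb(\delta)\cap\confbanditbin(\delta)$ of \Cref{lem:pessimistic-tabular}, and to replace the usual mean-value-theorem argument with an exact Bregman identity for the forward-KL objective. Write $\phi(\pi)\coloneqq -\EE_{s\sim\rho}\sum_a \piref(a|s)\log\pi(a|s)$, so that $\objfkl(\pi;g)=\dotp{g}{\pi}-\eta^{-1}\phi(\pi)+\text{const}$. For each $s$, the Bregman divergence is
\begin{align*}
\breg{\phi_s}{x}{y}=\sum_a \piref(a|s)\Big(\frac{x(a)}{y(a)}-1-\log\frac{x(a)}{y(a)}\Big).
\end{align*}

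\textbf{Step 1 (exact decomposition).} By \Cref{lem:fkl:opt} applied to the reward $\fps$, $\hat\pi(\cdot|s)=\eta^{-1}\piref(\cdot|s)/(\hat\lambda_s-\fps(s,\cdot))$ lies in the relative interior of the simplex. Its first-order condition says that $\fps(s,\cdot)+\eta^{-1}\piref(\cdot|s)/\hat\pi(\cdot|s)$ equals $\hat\lambda_s$ for every action, hence is orthogonal to $\pi^*(\cdot|s)-\hat\pi(\cdot|s)$. Since the map $\pi\mapsto\dotp{g}{\pi}$ is linear, this gives the identity $\objfkl(\pi^*;\fps)-\objfkl(\hat\pi;\fps)=-\eta^{-1}\EE_{s\sim\rho}\breg{\phi_s}{\pi^*}{\hat\pi}$. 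Writing $\Delta\coloneqq\fgt-\fps$ and adding and subtracting $\fps$, we obtain
\begin{align*}
\suboptfkl(\hat\pi)=\dotp{\Delta}{\pi^*-\hat\pi}-\eta^{-1}\EE_{s\sim\rho}\breg{\phi_s}{\pi^*}{\hat\pi}.
\end{align*}

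\textbf{Step 2 (pessimism and per-coordinate Young).} On the event, $0\le\Delta\le 2b$. When $N(s,a)=0$ we have $\fps=0$ and $b=1$, so again $0\le\Delta\le 1\le 2b$. Set $x=\hat\pi(a|s)/\pi^*(a|s)$. Then the contribution of $(s,a)$ to the right-hand side of Step 1 is
\begin{align*}
\rho(s)\Big[\Delta\,\pi^*(1-x)-\eta^{-1}\piref\,(x-1-\log x)\Big].
\end{align*}
If $x\ge1$, this is nonpositive, because $\Delta\ge0$ makes the first term nonpositive and $x-1-\log x\ge0$. This is exactly where pessimism is used: overestimated actions cost nothing. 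If $x\in(0,1)$, use $x-1-\log x\ge(1-x)^2/2$ and maximize the resulting quadratic in $u=1-x$. This bounds the term by $\eta\Delta^2(\pi^*)^2/(2\piref)\le \eta C^{\pi^*}\pi^*\Delta^2/2$. Hence
\begin{align*}
\suboptfkl(\hat\pi)\le 2\eta C^{\pi^*}\EE_{s\sim\rho}\textstyle\sum_a\pi^*(a|s)b(s,a)^2.
\end{align*}

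\textbf{Step 3 (counting).} On $\confbanditbin(\delta)$, $b(s,a)^2\le 4\log(2|\cS||\cA|/\delta)\cdot 8\log(2|\cS||\cA|/\delta)/(n\rho(s)\piref(a|s))$. This also covers the $N=0$ case, where $b^2=1\le 8\log(\cdot)/(n\rho\piref)$. Summing, $\rho(s)\pi^*(a|s)/(\rho(s)\piref(a|s))\le C^{\pi^*}$ for each of the $|\cS||\cA|$ pairs. This yields $\suboptfkl(\hat\pi)=\tilde O(\eta (C^{\pi^*})^2|\cS||\cA|/n)$, and setting this to $\epsilon$ gives the claimed sample complexity.

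The main obstacle is Step 1 together with the sign analysis in Step 2. One has to check that the Lagrange multiplier term truly vanishes, which requires $\hat\pi$ to lie in the relative interior; this is why \Cref{assump:full-supp} and the closed form from \Cref{lem:fkl:opt} for the surrogate reward are needed. One also has to see that forward KL is only "strongly convex" relative to the ratio $\hat\pi/\pi^*$ on the side $x<1$. Pessimism is precisely what makes the other side harmless, and this is what avoids any dependence on $\min\piref$. I would first verify the scalar inequality $x-1-\log x\ge (1-x)^2/2$ on $(0,1)$, then assemble the steps above.
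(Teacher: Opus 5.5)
Your proof works and reaches the paper's rate, but Step 2 as written has the Bregman divergence backwards, and the scalar inequality you plan to check is the wrong one. By your own formula, the $a$-th summand of $\breg{\phi_s}{\pi^*}{\hat\pi}$ is $\piref(a|s)\big(\frac{\pi^*(a|s)}{\hat\pi(a|s)}-1-\log\frac{\pi^*(a|s)}{\hat\pi(a|s)}\big)$. With $x=\hat\pi(a|s)/\pi^*(a|s)$ this equals $\piref(a|s)(1/x-1+\log x)$, not $\piref(a|s)(x-1-\log x)$. The latter is the summand of $\breg{\phi_s}{\hat\pi}{\pi^*}$, the divergence centered at $\pi^*$ that appears in the paper's identity \Cref{eq:lem:fkl:value-based-perf-diff}; your Step 1 instead uses the first-order condition at $\hat\pi$. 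The fix is easy. For $x\ge 1$ the corrected term is still nonnegative. For $x\in(0,1)$, writing $u=1-x$,
\[
\frac{1}{x}-1+\log x=\frac{u}{1-u}+\log(1-u)=\sum_{k\ge 2}\frac{k-1}{k}\,u^k\ \ge\ \frac{u^2}{2}.
\]
So the same quadratic maximization gives $\eta\Delta^2(\pi^*)^2/(2\piref)$ per pair, and nothing downstream changes. This, not $x-1-\log x\ge(1-x)^2/2$, is the inequality to verify. A minor point: when $N(s,a)=0$, the natural reading of the pseudocode is $\fps=\fls-b=-1$ rather than $0$. Then $\Delta=\fgt+1\le 2=2b$, so your bound $0\le\Delta\le 2b$ still holds.

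With that correction, your route differs genuinely from the paper's. The paper's \Cref{lem:fkl:mab:perf-diff} starts from the exact identity $\suboptfkl(\hat\pi)=\breg{h}{\hat\pi}{\pi^*}$ with $h=\eta^{-1}\kl{\piref}{\cdot}$, obtained through conjugate duality over the simplex. It pairs this with the separate inequality $\suboptfkl(\hat\pi)\le\dotp{\Delta}{\pi^*-\hat\pi}$. It then closes with a self-bounding Fenchel--Young step against $\phi^*(z)=z-\log(1+z)$, where pessimism guarantees $z_a\ge 0$.

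You instead compute the slack in that inequality exactly as $\breg{h}{\pi^*}{\hat\pi}$. Per context, this is the three-point identity $\dotp{\Delta}{\pi^*-\hat\pi}=\breg{h}{\hat\pi}{\pi^*}+\breg{h}{\pi^*}{\hat\pi}$. You then maximize a one-dimensional quadratic in each coordinate, with pessimism discarding the coordinates where $\hat\pi\ge\pi^*$. Your version needs only the closed form of $\hat\pi$ from \Cref{lem:fkl:opt} and avoids the doubling built into the self-bounding step. It therefore yields $\frac{\eta}{2}\sum_a(\pi^*)^2\Delta^2/\piref$, a factor of $4$ smaller than the paper's $2\eta\sum_a(\pi^*)^2\Delta^2/\piref$. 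What the paper's presentation gains is a regularizer-agnostic template built on \Cref{lem:simplex-conjugate-bregman}.

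Your Step 3 is the paper's counting argument. Your explicit $\Delta\le 2b$ bookkeeping is in fact more careful than the paper's display, which writes $(\fls-\fgt)^2$ where $(\fgt-\fps)^2$ is meant and so loses a constant factor.
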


\subsection{Algorithm for Function Approximation}\label{sec:ub-gfa}

\begin{algorithm*}[t]
	\caption{$\algcb$ for Function Approximation}\label{algorithm:bandit-pess-gfa}
	\begin{algorithmic}[1]
    \REQUIRE regularization $\eta$, reference policy $\piref$, offline dataset $\cD$, function class $\fcl$

    \STATE Reward estimation via least squares: $\fls \in \argmin_{g \in \fcl} \sum_{(s_i,a_i, r_i) \in \cD} \big(g(s_i, a_i) - r_i\big)^2$ \label{line:bandit-lsq}
        
    \STATE Compute the pessimism penalty with $\beta = \tilde{O}\Big(\sqrt{\log\big(2N_{\fcl}(\epsilon)/\delta\big)/n + \epsilon}\Big)$
    \begin{align}
        \Gamma_n(s,a)= \beta  D_{\fcl}\big((s,a),\piref\big), \forall (s,a) \in \cS \times \cA \label{eq:bandit-pess-function-approximation}
    \end{align}
    \STATE Compute the pessimistic reward function $\fps \leftarrow \fls - \Gamma_n$
    \ENSURE $\hat \pi = \argmax_{\pi}\dotp{\fps}{\pi} - \eta^{-1}\kl{\piref}{\pi}$
\end{algorithmic}
\end{algorithm*}

While the algorithm follows the same backbone as \Cref{algorithm:bandit-pess-tb}, access to the function class $\fcl$ allows us to perform least-squares estimation within $\fcl$. After the estimation, we then construct the pessimism penalty as in~\eqref{eq:bandit-pess-function-approximation}, following the approaches in previous works~\citep{di2023pessimistic,zhao2026towards}. Previous results indicate that this penalty selection is sufficient for $\fps = \fls - \Gamma_n \leq \fgt$ with a high probability. Formally, we define the successful event $\confbanditfa(\delta)$ given $\delta>0$ as
\begin{align}\label{eq:confbandit-function-approximation}
    \confbanditfa(\delta) \coloneqq \Big\{ \sup\nolimits_{(s,a) \in \cS \times \cA} \Big[ \big| \fls - \fgt\big| - \Gamma_n \Big] (s,a) \leq 0 \Big\}.
\end{align}
The following lemma indicates that $\confbanditfa(\delta)$ holds with probability at least $1- \delta$.

\begin{lemma}[Lemma 2.9, \citealt{zhao2026towards}]\label{lem:pessimistic-function-approximation}
$\forall \delta \in (0,1)$, $\confbanditfa(\delta)$ holds with probability at least $1-\delta$.
\end{lemma}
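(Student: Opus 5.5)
This lemma is cited from \citet{zhao2026towards}, so the plan is to reproduce the standard least-squares confidence argument. The goal is a uniform bound $|\fls(s,a) - \fgt(s,a)| \le \beta D_{\fcl}((s,a),\piref)$. By \Cref{def:bandit:D-sq} with $g = \fls$ and $h = \fgt$ (both in $\fcl$ by \Cref{assume:general-function-approx}), we have
\begin{align*}
(\fls(s,a)-\fgt(s,a))^2 \le D^2_{\fcl}((s,a);\piref)\,\E_{(s',a')\sim\rho\times\piref}\big[(\fls(s',a')-\fgt(s',a'))^2\big].
\end{align*}
So it suffices to show that, with probability at least $1-\delta$, the population squared error $\E_{\rho\times\piref}[(\fls-\fgt)^2]$ is at most $\beta^2$, where $\beta^2 = \tilde O(\log(N_{\fcl}(\epsilon)/\delta)/n + \epsilon)$. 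Taking square roots then gives $|\fls-\fgt| \le \Gamma_n$ pointwise, which is exactly $\confbanditfa(\delta)$.

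The population bound comes from the usual fast-rate least-squares analysis.

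1. Fix an $\epsilon$-net $\fcl(\epsilon)$ of $\fcl$ in $\|\cdot\|_\infty$. For each $g$ in the net, define the excess-loss variables
\begin{align*}
X_i(g) = (g(s_i,a_i)-r_i)^2 - (\fgt(s_i,a_i)-r_i)^2 = (g-\fgt)^2(s_i,a_i) - 2\varepsilon_i (g-\fgt)(s_i,a_i).
\end{align*}
Their mean is $\E[(g-\fgt)^2]$.
2. Since $g-\fgt \in [-1,1]$ and the noise is $1$-subgaussian, the variance (or the subgaussian/Bernstein proxy) of $X_i(g)$ is $O(\E[(g-\fgt)^2])$. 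A Bernstein/Freedman-type inequality with a union bound over the net then gives, for every $g\in\fcl(\epsilon)$,
\begin{align*}
\E[(g-\fgt)^2] - \frac1n\sum_i X_i(g) \le \sqrt{\frac{c\,\E[(g-\fgt)^2]\log(N_{\fcl}(\epsilon)/\delta)}{n}} + \frac{c\log(N_{\fcl}(\epsilon)/\delta)}{n}.
\end{align*}
3. Optimality of $\fls$ gives $\sum_i X_i(\fls)\le 0$. Pass from $\fls$ to its nearest net element $g'$; the discretization costs $O(\epsilon)$ in both empirical and population terms. The unbounded noise in this step is handled by the subgaussian tail, which adds only logarithmic factors.
4. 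The result is an inequality of the form $x \le \sqrt{cxL/n} + cL/n + O(\epsilon)$ in $x = \E[(\fls-\fgt)^2]$, with $L = \log(N_{\fcl}(\epsilon)/\delta)$. Solving this quadratic gives $x \lesssim L/n + \epsilon$, which is the required $\beta^2$.

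The main obstacle is getting the fast rate $1/n$, rather than $1/\sqrt n$, in step 2. This requires a variance-dependent concentration bound (the localization argument) and not plain Hoeffding. It also requires careful control of the unbounded subgaussian noise in the cross term $\varepsilon_i(g-\fgt)$. I would first bound that cross term conditionally on the covariates $(s_i,a_i)$ as a subgaussian sum with variance proxy $\sum_i (g-\fgt)^2(s_i,a_i)$. Separately, I would relate the empirical and population norms of $g-\fgt$ via Bernstein's inequality, using boundedness in $[-1,1]$.
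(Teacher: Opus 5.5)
Your proposal is correct and follows the same route as the cited argument (the paper itself only cites the lemma). You use the $D^2$ definition with $g=\fls$, $h=\fgt$ (both in $\fcl$) to reduce the pointwise bound $|\fls-\fgt|\le\Gamma_n$ to an $L^2(\rho\times\piref)$ error bound on the least-squares estimator, and you obtain that bound from the standard $\epsilon$-net plus variance-aware (Bernstein/Freedman-type) fast-rate concentration, handling the $1$-subgaussian cross term by conditioning on the covariates.
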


Finally, we construct the output policy as the maximizer of the pessimistic reward $\fps$ under the regularized objective. The following theorem provides the sample complexity guarantee of Algorithm~\ref{algorithm:bandit-pess-gfa}.

\begin{theorem}\label{thm:function-approximation}
For sufficiently small $\epsilon \in (0, 1)$, $\tilde{O}\big({\eta C^{\pi^*} D_{\pi^*}^2 \epsilon^{-1}\log N_{\fcl}(\epsilon)} \big)$ samples
suffice to guarantee the learned policy $\hat \pi$ of \Cref{algorithm:bandit-pess-gfa} to be $\epsilon$-optimal with probability at least $1 - \delta$.
\end{theorem}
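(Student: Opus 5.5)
We work on the event $\confbanditfa(\delta)$ of \Cref{lem:pessimistic-function-approximation}, so that $\fps \le \fgt$ pointwise and $\fgt - \fps \le 2\Gamma_n$. The plan has three steps: a convex-analytic decomposition of $\suboptfkl(\hat\pi)$, a bound on the resulting error term using the structure of $\pi^*$ and $\hat\pi$ from \Cref{lem:fkl:opt}, and a concentrability argument to convert $D_{\fcl}$ into $D_{\pi^*}^2$.

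\textbf{Step 1 (decomposition).} Write $\objfkl(\pi;g) = \dotp{g}{\pi} - \eta^{-1}\kl{\piref}{\pi}$, which is concave in $\pi$. Pessimism gives $\objfkl(\hat\pi;\fgt) \ge \objfkl(\hat\pi;\fps)$. Hence
\begin{align*}
\suboptfkl(\hat\pi) \le \objfkl(\pi^*;\fgt) - \objfkl(\pi^*;\fps) - \big[\objfkl(\hat\pi;\fps) - \objfkl(\pi^*;\fps)\big].
\end{align*}
The first difference is $\dotp{\fgt-\fps}{\pi^*}$. Since $\hat\pi$ maximizes $\objfkl(\cdot;\fps)$, the bracket equals the Bregman divergence of $\pi \mapsto -\objfkl(\pi;\fps)$ at $(\pi^*,\hat\pi)$. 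Up to sign this is $\eta^{-1}\sum_a \piref(a|s)\big[\pi^*/\hat\pi - 1 - \log(\pi^*/\hat\pi)\big]$, an Itakura–Saito-type divergence.

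Using only this Bregman bound yields a slow rate, because $\dotp{\fgt-\fps}{\pi^*} = O(\beta)$. The fix is to compare with the reverse decomposition. The KKT conditions for $\pi^*$ under $\fgt$ and for $\hat\pi$ under $\fps$ (\Cref{lem:fkl:opt}, with multipliers $\lambda_s$ and $\hat\lambda_s$) give the exact identity
\begin{align*}
\suboptfkl(\hat\pi) = \eta^{-1}\,\EE_s \sum_a \piref\,\phi(\hat\pi/\pi^*), \qquad \phi(x) = x - 1 - \log x,
\end{align*}
which is the Bregman divergence of $-\objfkl(\cdot;\fgt)$ at $(\hat\pi,\pi^*)$. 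Moreover $\pi^*/\hat\pi = (\hat\lambda_s - \fps)/(\lambda_s - \fgt)$.

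\textbf{Step 2 (quadratic control).} Let $\Delta = \fgt - \fps \ge 0$. We show that $\hat\lambda_s \le \lambda_s$, which should follow from the normalization constraint: raising rewards raises $\lambda$, and $\fps \le \fgt$. Then $\hat\lambda_s - \lambda_s \in [-\max_a \Delta, 0]$. Writing $\hat\pi/\pi^* - 1 = (\lambda_s - \fgt)^{-1}$ times an $O(\Delta)$ quantity, and using $\phi(x) \lesssim (x-1)^2$ when $x$ is bounded below, we get
\begin{align*}
\suboptfkl(\hat\pi) \lesssim \eta^{-1}\EE_s\sum_a \piref\,\frac{\eta^2(\pi^*)^2}{(\piref)^2}\,\Delta^2 \le \eta\, C^{\pi^*}\,\EE_{\rho\times\pi^*}\Delta^2,
\end{align*}
since $(\lambda_s - \fgt)^{-1} = \eta\pi^*/\piref$. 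Small $\epsilon$ (large $n$) keeps $\Delta$ small enough for $x$ to stay bounded below. The shift in $\hat\lambda_s$ also contributes a term; we handle it through the constraint $\sum\hat\pi = \sum\pi^* = 1$ and absorb it into the same bound.

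\textbf{Step 3 (concentrability).} Since $\Delta \le 2\beta D_{\fcl}((s,a),\piref)$, we have $\EE_{\rho\times\pi^*}\Delta^2 \le 4\beta^2 D_{\pi^*}^2$. Substituting $\beta^2 = \tilde O(\log N_{\fcl}(\epsilon)/n + \epsilon)$ gives $\suboptfkl(\hat\pi) \lesssim \eta C^{\pi^*}D_{\pi^*}^2\log N_{\fcl}/n$. Setting this to $\epsilon$ yields the claimed sample complexity.

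\textbf{Main obstacle.} The hardest part is Step 2: controlling $\hat\pi/\pi^*$ without any dependence on $\min\piref$, and showing the multiplier shift $\hat\lambda_s - \lambda_s$ does not spoil the quadratic bound. I would first try a monotonicity argument on $\lambda \mapsto \sum_a \eta^{-1}\piref/(\lambda - g)$, which is decreasing, and then bound the ratio via $(\lambda_s - \fgt)/(\hat\lambda_s - \fps) \le 1 + \eta(\pi^*/\piref)\Delta$.
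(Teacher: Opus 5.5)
Your Step 1 identity $\suboptfkl(\hat\pi)=\eta^{-1}\EE_{s\sim\rho}\sum_a\piref\,\phi(\hat\pi/\pi^*)$ and your Step 3 match what the paper uses. Step 2, however, is where all the difficulty lies, and it has a genuine gap.

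\textbf{Where Step 2 breaks.} Fix a context and set $\delta\coloneqq\lambda_s-\hat\lambda_s\ge0$ (your monotonicity claim is correct), $\Delta_a=\fgt(s,a)-\fps(s,a)\ge 0$ and $w_a=\pi^*(a|s)/\piref(a|s)$. Define $t_a\coloneqq\pi^*_a/\hat\pi_a-1=\eta w_a(\Delta_a-\delta)$, so that $\phi(\hat\pi_a/\pi^*_a)=\varphi(t_a)\coloneqq\log(1+t_a)-t_a/(1+t_a)$.

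On actions with $t_a\ge0$ you have $\varphi(t_a)\le t_a^2/2\le(\eta w_a\Delta_a)^2/2$ for \emph{every} size of $t_a$. So the lower bound on $x=\hat\pi/\pi^*$ that you want to enforce by a burn-in is unnecessary. It would also be harmful: forcing $\eta w_a\Delta_a\lesssim 1$ at all $(s,a)$ needs uniform control of $D_{\fcl}((s,a);\piref)$, which brings in exactly the worst-case coverage the theorem avoids.

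The real danger is the opposite side. These are actions with $\Delta_a<\delta$, where $\hat\pi_a>\pi^*_a$ and $t_a=-u_a\in(-1,0)$. There $\varphi(-u_a)\le u_a^2/(1-u_a)$ is driven by $\delta$ rather than by $\Delta_a$ (take $\Delta_a=0$), and it blows up as $u_a\to1$. Your sketch defers this to ``absorb it into the same bound''.

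The one concrete inequality you offer points the wrong way. Indeed $(\lambda_s-\fgt)/(\hat\lambda_s-\fps)=\hat\pi/\pi^*$, and this exceeds $1=1+\eta w_a\Delta_a$ on any action with $\Delta_a=0$ once $\delta>0$. Only the reciprocal, $\pi^*/\hat\pi\le 1+\eta w_a\Delta_a$, holds, and it controls the harmless side. A crude bound $|\Delta_a-\delta|\le\max_{a'}\Delta_{a'}$ also fails, since it yields $\max_a D_{\fcl}((s,a);\piref)^2$ instead of $D^2_{\pi^*}$.

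\textbf{How to close it along your route.} The missing idea is to use the normalization exactly rather than perturbatively. From $\sum_a\hat\pi_a=\sum_a\pi^*_a$ you get $\sum_a\pi^*_a t_a/(1+t_a)=0$.
\begin{enumerate}
\item Since $t/(1+t)\le t$, this gives $0\le\sum_a\pi^*_a t_a$, i.e.\ $\delta\le\sum_a\pi^*_a w_a\Delta_a/\sum_a\pi^*_a w_a$.
\item Since $\eta^{-1}\piref_a u_a=\pi^*_a(\delta-\Delta_a)\le\pi^*_a\delta$, the same identity balances the negative side against the positive side:
\[
\eta^{-1}\sum_{a:\,t_a<0}\piref_a\,\varphi(t_a)\le\delta\sum_{a:\,t_a<0}\pi^*_a\frac{u_a}{1-u_a}=\delta\sum_{a:\,t_a\ge0}\pi^*_a\frac{t_a}{1+t_a}\le\eta\,\delta\sum_a\pi^*_a w_a\Delta_a\le\eta\sum_a\pi^*_a w_a\Delta_a^2 .
\]
The last step uses the bound on $\delta$ from item 1 together with Cauchy--Schwarz.
\end{enumerate}
Adding the positive side gives $\suboptfkl(\hat\pi)\le\tfrac32\eta\,\EE_{s\sim\rho}\sum_a(\pi^*)^2\Delta^2/\piref$ with no burn-in. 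Your Step 3 then goes through.

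\textbf{What the paper does instead.} It never touches $\hat\lambda_s$. It uses $\suboptfkl(\hat\pi)\le\langle\fgt-\fps,\pi^*-\hat\pi\rangle=\sum_a\pi^*_a\Delta_a y_a$ with $y_a=1-\hat\pi_a/\pi^*_a$; this follows from optimality of $\hat\pi$ alone. It then applies Fenchel--Young against $\phi$ with $z_a=2\eta w_a\Delta_a$ to self-bound: $2\,\suboptfkl\le\suboptfkl+\eta^{-1}\sum_a\piref_a(z_a-\log(1+z_a))$. Pessimism enters only through $z_a\ge0$.

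Contrast this with your discarded Step 1 decomposition. There you spent pessimism on $J(\hat\pi;\fgt)\ge J(\hat\pi;\fps)$, which throws away $\langle\Delta,\hat\pi\rangle$ and leaves the first-order term $\langle\Delta,\pi^*\rangle$. The paper instead drops the Bregman term and keeps $\langle\Delta,\pi^*-\hat\pi\rangle$. The vanishing factor $y_a$ in that term is what makes the self-bounding work.
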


\begin{remark}
    Previously, \citet{aminian2025kl} obtained a $\tilde{O}\big(\log {N}_{\fcl} C_{\varepsilon_{\text{FKL}}}/\epsilon^2\big)$ sample complexity guarantee for offline learning with forward-KL-regularized objective. In contrast, both Theorem~\ref{thm:tabular} and Theorem~\ref{thm:function-approximation} establish an $\epsilon^{-1}$-fast rate, improving upon the previous $\epsilon^{-2}$ dependency. Moreover, the sample complexity in both Theorem~\ref{thm:tabular} and Theorem~\ref{thm:function-approximation} only relies on the notion of single policy concentrability, thereby relaxing the stronger all-policy concentrability assumption required in \citet{aminian2025kl}.\footnote{Technically, the $C_{\varepsilon_{\text{FKL}}}$ in \citet{aminian2025kl} requires the behavioral policy to cover every policy with a local forward-KL-ball of the reference policy well.} 
    It is also worth noting that this combination of an $\epsilon^{-1}$ fast rate and relaxed concentrability requirements has recently been shown to be both necessary and sufficient for reverse-KL-regularized CBs~\citep{zhao2026towards}.
\end{remark}

\color{black}

\section{Minimax Lower Bounds}\label{sec:lb}

In this section, we present worst-case hardness results in the tabular setting, whose proofs are deferred to \Cref{app:lb-proofs}.

\begin{theorem}\label{thm:fkl:mab:lb-ref=unif-A-arms:fast-rate:contextual}
Let $\hat{\pi} \in \Delta(\cA)$ be any estimator (with full support) from $n$ context-action-reward pairs, then for any even $A \geq 2$, $S\geq 1$, $\eta > 0$, and $n > 16SA \vee 4\eta^2 SA$,
\begin{align*}
 \sup_{\mathrm{CB}(\cS, \cA, r, \rho, \piref): |\cS| = S, |\cA| = A} \EE_{\cD \sim P_{\piref, r}} \suboptfkl(\hat\pi; \cA, r, \piref) \gtrsim  \frac{\eta SA}{n}.
\end{align*}
\end{theorem}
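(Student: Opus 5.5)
We prove the bound with a standard Assouad-type construction over a packing of reward functions, using uniform $\rho$ and uniform $\piref(\cdot|s)=1/A$.

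\textbf{Construction.} Pair the $A$ actions (this is where $A$ even is used) into $A/2$ pairs in each context. Index instances by $\sigma\in\{\pm1\}^{S\times A/2}$. Within pair $(a,a')$ in context $s$, set
\[
r_\sigma(s,a)=\tfrac12+\sigma_{s,j}\Delta,\qquad r_\sigma(s,a')=\tfrac12-\sigma_{s,j}\Delta.
\]
The perturbation size $\Delta\asymp\sqrt{SA/n}$ is chosen so that each coordinate is nearly indistinguishable from the data. Each pair is observed about $n/(SA)$ times per action, so with Gaussian noise, flipping one coordinate changes the data law by KL $\lesssim n\Delta^2/(SA)=O(1)$. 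The condition $n>16SA$ keeps $\Delta\le 1/4$, so rewards stay in $[0,1]$. The condition $n>4\eta^2SA$ gives $\eta\Delta\lesssim 1$, which keeps $\lambda_s$ bounded away from the rewards and the curvature constants order one.

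\textbf{Separation.} Fix $s$ and let $r$ take the values above. By Lemma~\ref{lem:fkl:opt} and symmetry of the pairing, $\lambda_s$ does not depend on $\sigma$, because the multiset of rewards in context $s$ is the same for every $\sigma$. The optimum is then
\[
\pi^*(a|s)=\frac{\eta^{-1}/A}{\lambda_s-r(s,a)},
\]
and within each pair the optimal probabilities differ by order $\eta\Delta/A$ (using $\lambda_s-\tfrac12\approx\eta^{-1}$ when $\eta$ is small, and similarly otherwise).

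Next, lower bound the suboptimality by a local quadratic. Since $\objfkl$ is concave and $\kl{\piref}{\pi}=\sum_a \piref(a)\log(\piref(a)/\pi(a))$, the gap satisfies
\[
\objfkl(\pi^*)-\objfkl(\pi)\ \ge\ \eta^{-1}\,\mathsf B_{-\langle\piref,\log\cdot\rangle}(\pi,\pi^*).
\]
The right-hand side is $\eta^{-1}\sum_a\piref(a)\,\phi\bigl(\pi(a)/\pi^*(a)\bigr)$ with $\phi(x)=x-1-\log x$. This expression decomposes over pairs once one handles normalization: the first-order term vanishes by optimality plus the multiplier, which is exactly the Bregman identity.

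Because $\phi(x)\gtrsim (x-1)^2$ near $1$ and $\phi(x)\gtrsim |x-1|$ farther away, any $\pi$ must pay the following on a pair where it "guesses" the wrong sign (i.e., ranks the pair's actions against $\sigma$):
\[
\gtrsim \eta^{-1}\cdot\frac1A\cdot(\eta\Delta)^2=\frac{\eta\Delta^2}{A}.
\]
Weighting by $\rho(s)=1/S$, each of the $SA/2$ coordinates contributes $\eta\Delta^2/(SA)$ in loss.

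\textbf{Assouad.} The per-coordinate loss decomposes additively up to constants. Applying Assouad's lemma with a Pinsker/Bretagnolle–Huber bound on the coordinate-flipped data laws gives
\[
\sup\,\EE\,\suboptfkl\ \gtrsim\ \frac{SA}{2}\cdot\frac{\eta\Delta^2}{SA}\ \asymp\ \eta\Delta^2\ \asymp\ \frac{\eta SA}{n}.
\]

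\textbf{Main obstacle.} The hard step is making the separation rigorous: showing the loss is superadditive over coordinates despite the simplex coupling. I would first try to show the Bregman lower bound is at least a sum of pairwise terms, each of the form $\eta^{-1}\piref$ times $\min$ over the two sign hypotheses of the quadratic deviation. Since $\lambda_s$ is fixed across $\sigma$, $\pi^*$ varies only pairwise, so the triangle inequality $d(\pi,\pi^*_\sigma)+d(\pi,\pi^*_{\sigma'})\ge d(\pi^*_\sigma,\pi^*_{\sigma'})/2$ applies in the $\phi$-geometry, using a symmetrized bound valid for ratios in a bounded range. Ratios outside that range are covered by the linear growth of $\phi$.
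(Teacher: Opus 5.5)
Your proposal is correct and follows essentially the paper's proof: the same paired construction with uniform $\rho$ and $\piref$, a $\sigma$-invariant multiplier $\lambda_s$ giving $\pi^*_\sigma(a|s)=(1\mp\omega)/A$ with $\omega\asymp\eta\Delta$, the exact identity $\suboptfkl(\hat\pi)=\eta^{-1}\EE_{a\sim\piref}[\psi(\hat\pi(a)/\pi^*(a))]$ with $\psi(x)=x-1-\log x$, $\Delta=\sqrt{SA/n}$, and Assouad's lemma. The ``main obstacle'' you flag is not really one, because this identity is exactly additive over actions and the simplex constraint on $\hat\pi$ can only increase the infimum. The pairwise separation therefore reduces to minimizing $o\mapsto\psi\big(o/(1-\omega)\big)+\psi\big(o/(1+\omega)\big)$ over $o>0$, which the paper does in closed form (minimum $\log\frac{1}{1-\omega^2}\asymp\eta^2\Delta^2$ at $o=1-\omega^2$), so no approximate triangle inequality in the $\psi$-geometry is needed.
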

\Cref{thm:fkl:mab:lb-ref=unif-A-arms:fast-rate:contextual} indicates a minimax lower bound of $\Omega\big( SA \cdot (\eta \epsilon^{-1} \vee 4\eta^2 \vee 16) \big)$ for learning an $\epsilon$-optimal policy in the tabular setting, corroborating the near-optimality of our \emph{rate} $\tilde{O}(\epsilon^{-1})$  in \Cref{thm:tabular}. Moreover, Theorem~\ref{thm:fkl:mab:lb-ref=unif-A-arms:fast-rate:contextual} exhibits matching linear dependencies on both $S$ and $A$, demonstrating that the corresponding dependencies in Theorem~\ref{thm:tabular} are unavoidable. Currently, our hard instances are tabular CBs with $\piref = \mathsf{Unif}(\cA)$, leaving lower bounds that capture coverage dependence and settings with function approximation as important directions for future work.

\noindent\textbf{The regime with large $\eta$.} Note that both the fast rate upper and lower bounds are linear in $\eta$, which is a behavior similar to their counterparts for \emph{reverse} KL~\citep{zhao2025sharp,zhao2026towards}. Note that the lower bound in \Cref{thm:fkl:mab:lb-ref=unif-A-arms:fast-rate:contextual} is only applicable for $\eta^2 \lesssim n/(SA)$, which is a \emph{high-regularization} regime. If $\eta$ is too large, the effect of regularization
should intuitively vanish in this \emph{low-regularization} regime. It is indeed the case for \emph{reverse} KL in both offline~\citep{zhao2026towards} and online~\citep{ji2026near} CBs, where the fundamental limit becomes an $\eta$-free $\Omega(\epsilon^{-2})$ slow rate if $\eta$ exceeds a specific threshold. We show in \Cref{thm:fkl:mab:lb-ref=unif} that this type of ``phase transition'' will also occur for $\objfkl$ in offline bandits, while the exactly matching phase transition threshold for $\eta$ in both the upper and lower bounds is left as future work.





\color{blue}

\color{black}


\section{Technical Overview}\label{sec:tech-overview}

We use $|\cS| = 1$ (offline bandits) as the working example to outline the limitations (or inapplicability) of previous works and highlight the key technique for our algorithm analysis.

\subsection{Limitations of Traditional Routines}\label{sec:lim}

In the first statistical analysis \emph{specialized for} $\suboptfkl(\cdot)$, \citet[Theorem~6.3]{aminian2025kl} treat all terms involving forward KL with a gross bound depending on an all-policy-type concentrability coefficient $C_{\varepsilon_{\text{FKL}}}$ to reduce the suboptimality to its \emph{unregularized} counterpart, which unavoidably exhibits at least the fundamental limit of estimation, i.e., $\tilde{O}(\epsilon^{-2})$. Another conjugate-based routine~\citep{zhao2026towards} gives $\tilde{\Theta}(\epsilon^{-1})$ sample complexity offline CBs under general $f$-divergence-regularized objectives with strongly convex $f$, which excludes $\kl{\piref}{\pi}$. Recently, \citet{lee2026regularized} consider regularized objectives in an online setting for strongly convex regularizers in a unified manner, which is, however, not ideal for forward KL in the offline case because: although $\pi \mapsto \kl{\pi}{\piref}$ is $\Theta(1)$-strongly-convex with respect to the $L^1$ distance~\citep{polyanskiy2025information,zhao2026towards}, the modulus of strong convexity for $\pi \mapsto \kl{\piref}{\pi}$ may depend on the condition of $\piref$ as manifested in the following lemma, whose proof is deferred to \Cref{app:other-missing}.
\begin{lemma}\label{lem:fkl:sc}
Under \Cref{assump:full-supp}, if $\min_{a \in \cA} \piref(a) = \alpha$, $\pi \mapsto \kl{\piref}{\pi}$ is $\Theta(\alpha)$-strongly-convex with respect to the $L^1$ distance as a map from \textcolor{black}{$\ri\big(\Delta(\cA)\big)$} to $\RR_+$.
\end{lemma}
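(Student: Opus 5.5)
We prove both halves of \Cref{lem:fkl:sc}, writing $\phi(\pi) \coloneqq \kl{\piref}{\pi} = \sum_a \piref(a)\log\piref(a) - \sum_a \piref(a)\log \pi(a)$ on $\ri(\Delta(\cA))$. Only the term $-\sum_a \piref(a)\log\pi(a)$ matters for curvature. Its Bregman divergence is
\begin{align*}
\breg{\phi}{\pi}{\pi'} = \sum_a \piref(a)\Big(\frac{\pi(a)}{\pi'(a)} - 1 - \log\frac{\pi(a)}{\pi'(a)}\Big),
\end{align*}
where we used $\sum_a \pi(a) = \sum_a \pi'(a) = 1$. The claim is that $\breg{\phi}{\pi}{\pi'} \ge c\,\alpha\,\|\pi-\pi'\|_1^2$ for a universal constant $c$ (lower bound on the modulus), and that this cannot hold with $\omega(\alpha)$ in place of $c\alpha$ (upper bound on the modulus).

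\textbf{Lower bound.} The Hessian of $\phi$ is $\mathrm{diag}(\piref(a)/\pi(a)^2)$. For a direction $v$ with $\sum_a v_a=0$ we get
\begin{align*}
v^\top \nabla^2\phi(\pi)\, v = \sum_a \piref(a)\frac{v_a^2}{\pi(a)^2} \ge \alpha \sum_a \frac{v_a^2}{\pi(a)}.
\end{align*}
By Cauchy--Schwarz with $\sum_a\pi(a)=1$,
\begin{align*}
\Big(\sum_a |v_a|\Big)^2 \le \Big(\sum_a \frac{v_a^2}{\pi(a)}\Big)\Big(\sum_a \pi(a)\Big) = \sum_a \frac{v_a^2}{\pi(a)}.
\end{align*}
Hence $v^\top\nabla^2\phi(\pi)v \ge \alpha\|v\|_1^2$ at every point of the relative interior. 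Integrating along the segment from $\pi'$ to $\pi$, which stays inside $\ri(\Delta(\cA))$ by convexity, gives $\breg{\phi}{\pi}{\pi'} \ge (\alpha/2)\|\pi-\pi'\|_1^2$. This is $\alpha/2$-strong convexity, i.e.\ the $\Omega(\alpha)$ direction.

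\textbf{Upper bound.} We need a family of pairs showing the Bregman-to-$L^1$ ratio can be as small as $O(\alpha)$. Let $a_0$ attain $\piref(a_0)=\alpha$ and let $a_1$ be any other action. Take $\pi'$ with $\pi'(a_0)=1/2$, $\pi'(a_1)=1/2-\gamma$, and mass $\gamma$ spread over the remaining actions (if $|\cA|=2$, simply let $\pi'(a_1)=1/2$). Perturb by $v = t(e_{a_0}-e_{a_1})$ for small $t$, so $\|v\|_1 = 2t$. A second-order Taylor expansion gives
\begin{align*}
\breg{\phi}{\pi'+v}{\pi'} \approx \frac{t^2}{2}\Big(\frac{\alpha}{1/4} + \frac{\piref(a_1)}{\pi'(a_1)^2}\Big).
\end{align*}
The second term is not small unless $\piref(a_1)$ is small. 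So the cleanest choice is $|\cA|=2$ (or choosing $a_1$ with $\piref(a_1)$ tiny); in general we want a direction that moves mass between low-reference actions while $\pi'$ puts constant mass on them.

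To handle general $\piref$, concentrate $\pi'$ so that $\pi'(a_0)\approx 1/2$ and $\pi'(a_1)\approx 1/2$ for an arbitrary $a_1$. The curvature along $e_{a_0}-e_{a_1}$ is then $4\alpha + 4\piref(a_1)$, which is $O(1)$ in general. This shows only that the modulus is $O(1)$, not $O(\alpha)$. The honest statement is therefore that the worst-case modulus over all $\piref$ with minimal mass $\alpha$ is $\Theta(\alpha)$. We will exhibit the tight instance $\piref = (\alpha, 1-\alpha)$ on two actions, padded with actions of mass $\alpha$ if needed, and take $\pi' = (1/2-\beta, 1/2+\beta)$ with $\beta$ close to $1/2$, i.e.\ put the mass of $\pi'$ where $\piref$ is small. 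Then the curvature $\alpha/\pi'(a_0)^2 + (1-\alpha)/\pi'(a_1)^2$ is $\Theta(\alpha)$ when $\pi'(a_0) = \Theta(1)$ and $\pi'(a_1)$ is large. This requires the heavy-reference action to carry large $\pi'$-mass, which is consistent since $\pi'(a_1) \le 1$ gives $(1-\alpha)/\pi'(a_1)^2 = O(1)$; to make it $O(\alpha)$ we will use several $\alpha$-mass actions.

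The main obstacle is this matching-instance construction. The plan is to take $\piref$ uniform-ish with $\min = \alpha$, where many actions have mass of order $\alpha$, and evaluate the Hessian quadratic form in a direction supported on two such actions each carrying $\pi'$-mass $1/2$. This yields curvature $\Theta(\alpha)$ against $\|v\|_1^2 = 4t^2$, which establishes the $O(\alpha)$ side.
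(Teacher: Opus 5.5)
Your lower-bound half is correct and is essentially the paper's proof. The paper also computes the Hessian $\mathrm{diag}(\piref/\pi^2)$ and replaces $\piref$ by $\alpha$. It then applies Cauchy--Schwarz against $\sum_a \pi(a)^2 \le 1$, where you use $\pi(a)\le 1$ followed by $\sum_a \pi(a)=1$. Both routes give $v^\top \nabla^2 \kl{\piref}{\pi}\, v \ge \alpha \|v\|_1^2$ on $\ri(\Delta(\cA))$. Your integration along the segment to a Bregman bound just makes explicit the second-order characterization the paper leaves implicit. Note that this is \emph{all} the paper proves: it establishes only the $\Omega(\alpha)$ direction. That is the only direction it uses, namely that a strong-convexity-based analysis pays a factor $1/\alpha$.

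Your upper-bound discussion goes beyond the paper, but one claim in it is false and should be deleted: $\piref=(\alpha,1-\alpha)$ is \emph{not} a tight instance. With two actions every feasible direction is $v=t(e_0-e_1)$. The normalized curvature $v^\top\nabla^2\kl{\piref}{\pi}\,v/\|v\|_1^2$ then equals $\frac14\bigl(\alpha/p^2+(1-\alpha)/(1-p)^2\bigr)$. This is at least $\frac14\bigl(\alpha^{1/3}+(1-\alpha)^{1/3}\bigr)^3\ge \frac14$, so the modulus is $\Theta(1)$ however small $\alpha$ is.

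More generally, for a fixed $\piref$ the modulus lies between $\alpha_{(2)}/4$ and $2\alpha_{(2)}$, where $\alpha_{(2)}$ is the second-smallest reference mass. To see the lower end, split $v$ into its positive and negative parts, each of $L^1$ mass $1/2$, and apply Cauchy--Schwarz on each part. The upper end comes from your direction $e_{a_0}-e_{a_1}$ on the two smallest masses.

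So the $O(\alpha)$ half of ``$\Theta(\alpha)$'' holds only as a worst case over $\piref$ with $\min\piref=\alpha$, and only when $|\cA|\ge 3$, as your ``honest statement'' says. Your final construction proves that worst-case claim correctly: two actions of reference mass $\Theta(\alpha)$, and $\pi'$ putting nearly $1/2$ on each while staying in $\ri(\Delta(\cA))$.
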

Consequently, techniques following \citet{lee2026regularized} for analyzing offline algorithms may yield sample complexity bounds \emph{proportional to} $\alpha^{-1}$, which translates to the undesirable (density-ratio-based) \emph{all}-policy concentrabilty even in multi-armed bandits.

\subsubsection{The Limitation of Techniques Tailored to \emph{Reverse} KL}

Existing results under relatively weak concentrability conditions for \emph{reverse}-KL-regularized objectives either heavily relies on specific log-linear parametric forms~\citep{foster2025good} or hinges on a curious pessimism-induced monotonic property in a Taylor-type argument based on the mean value theorem~\citep{zhao2026towards}.
In particular, we expose here the limitation in this latter mid-point-based routine, rendering it not sharp enough for forward KL. A faithful adaptation of the spirit in \citet[Section~2.4]{zhao2026towards} to the nature of $\kl{\piref}{\pi}$ and $\pistarfkl$ yields \Cref{eq:lem-perf-diff-full-supp,eq:lem-perf-diff-full-supp-coarse} in \Cref{lem:fkl:mab:perf-diff-fail}, whose proof is detailed in \Cref{app:other-missing}.

\begin{lemma}\label{lem:fkl:mab:perf-diff-fail}
If $\hat{\pi} \in \argmax_{\pi \in \Delta(\cA)} \objfkl(\pi; \hat{r})$, where $\hat{r}: \cA \to [0, 1]$; then $\exists {\bar{u}} \in [0, 1]$ such that under \Cref{assump:full-supp}, $\suboptfkl(\hat{\pi}) =$
\begin{align}
     \frac{\eta}{2}\bigg( \EE_{a \sim \pi_{\bar{u}}} \Big[ \Big( \frac{\pi_{\bar{u}}}{\piref}(r-\hat{r})^2 \Big)(a) \Big]- { \Big( \sum_{a} \big(\frac{\pi_{\bar{u}}^2}{\piref}(r-\hat{r})\big)(a) \Big)^2 } \Big/ { \sum_{a} \frac{\pi_{\bar{u}}^2}{\piref}(a) } \bigg) , \label{eq:lem-perf-diff-full-supp}
\end{align}
where $\pi_{\bar{u}} \in \argmax_{\pi \in \Delta(\cA)} \objfkl(\pi; r_{\bar{u}})$ with $r_{\bar{u}} \coloneqq (1-{\bar{u}})  r + {\bar{u}}  \hat{r}$. \Cref{eq:lem-perf-diff-full-supp} directly implies
\begin{align}
    \suboptfkl(\hat{\pi})
    \leq \frac{\eta}{2} \EE_{a \sim \pi_{\bar{u}}} \Big[ \Big( \frac{\pi_{\bar{u}}}{\piref}(r-\hat{r})^2 \Big)(a) \Big] . \label{eq:lem-perf-diff-full-supp-coarse}
\end{align}
\end{lemma}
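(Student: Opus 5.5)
The plan is to interpolate between the true reward $r$ and the estimate $\hat r$, write the suboptimality as an integral of a derivative along this path, compute that derivative explicitly from the closed form in \Cref{lem:fkl:opt}, and finish with a weighted mean value theorem for integrals.

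\textbf{Path and its smoothness.} Set $\Delta \coloneqq \hat r - r$, $r_u \coloneqq r + u\Delta$, and let $\pi_u$ be the maximizer of $\objfkl(\cdot; r_u)$ for $u\in[0,1]$. Then $\pi_0 = \pistarfkl$ and $\pi_1 = \hat\pi$; both are unique under \Cref{assump:full-supp} by \Cref{lem:fkl:opt}. Define $\phi(u) \coloneqq \objfkl(\pi_u; r)$, so that $\suboptfkl(\hat\pi) = \phi(0)-\phi(1)$.

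By \Cref{lem:fkl:opt}, $\pi_u = \eta^{-1}\piref/(\lambda_u - r_u)$, where $\lambda_u > \max_a r_u(a)$ solves
\[
F(\lambda,u) \coloneqq \sum_a \eta^{-1}\piref(a)/(\lambda - r_u(a)) - 1 = 0 .
\]
On the region $\lambda > \max_a r_u(a)$ we have $\partial_\lambda F < 0$ strictly, so the implicit function theorem makes $u\mapsto \lambda_u$, and hence $u\mapsto\pi_u$, $C^1$ on $[0,1]$. This is the step that most needs care, because $\lambda_u$ must stay strictly above $\max_a r_u(a)$ uniformly along the path. That holds because $\lambda_u$ is continuous, each $r_u$ is bounded, and full support keeps every $\pi_u$ in $\ri(\Delta(\cA))$, where the map is smooth.

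\textbf{Derivative of $\phi$.} Differentiating directly gives
\[
\phi'(u) = \big\langle r + \eta^{-1}\piref/\pi_u,\ \pi_u'\big\rangle .
\]
The closed form gives $\eta^{-1}\piref/\pi_u = \lambda_u - r_u$, so the first slot equals $\lambda_u - u\Delta$. Since $\sum_a \pi_u'(a) = 0$, the constant $\lambda_u$ drops out, leaving
\[
\phi'(u) = -u\langle \Delta, \pi_u'\rangle .
\]
Next, differentiate the closed form:
\[
\pi_u' = \eta\,\frac{\pi_u^2}{\piref}\,\big(\Delta - \lambda_u'\big).
\]
Imposing $\sum_a\pi_u'(a)=0$ determines
\[
\lambda_u' = \frac{\sum_a (\pi_u^2/\piref)\Delta}{\sum_a \pi_u^2/\piref}.
\]
Substituting back,
\[
g(u) \coloneqq \langle\Delta,\pi_u'\rangle = \eta\Big(\sum_a \frac{\pi_u^2}{\piref}\Delta^2 - \Big(\sum_a \frac{\pi_u^2}{\piref}\Delta\Big)^2\Big/\sum_a\frac{\pi_u^2}{\piref}\Big).
\]
The first sum equals $\EE_{a\sim\pi_u}[(\pi_u/\piref)(r-\hat r)^2]$. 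In the second term the sign of $\Delta$ is irrelevant because it appears squared. Hence $g(u)$ is exactly the bracket in \eqref{eq:lem-perf-diff-full-supp} evaluated at $\pi_u$, multiplied by $\eta$.

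\textbf{Conclusion.} We now have
\[
\suboptfkl(\hat\pi) = \phi(0)-\phi(1) = \int_0^1 u\, g(u)\,\ud u .
\]
The weight $u$ is nonnegative on $[0,1]$ and $g$ is continuous there, so the first mean value theorem for integrals yields some $\bar u\in[0,1]$ with
\[
\int_0^1 u\, g(u)\,\ud u = g(\bar u)\int_0^1 u\,\ud u = g(\bar u)/2 ,
\]
which is \eqref{eq:lem-perf-diff-full-supp}. For \eqref{eq:lem-perf-diff-full-supp-coarse}, the subtracted term in \eqref{eq:lem-perf-diff-full-supp} is a nonnegative squared quantity divided by a positive sum, so dropping it gives the upper bound. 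Cauchy–Schwarz with weights $\pi_u^2/\piref$ also confirms $g\ge 0$, although the mean value step does not require it.
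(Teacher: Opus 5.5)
Your proof is correct. Its skeleton is the paper's: interpolate along $r_u$, show $\frac{\mathrm{d}}{\mathrm{d}u}\objfkl(\pi_u;r)=u\langle r-\hat r,\pi_u'\rangle$, integrate over $[0,1]$, and extract the integrand against the weight $u$.

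The difference is in how the derivatives are obtained. The paper stays inside its convex-conjugate framework. It gets $V'(u)$ for $V(u)=\objfkl(\pi_u;r_u)$ from the envelope theorem and writes $\pi_u=\nabla h^*(r_u)$ via Lemma~\ref{lem:simplex-conjugate-duality}. It then computes $\pi_u'=\nabla^2h^*(r_u)(\hat r-r)$ from the projected inverse-Hessian formula of Lemma~\ref{lem:simplex-conjugate-hessian}, which is itself proved by an implicit function theorem on the full KKT system.

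You instead differentiate $\objfkl(\pi_u;r)$ directly and eliminate the multiplier using $\sum_a\pi_u'(a)=0$. You then differentiate the explicit form $\pi_u=\eta^{-1}\piref/(\lambda_u-r_u)$ from Lemma~\ref{lem:fkl:opt}, so only a scalar implicit function theorem for $\lambda_u$ is needed. With your $\Delta=\hat r-r$, your $\pi_u'=\eta\,(\pi_u^2/\piref)\odot(\Delta-\lambda_u')$ is exactly what the paper's formula returns with $[\nabla^2h(\pi_u)]^{-1}=\eta\,\mathrm{diag}(\pi_u^2/\piref)$.

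What each route buys: yours is shorter and self-contained, and it gives $C^1$ regularity of $u\mapsto\pi_u$ on a neighborhood of $[0,1]$, endpoints included, for free. The weighted (first) mean value theorem you invoke is exactly the tool the last step needs. The paper's route is regularizer-agnostic and yields a general formula for $\nabla^2h^*$, which it reuses in Lemma~\ref{eq:lem:fkl:mab:perf-diff-fail--derivative}.

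One small point on the regularity step: no uniformity along the path is needed, and the continuity of $\lambda_u$ you appeal to is an output rather than an input. Each $(\lambda_u,u)$ lies in the open set $\{\lambda>\max_a r_u(a)\}$ by Lemma~\ref{lem:fkl:opt}, where $\partial_\lambda F<0$. Uniqueness of the root then identifies the local implicit-function branch with $\lambda_u$.
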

The pillar behind the proof of \citet[Theorem~2.10]{zhao2026towards} is that the counterpart of \Cref{eq:lem-perf-diff-full-supp} or \Cref{eq:lem-perf-diff-full-supp-coarse} for \emph{reverse} KL is \emph{non-increasing} in $\bar{u}$ on $[0, 1]$ under \emph{the premise of pessimism} (i.e., $r \geq \hat{r}$), so that the maximizer of the corresponding RHS is $\bar{u} = 0$, which helps replace the troublemaker $\pi_{\bar{u}}$ with $\pistar$; and thus lets the RHS depend on the single-policy concentrability that only covers $\pistar$. Unfortunately, the calculations (deferred to \Cref{app:other-missing}) in \Cref{eq:lem:fkl:mab:perf-diff-fail--derivative} deny such a belief for forward KL.

\begin{lemma}\label{eq:lem:fkl:mab:perf-diff-fail--derivative}
   Let $F(\bar{u}) \coloneqq \text{RHS of \Cref{eq:lem-perf-diff-full-supp}}$, $w_u(a) \coloneqq \frac{\pi_u(a)}{\piref(a)}$, $Z_u \coloneqq \sum_{a} \frac{\pi_u^2(a)}{\piref(a)}$, and the probability distribution $\mu_u(a) \coloneqq \frac{\pi_u^2(a)}{\piref(a)}/ Z_u $, and $g \coloneqq r - \hat{r}$, then $\forall u \in (0,1)$,
\begin{align}
    F'(u) = -\eta^2 Z_u \cdot \EE_{a\sim \mu_u} \Big[ w_u(a) \big( g(a) - \EE_{a \sim \mu_u} g(a) \big)^3 \Big]. \label{eq:u-derivative-fkl}
\end{align}
\end{lemma}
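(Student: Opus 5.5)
The plan is to rewrite $F$ as a weighted variance, differentiate the optimal policy $\pi_u$ in $u$ using the closed form of \Cref{lem:fkl:opt}, and then apply an envelope-type simplification. Throughout, $r_u = r - u g$ with $g = r - \hat r$.

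\textbf{Step 1: rewrite $F$.} Writing $\bar g_u \coloneqq \EE_{a\sim\mu_u} g(a)$, the RHS of \Cref{eq:lem-perf-diff-full-supp} evaluated at $u$ is
\begin{align*}
F(u) = \frac{\eta}{2}\, Z_u \,\mathrm{Var}_{\mu_u}(g) = \frac{\eta}{2}\sum_a \frac{\pi_u^2(a)}{\piref(a)}\big(g(a)-\bar g_u\big)^2 .
\end{align*}
This holds because $\EE_{a\sim\pi_u}[w_u g^2] = Z_u \EE_{\mu_u}[g^2]$ and $\big(\sum_a \tfrac{\pi_u^2}{\piref} g\big)^2/Z_u = Z_u \bar g_u^2$.

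The key observation is variational. Define
\begin{align*}
H(u,c) \coloneqq \sum_a \frac{\pi_u^2(a)}{\piref(a)}\big(g(a)-c\big)^2 .
\end{align*}
Then $F(u) = \tfrac{\eta}{2}\min_c H(u,c)$, and the minimizer is $c = \bar g_u$. By the envelope theorem (or directly, since $\partial_c H(u,\bar g_u)=0$), we get $F'(u) = \tfrac{\eta}{2}\,\partial_u H(u,c)\big|_{c=\bar g_u}$. So the dependence of $\bar g_u$ on $u$ can be ignored.

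\textbf{Step 2: differentiate $\pi_u$.} By \Cref{lem:fkl:opt} (with $|\cS|=1$), $\pi_u = \eta^{-1}\piref/(\lambda_u - r_u)$, where $\lambda_u > \max_a r_u(a)$ is the unique root of
\begin{align*}
\Phi(\lambda,u) \coloneqq \sum_a \frac{\eta^{-1}\piref(a)}{\lambda - r_u(a)} - 1 = 0 .
\end{align*}
Here $\partial_\lambda \Phi = -\eta\sum_a \pi_u^2/\piref = -\eta Z_u < 0$. The implicit function theorem therefore makes $u\mapsto\lambda_u$ smooth on $(0,1)$; \Cref{assump:full-supp} keeps everything finite.

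Differentiating the closed form and using $\partial_u r_u = -g$ gives
\begin{align*}
\partial_u \pi_u = -\eta\,\frac{\pi_u^2}{\piref}\,(\lambda_u' + g).
\end{align*}
Imposing $\sum_a \partial_u\pi_u(a) = 0$ (normalization) yields $\lambda_u' = -\bar g_u$. Hence
\begin{align*}
\partial_u \pi_u = -\eta\,\frac{\pi_u^2}{\piref}\,(g-\bar g_u).
\end{align*}

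\textbf{Step 3: assemble.} Plugging Step 2 into Step 1:
\begin{align*}
F'(u) &= \frac{\eta}{2}\sum_a \frac{2\pi_u\,\partial_u\pi_u}{\piref}(g-\bar g_u)^2 \\
&= -\eta^2\sum_a w_u \frac{\pi_u^2}{\piref}(g-\bar g_u)^3 \\
&= -\eta^2 Z_u\,\EE_{\mu_u}\big[w_u(g-\bar g_u)^3\big],
\end{align*}
which is \Cref{eq:u-derivative-fkl}.

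\textbf{Main obstacle.} The only delicate point is Step 2: justifying differentiability of $\lambda_u$ and computing $\lambda_u'$ through the normalization constraint. Once $\lambda_u' = -\bar g_u$ is in hand, the centering in $F$ lines up exactly with the centering in $\partial_u\pi_u$, and the envelope observation removes any need to differentiate $\bar g_u$.
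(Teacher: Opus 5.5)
Your proof is correct, and it reaches \Cref{eq:u-derivative-fkl} by a genuinely different and shorter route than the paper.

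The paper obtains $\frac{\mathrm{d}}{\mathrm{d}u}\pi_u$ from its general conjugate machinery, i.e.\ \Cref{eq:pi-u-derivative} combined with the projected-inverse-Hessian formula of \Cref{lem:simplex-conjugate-hessian}. It then differentiates the two pieces of $F$ separately: first $\tilde F(u)=\frac{\eta}{2}\EE_{a\sim\pi_u}[w_u(a)g(a)^2]$, then the correction $\tilde F(u)-F(u)=\frac{\eta}{2}Z_u\bar g_u^2$. Everything is written through $X=-w_u g$ and mixed moments under $\piref$, $\pi_u$ and $\mu_u$, and four terms are finally regrouped into the centered cube.

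You instead write $F(u)=\frac{\eta}{2}\min_c\sum_a\frac{\pi_u^2(a)}{\piref(a)}(g(a)-c)^2$, so the first-order condition in $c$ makes the $u$-dependence of $\bar g_u$ irrelevant. You get $\partial_u\pi_u=-\eta\frac{\pi_u^2}{\piref}(g-\bar g_u)$ directly from the closed form of \Cref{lem:fkl:opt} by implicitly differentiating $\lambda_u$. This is exactly the paper's \Cref{eq:derivative-wrt-u}, so both arguments rest on the same formula for the policy derivative.

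Your route buys transparency: the centering in $F$ and the centering in $\partial_u\pi_u$ line up, and the cubic appears in one line. The paper's bookkeeping is easier to get wrong. Its unlabeled display for $\eta^{-2}$ times \Cref{eq:derivative-2nd-part-pre-substitution} drops a factor $2$ in the first summand, and it says the result is added to $\tilde F'$ where it must be subtracted, although \Cref{eq:derivative-pre-final} itself is correct.

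The paper's route buys generality in the derivative step, since \Cref{lem:simplex-conjugate-hessian} applies to any smooth strictly convex regularizer and reuses the machinery of \Cref{lem:fkl:mab:perf-diff-fail}. Your minimization trick is in fact equally general, because $g^\top\nabla^2h^*(r_u)\,g=\min_c (g-c\mathbf{1})^\top[\nabla^2 h(\pi_u)]^{-1}(g-c\mathbf{1})$.

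One cosmetic point: the chain-rule/envelope step in your Step 1 presupposes differentiability of $u\mapsto\pi_u$, hence of $\bar g_u$. You only establish that in Step 2, so present Step 2 first.
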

The sign of \Cref{eq:u-derivative-fkl} may be hard to determine in general for $u \in (0, 1)$ even if $g \geq 0$ uniformly because it is a tilted skewness of a \emph{centered} random variable, which is strong evidence that the mean-value-type approach might not work for analyzing bandit learning under forward KL regularization.

\begin{remark}
    Sharp readers may wonder whether it is possible to disregard the second term in \Cref{eq:lem-perf-diff-full-supp} and take derivative of just \Cref{eq:lem-perf-diff-full-supp-coarse}. However, since \Cref{eq:lem-perf-diff-full-supp-coarse} is a part of \Cref{eq:lem-perf-diff-full-supp}, the calculations in \Cref{eq:lem:fkl:mab:perf-diff-fail--derivative} already imply that $\tilde{F}'(u) = -\eta^2 Z_u \EE_{a\sim \mu_u}\big[ w_u(a) g(a)^2\big( g(a) - \EE_{a\sim \mu_u}[g(a)] \big) \big]$ for RHS of \Cref{eq:lem-perf-diff-full-supp-coarse}$\eqqcolon\tilde{F}(\bar{u})$. Since $\tilde{F}'(u)$ turns out to be at best a \emph{plausible} tilted covariance between $g^2$ and a \emph{centered} $g - \EE_{\mu_u} g$, its sign may still be indefinite even conditioned on $g \geq 0$.
\end{remark}

\subsection{Our Key Technique}

Note that \Cref{eq:lem-perf-diff-full-supp} is already an \emph{identity}, which, together with the limitations we outline, suggests that it may not be suitable to tweak this mean-value argument based on Taylor expansion further. Instead, we prepare another identity \Cref{eq:lem:fkl:perf-diff-main-eq} based on a \emph{general} characterization (\Cref{lem:simplex-conjugate-bregman}) of \emph{regularized objectives} via the dual perspective of convex conjugate (over the simplex), which is \emph{fully detached from any specific design} of bonus terms; and craft a neat self-bounding argument \Cref{eq:fkl:perf-diff-self-bounding} on top of it to utilize the pessimism principle \emph{directly} in \Cref{eq:lem:fkl:perf-diff-direct-pess} without resorting to the aforementioned monotinicity mechanism. This suite of key techniques is elaborated in the proof of \Cref{lem:fkl:mab:perf-diff} as follows.

\begin{lemma}\label{lem:fkl:mab:perf-diff}
    Following the notation in \Cref{eq:fkl:offline-obj}, if $\hat{\pi} \in \argmax_{\pi \in \Delta(\cA)} \objfkl(\pi; \hat{r})$, where $\hat{r}: \cA \to [0, 1]$, and $r \geq \hat{r}$ uniformly; then under \Cref{assump:full-supp},
\begin{align}
    \suboptfkl(\hat{\pi}) = \objfkl(\pistarfkl; r) - \objfkl(\hat{\pi}; r) \leq 2 \eta \sum_{a \in \cA } \Big( \frac{\pistarfkl^2}{\piref}(r-\hat{r})^2 \Big)(a). \label{eq:fkl:mab:perf-diff-rhs}
\end{align}
\end{lemma}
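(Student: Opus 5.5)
The plan is to avoid any mean-value argument and instead write $\suboptfkl(\hat\pi)$ exactly as a Bregman divergence, then bound that divergence using pessimism through the normalization constraints. Throughout, $|\cS|=1$, $\Phi(\pi)\coloneqq \eta^{-1}\kl{\piref}{\pi}$ on $\Delta(\cA)$, and $\Phi^*(r)\coloneqq\max_{\pi\in\Delta(\cA)}\dotp{r}{\pi}-\Phi(\pi)$ is its conjugate over the simplex.

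\textbf{Step 1 (conjugate identity).} By \Cref{lem:fkl:opt}, $\objfkl(\pistarfkl;r)=\Phi^*(r)$, and $\hat\pi$ maximizes $\dotp{\hat r}{\pi}-\Phi(\pi)$. Hence
\begin{align*}
\suboptfkl(\hat\pi)=\Phi^*(r)-\Phi^*(\hat r)-\dotp{r-\hat r}{\hat\pi}.
\end{align*}
Here $\hat\pi=\nabla\Phi^*(\hat r)$ by Danskin, since the maximizer is unique under \Cref{assump:full-supp}. So $\suboptfkl(\hat\pi)=\breg{\Phi^*}{r}{\hat r}$.

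\textbf{Step 2 (pass to the primal).} By conjugate duality on the simplex, $\breg{\Phi^*}{r}{\hat r}$ equals a primal Bregman divergence between $\hat\pi$ and $\pistarfkl$. Normal-cone terms vanish because both lie in $\ri(\Delta(\cA))$ and the gradient $-\eta^{-1}\piref/\pi$ is paired only with differences of probability vectors. This is what \Cref{lem:simplex-conjugate-bregman} should provide. With $x\coloneqq\hat\pi/\pistarfkl$, I expect
\begin{align*}
\suboptfkl(\hat\pi)=\eta^{-1}\sum_a \piref(a)\,h(x(a)),\qquad h(x)=x-1-\log x.
\end{align*}
From \Cref{eq:fkl:sol}, $\pistarfkl=\eta^{-1}\piref/(\lambda-r)$ and $\hat\pi=\eta^{-1}\piref/(\hat\lambda-\hat r)$. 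Therefore
\begin{align*}
1/x=y\coloneqq 1+\delta+t,\qquad \delta\coloneqq\frac{\hat\lambda-\lambda}{\lambda-r},\qquad t\coloneqq\frac{r-\hat r}{\lambda-r}\ge 0.
\end{align*}
The target RHS of \Cref{eq:fkl:mab:perf-diff-rhs} is $2\eta\sum_a(\pistarfkl)^2(r-\hat r)^2/\piref=(2/\eta)\sum_a\piref\, t^2$. So it suffices to prove $\sum_a\piref\, h(1/y)\le 2\sum_a \piref\, t^2$.

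\textbf{Step 3 (direct use of pessimism).} Since $\hat r\le r$, $\sum_a\piref/(\lambda-\hat r)\le\sum_a\piref/(\lambda-r)=\eta$. The map $\mu\mapsto\sum_a\piref/(\mu-\hat r)$ is decreasing, so $\hat\lambda\le\lambda$, i.e.\ $\delta\le 0\le t$. This sign information replaces the monotonicity-in-$\bar u$ argument that fails by \Cref{eq:lem:fkl:mab:perf-diff-fail--derivative}. I will also use the normalization identity $\sum_a\pistarfkl/y=\sum_a\hat\pi=1=\sum_a\pistarfkl$, i.e.\ $\EE_{\pistarfkl}[1/y-1]=0$.

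\textbf{Step 4 (self-bounding).} I will use the elementary bound $h(x)\le (x-1)^2/x$, which follows from $\log x\ge 1-1/x$, so $h(1/y)\le (1-y)^2/y$. Since $1-y=-(\delta+t)$, this gives
\begin{align*}
\textstyle\sum_a\piref\,h\le\sum_a\piref(\delta+t)^2/y.
\end{align*}
Rewriting the weight $\piref/y$ as $\eta(\lambda-r)\hat\pi$, the right side is of the form $\eta\,\EE_{\hat\pi}[(\lambda-r)(\delta+t)^2]$. I then split it into the $t^2$ part and the $\delta$ part. The $\delta$ part is controlled through the normalization constraint: $|\delta|$ is a weighted average of $t$-type quantities. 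Opposite signs of $\delta$ and $t$ give $(\delta+t)^2\le t^2+\delta^2$-type cancellation. Combining these should give an inequality of the form $\mathrm{LHS}\le \sum\piref t^2+\tfrac12\mathrm{LHS}$, which rearranges to the constant $2$.

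\textbf{Main obstacle.} The hard part is Step 4. The weights $1/y=\hat\pi/\pistarfkl$ can be large where $\delta+t$ is near $-1$, and they involve $\hat\pi$ rather than $\pistarfkl$. To eliminate $\hat\pi$, I would first try Cauchy--Schwarz against the normalization constraint. If that fails, I would use the alternative bound $h(1/y)=1/y-1+\log y\le (1/y-1)+(y-1)$ and sum against $\piref$ using both normalization identities.
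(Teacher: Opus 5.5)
\textbf{Steps 1--3 are sound; Step~4 is a genuine gap.} Your identity $\suboptfkl(\hat\pi)=\eta^{-1}\sum_a\piref(a)\,h(x_a)$, with $x=\hat\pi/\pistarfkl$, is exactly the paper's Bregman identity. Your derivation of $\hat\lambda\le\lambda$ is also correct, although the paper never needs it. One minor caveat: Danskin in the stated form needs the objective to be continuous on a compact set, and this fails because $\kl{\piref}{\cdot}$ blows up on the boundary. The paper uses \Cref{lem:simplex-conjugate-duality} instead, and your Step~2 formula also follows directly from \Cref{lem:fkl:learner-agnostic-perf-diff}.

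The real problem is Step~4, which carries the whole content of the lemma and is left as a hope. The route you sketch does not close. After $h(1/y)\le(1-y)^2/y$ and dropping the cross term via $\delta t\le0$, you are left with $\sum_a\piref t^2/y+\sum_a\piref\delta^2/y$. Both sums are weighted by $1/y=\hat\pi/\pistarfkl$, and on $\{y<1\}$ (where $\hat\pi>\pistarfkl$) nothing in your argument controls this weight by $\pistarfkl$ alone. Concretely, set $D:=\lambda-\hat\lambda$ and $g:=r-\hat r$. Then $\sum_a\piref\delta^2/y=\eta^2D^2\sum_a\pistarfkl\hat\pi/\piref$. Your normalization identity says $D$ is the $(\pistarfkl\hat\pi/\piref)$-weighted mean of $g$. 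So Cauchy--Schwarz gives only $\eta^2\sum_a\pistarfkl\hat\pi g^2/\piref$: one factor of $\pistarfkl$ has been replaced by $\hat\pi$, which is exactly the data-dependent coverage the lemma must avoid. A self-bound of the form $\mathrm{LHS}\le\sum\piref t^2+\tfrac12\mathrm{LHS}$ needs a second, independent upper bound on the suboptimality, and your proposal never produces one.

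\textbf{The missing ingredient.} It is the linear bound coming from the optimality of $\hat\pi$ for $\hat r$:
\[
\suboptfkl(\hat\pi)\le\langle r-\hat r,\pistarfkl-\hat\pi\rangle=\eta^{-1}\sum_a\piref\,t(1-x).
\]
This is one line from your own identities. Indeed $h(x)-t(1-x)=[\log y-(y-1)]+\delta(1-1/y)$, the bracket is $\le0$, and $\sum_a\piref\delta(1-1/y)=\eta(\hat\lambda-\lambda)\sum_a(\pistarfkl-\hat\pi)=0$.

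Pair this with Fenchel--Young for $\phi(w)=-w-\log(1-w)$ at $w=1-x$ and $z=2t$. Here that is just the pointwise fact $h\big((1+2t)x\big)\ge0$, i.e.\ $2t(1-x)\le h(x)+2t-\log(1+2t)$. Then
\[
2\sum_a\piref h(x)\le 2\sum_a\piref\,t(1-x)\le\sum_a\piref h(x)+\sum_a\piref\big(2t-\log(1+2t)\big).
\]
Since $t\ge0$, this gives $\sum_a\piref h(x)\le\sum_a\piref\big(2t-\log(1+2t)\big)\le2\sum_a\piref t^2$, which is exactly your target.

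Pessimism enters only through $t\ge0$: it makes $\phi^*(2t)$ finite and lets you use $z-\log(1+z)\le z^2/2$. The weights stay $\piref t^2=\eta^2(\pistarfkl)^2g^2/\piref$ throughout, and $\hat\pi$ never appears.
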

\begin{proof}
Let $h(\cdot) \coloneqq \eta^{-1}\kl{\piref}{\cdot}$. Then $h$ is strictly convex by \Cref{lem:fkl:sc}, obviously continuously differentiable on $\ri(\Delta(\cA))$, $\mathrm{dom}(h) \coloneqq \{ \pi \in \Delta(\cA): h(\pi) < \infty\} = \ri(\Delta(\cA))$; and the uniqueness of $\pi_u$ as well as $\pi_u \in \ri(\Delta(\cA))$ is guaranteed by \Cref{lem:fkl:opt} and \Cref{assump:full-supp}. Thus, \Cref{lem:simplex-conjugate-duality,lem:simplex-conjugate-bregman} are applicable; under the notations of which: $\pistarfkl = \pi_r$ and $\hat{\pi} = \pi_{\hat{r}}$; $\objfkl(\pistarfkl; r) = h^*(r)$ and $\objfkl(\hat{\pi}; \hat{r}) = h^*(\hat{r})$ are their constrained convex conjugate, respectively.\footnote{\Cref{app:prop-conjugate-over-simplex} review concepts related to convex conjugate.} Therefore, \Cref{eq:lem:fkl:def-of-subopt}, which is inspired by \citet[Appendix~E.1]{zhao2026towards}, holds by definition; and $\nabla h^*(\hat{r}) = \hat{\pi}$ follows from \Cref{lem:simplex-conjugate-duality}. In detail,
\begin{align}
    \suboptfkl(\hat{\pi}) &= h^*(r) - h^*(\hat{r}) - \dotp{r}{\hat{\pi}} + \dotp{\hat{r}}{\hat{\pi}}  \label{eq:lem:fkl:def-of-subopt}\\
    &= \breg{h^*}{r}{\hat{r}}  = \breg{h}{\hat{\pi}}{\pistar} \label{eq:lem:fkl:value-based-perf-diff}\\
    &= \eta^{-1}\inner{\piref}{ \log \frac{\piref}{\hat{\pi}} } - \eta^{-1}\inner{\piref}{\log \frac{\piref}{\pistar} } -  \inner{\nabla h(\pistar)}{\hat{\pi} - \pistar} \notag\\
    &= \eta^{-1} \sum_{a \in \cA} \piref(a) \Big[ \frac{\hat{\pi}(a)}{\pistar(a)} - 1 -\log\Big( 1 - \big(1 - \frac{\hat{\pi}(a)}{\pistar(a)}\big) \Big) \Big] \notag\\
    &= \eta^{-1} \sum_{a \in \cA} \piref(a) \big[ -y_a - \log(1 - y_a) \big] = \eta^{-1} \sum_{a \in \cA} \piref(a) \phi(y_a), \label{eq:lem:fkl:perf-diff-main-eq}
\end{align}
where the first equality in \Cref{eq:lem:fkl:value-based-perf-diff} is a combination of the definition of Bregman divergence and $\nabla h^*(\hat{r}) = \hat{\pi}$; and the second equality in \Cref{eq:lem:fkl:value-based-perf-diff} is by applying \Cref{lem:simplex-conjugate-bregman} to $h$; $y_{a} \coloneqq 1 - {\hat{\pi}(a)} / {\pi^*(a)}, \forall a \in \cA$; and $\phi(y) \coloneqq -y - \log(1 - y), \forall y < 1$.
On the other hand, the definition of $\hat{\pi}$ alone already implies
\begin{align}
    \suboptfkl(\hat{\pi}) = \obj(\pistar; r) - \obj(\hat{\pi}; r)  &\leq \obj(\pistar; r) - \obj(\pistar; \hat{r}) + \obj(\hat{\pi}; \hat{r}) - \obj(\hat{\pi}; r)  \notag\\
    &= \dotp{r - \hat{r}}{\pistar} + \dotp{\hat{r} - r}{\hat{\pi}} \notag\\
    &= \dotp{r - \hat{r}}{\pistar - \hat{\pi}}  = \sum_{a \in \cA} \pistar(a) g(a) y_a,  \label{eq:lem:fkl:perf-diff-main-ineq}
\end{align}
where $g \coloneqq r - \hat{r}$. We thus define $z_a \coloneqq 2\eta \pistar(a)g(a) / \piref(a)$ to deduce from \Cref{eq:lem:fkl:perf-diff-main-ineq} that
\begin{align}
    2\cdot \suboptfkl(\hat{\pi}) &\leq \eta^{-1}\sum_{a \in \cA} \piref(a) z_a y_a \notag\\
    &\leq \eta^{-1} \sum_{a \in \cA} \piref(a) \big[ \phi(y_a) + \phi^*(z_a) \big] \notag\\
    &= \suboptfkl(\hat{\pi}) + \eta^{-1} \sum_{a \in \cA} \piref(a) \big( z_a - \log(1 + z_a) \big), \label{eq:fkl:perf-diff-self-bounding}
\end{align}
where the second inequality is due to the Fenchel-Young inequality, the last equality is due to \Cref{fact:fkl-phi-convex-conjugate}; and the pessimism $r \geq \hat{r}$ ensures $z_a \geq 0 > -1$, which in turn ensures the well-posedness of the constrained convex conjugate $\phi^*(z_a)$. Rearranging \Cref{eq:fkl:perf-diff-self-bounding} yields
\begin{align}
    \suboptfkl(\hat{\pi}) &\leq  \eta^{-1} \sum_{a \in \cA} \piref(a) \big( z_a - \log(1 + z_a) \big) \notag\\
    &\leq \eta^{-1} \sum_{a \in \cA} \piref(a) \frac{z_a^2}{2} \label{eq:lem:fkl:perf-diff-direct-pess} \\
    &= 2\eta \sum_{a \in \cA} \piref(a) \bigg( \frac{\pistar(a)}{\piref(a)} g(a)\bigg)^2, \notag
\end{align}
where the second inequality \Cref{eq:lem:fkl:perf-diff-direct-pess} is due to $z_a \geq 0$ (i.e., pessimism) together with the fact that $\forall z \geq 0, z - \log(1 + z) \leq z^2/2$; and the equality follows from the definition of $z_a$.
\end{proof}

Again, since the $\suboptfkl(\cdot)$ on general $\cS$ with $|\cS| \geq 1$ is the expectation of the bandit suboptimality over $s \sim \rho$, the sample complexity in \Cref{thm:tabular,thm:function-approximation} follow from the conjunction of a context-wise application of \Cref{lem:fkl:mab:perf-diff} and the corresponding concentration arguments conditioned on the pessimism events, respectively; as detailed in \Cref{app:ub-missing-proof}. To see why \Cref{lem:fkl:mab:perf-diff} helps bypass the need of all-policy concentrability, we observe that the RHS of \Cref{eq:fkl:mab:perf-diff-rhs} is bounded from above in the tabular setting by $2\eta (C^{\pistar})^2 \EE_{\piref}(r-\hat{r})^2$, where the squared loss admits a $\tilde{O}(n^{-1})$ concentration with high probability; the reasoning is similar under general function approximation.

\section{Conclusion, Limitations, and Future Work}

We initiate the program towards exactly characterizing forward-KL regularization as a performance metric of data efficiency in RL. For offline CBs, we are the first to certify the sufficiency of single-policy concentrability for achieving the $\tilde{O}(\epsilon^{-1})$ forward-KL-regularized sample complexity via the analysis of a minimalist pessimism-based algorithm in both the tabular and function approximation settings. The rate-optimality of our algorithm analysis is established by our new lower bounds, which also provide evidence that the sample complexity under forward KL regularization may also exhibit a phase transition from fast to slow rate as the regularization intensity decreases.

From a technical aspect, the key synergy between the conjugate-based observation and the standard linear suboptimality decomposition in our proof of the forward-KL-regularized performance difference lemma (\Cref{lem:fkl:mab:perf-diff}) is neither specific to $f$-divergence (e.g., forward KL) nor restricted to offline CBs, and hence we believe its ideas can be extended to analyze more general regularized decision making problems, including MDPs, online learning, and learning against generic regularizers \citep{zhao2025logarithmic,lee2026regularized}. Regarding message-level \textbf{limitations}, the separation between forward versus reverse KL in terms of the minimal dependencies on the coverage conditions required for rate-optimal offline learning with respect to the corresponding regularized objectives is left as future work; in particular, the \emph{necessity} of single-policy concentrability for forward KL is still open even in offline MABs. It is even more interesting to derive a fast rate lower bound under general function approximation that match our upper bound in terms of the $D^2$-type notion of data coverage.

\appendix




\section{Additional Related Work}\label{app:add-related}

\noindent\textbf{Pessimism in Offline RL.} Pessimism has become one of the central principles in offline decision making problems~\citep{scherrer2014approximate,chen2019information,xie2021batch} for achieving statistical efficiency. In offline CBs and MDPs, pessimistic value or reward estimation has been shown to effectively mitigate the distributional shift between the behavioral policy and the target policy, leading to near-optimal sample complexity guarantees under suitable notion of single-policy-concentrability in both tabular setting~\citep{rashidinejad2021bridging,yin2021towards,wang2022gap,shi2022pessimistic,li2024settling} and function approximation \citep{jin2021pessimism,min2021variance,xiong2022nearly,zanette2021provable,di2023pessimistic}. More recently, pessimism-based approaches have been extended to offline decision making with KL-regularized objectives, where they play a key role in establishing fast-rate guarantees that depend only on single-policy concentrability~\citep{zhao2026towards}, thereby relaxing the all-policy concentrability assumptions required in prior works without algorithmic pessimism~\citep{zhao2025sharp,wu2025greedy}. Our work demonstrates that pessimism generalizes beyond standard objectives and reverse-KL-regularized objectives, and remains effective for a broader class of decision making with $f$-divergence regularization, including the forward-KL-regularized objective studied in this paper.

\noindent\textbf{Forward-KL Regularization in RL.} Forward-KL regularization has been recently utilized in variants of Direct Preference Optimization (DPO)~\citep{rafailov2023direct}. \citet{wang2023beyond} generalized DPO to $f$-divergence penalties, which encompasses forward KL; while other specific alignment algorithms formulated via forward-KL regularization have also been developed for large language models (LLMs) \citep{ji2023language} and diffusion models \citep{shan2025forward}. Another line of research investigates how practical gradient estimation of KL divergence in RL implicitly induces forward-KL regularization. \citet{tang2025few} and \citet{shah2025comedy} theoretically and empirically demonstrated that auto-differentiating certain variance-reduced Monte Carlo estimators of reverse KL in some famous KL-regularized RL fine-tuning algorithms actually produces gradients of the forward KL in expectation. Furthermore, \citet{zhang2026ema} introduced an policy gradient framework that incorporates top-$k$ forward-KL regularization to stabilize LLM fine-tuning. A traditional viewpoint on the distinct divergence properties of forward and reverse KL, which has usually been respectively characterized as mass-covering and mode-seeking, is systematically ablated by \citet{gx2025kl}, who claim that diversity collapse can be an inherent consequence of the regularized RL objective itself regardless of the KL direction. On the theoretical front, the sample complexity of forward-KL-regularized RL is largely unclear. Recently, \citet{aminian2025kl} provided the first $\tilde{O}(\epsilon^{-2})$ slow-rate upper bound for offline contextual dueling bandits with multiple reference models under forward-KL regularization, leaving the achievability of $\epsilon^{-1}$-type fast rates as an open question.


\section{Relation between Coverage Notions}\label{app:discussion-coverage}

In this section, we provide more illustrations on the relation between two coverage measures, $D^2_{\pistar}$ and $C^{\pistar}$, in the context of forward-KL regularized bandits. In particular, we provide two cases under linear function approximation, in one of which $D^2_{\pistar} = \Omega(d C^{\pi^*})$ and in the other we have $D^2_{\pistar} \ll C^{\pi^*}$, where $d$ is the dimension of the function class. We summarized them as two propositions.

\begin{proposition}
For any $C>1$, $d \ge 3$ being odd, and $\eta \geq 2$, there exist a forward-KL-regularized linear bandit instance, such that $C^{\pi^*}=O(C)$ and $D^2_{\pistar} = \Omega(Cd)$, leading to $D^2_{\pistar} = \Omega(d C^{\pi^*})$.
\end{proposition}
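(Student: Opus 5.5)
We will build the instance explicitly with $|\cS|=1$ and orthonormal (one-hot) features. In that case the $D^2$-divergence reduces to an inverse reference mass, and $D^2_{\pi^*}$ becomes a \emph{sum} of density ratios, whereas $C^{\pi^*}$ is their \emph{maximum}. The factor $d$ then comes from putting ratio $\approx C$ on $d-1$ arms at once.

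\textbf{Class and its $D^2$-divergence.} Take $\cA=\{1,\dots,d\}$ with feature $\phi(a)=e_a\in\RR^d$, and the linear class $\fcl=\{a\mapsto\langle\theta,e_a\rangle:\theta\in[0,1]^d\}$, which maps into $[0,1]$. Differences $g-h$ range over all $\theta-\theta'\in[-1,1]^d$. For each arm $a$ we then compute
\[
D^2_{\fcl}(a;\piref)=\sup_{v\in[-1,1]^d}\frac{v_a^2}{\sum_b\piref(b)v_b^2}=\frac{1}{\piref(a)}.
\]
The supremum is attained at $v=e_a$, and the upper bound holds because $\sum_b\piref(b)v_b^2\ge\piref(a)v_a^2$. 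Consequently
\[
D^2_{\pi^*}=\sum_a\pi^*(a)\cdot\frac{1}{\piref(a)},
\]
and by \Cref{lem:fkl:opt} each ratio equals $\pi^*(a)/\piref(a)=1/(\eta(\lambda-r(a)))$.

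\textbf{Reward and reference policy.} Arm $1$ is ``heavy'' with $r(1)=0$. Arms $2,\dots,d$ are ``light'' with $r(a)=1$, so $r\in\fcl$ via $\theta=(0,1,\dots,1)$. Rather than solving for $\lambda$, we prescribe $\lambda:=1+1/(C\eta)$ and choose $\piref$ so that normalization holds. With this $\lambda$, each light arm has ratio exactly $C$, and the heavy arm has ratio $q:=1/(\eta\lambda)<1/\eta\le 1/2$, using $\eta\ge2$.

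Let $p$ denote the total light reference mass, spread uniformly as $p/(d-1)$ per light arm. The condition $\sum_a\pi^*(a)=1$ reads $(1-p)q+pC=1$, which gives
\[
p=\frac{1-q}{C-q}\in(0,1).
\]
This uses $C>1>q$. By the uniqueness part of \Cref{lem:fkl:opt}, this $\lambda$ is the unique multiplier, so the resulting $\pi^*$ is the true forward-KL-optimal policy. Positivity of $\piref$ gives \Cref{assump:full-supp}.

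\textbf{Conclusion and main obstacle.} We immediately get
\[
C^{\pi^*}=\max\{C,q\}=C.
\]
For the other quantity,
\[
D^2_{\pi^*}=\sum_a\frac{\pi^*(a)}{\piref(a)}=q+(d-1)C\ge (d-1)C=\Omega(Cd),
\]
which yields $D^2_{\pi^*}=\Omega(d\,C^{\pi^*})$.

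The only delicate step is the consistency of the reverse construction: checking that prescribing $\lambda$ and then solving for $\piref$ really yields the optimizer. This requires $\lambda>\max_a r(a)=1$, which holds, and $p\in(0,1)$, which holds. The parity assumption on $d$ is not needed in this construction; it only has to hold for $d\ge3$, which is satisfied. If the intended statement requires features that are not one-hot, the same computation goes through after an orthogonal change of basis.
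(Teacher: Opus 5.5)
Your proposal is correct and is essentially the paper's construction: a single-context one-hot linear bandit in which $D^2_{\fcl}(a;\piref)=1/\piref(a)$, with one heavy reference arm absorbing most of the mass and $\Theta(d)$ light reward-$1$ arms each carrying density ratio $\Theta(C)$, so that $D^2_{\pi^*}=\sum_a \pi^*(a)/\piref(a)=\Omega(dC)$ while $C^{\pi^*}=O(C)$. The only difference is in how the instance is pinned down: you prescribe $\lambda=1+1/(C\eta)$ and back out $\piref$, which gives ratios exactly $C$, removes the parity constraint on $d$, and keeps the class genuinely $[0,1]$-valued, whereas the paper fixes $\piref(a_i)=(2AC)^{-1}$ on $2A$ light arms plus one heavy arm and bounds $\pi^*$ using the closed-form root of the normalization quadratic.
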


\begin{proof}
We construct the instance as follows. Let $d = 2A+1$ be some odd number and consider an $2A+1$-armed bandit, such that the feature vector of the $i$-th arm, $\bphi(a_i) = \eb_i \in \RR^d$, which has $1$ on its $i$-th entry and $0$ on all other entries. The reference policy $\piref(a_i) = (2AC)^{-1}$ for $i \in [2A]$ and $\piref(a_{2A+1}) = (C-1)/C$, where $C \geq 1$ and $\eta \geq 2$. The ground truth reward function $\btheta^* = \sum_{i \leq A}\eb_i$ and the function class is given by all $\|\btheta\|_{\infty} \leq 1$. By construction, we know that $\pi^*(a_i) \geq \piref(a_i)$ if and only if $i \in [A]$ and its closed form is given by
\begin{align*}
    \pi^*(a_i) = \frac{1}{A} \frac{\sqrt{(\eta - 1)^2C^2 + 2\eta C} + (\eta - 1)C}{2\eta C} \leq \frac{2}{A}
\end{align*}
which gives $C^{\pi^*} = O(C)$. Now we compute the $D^2_{\pi^*}$ of this instance. For all $i \in [A]$, we know that
\begin{align*}
    D^2(a_i) = \sup_{\|\btheta\|_{\infty} \leq 2} \frac{\la \btheta, \eb_i \ra^2}{\EE_{\piref}\la \btheta, \eb_j \ra^2} = 2CA = \Theta(Cd),
\end{align*}
where the second equation holds with $\btheta = \eb_i$. Taking expectation over $\pistar$, we have
\begin{align*}
    D^2_{\pistar} \geq \frac{1}{2A}\sum_{i \in [A]}D^2(a_i) = \Theta(C^{\pistar}d),
\end{align*}
which concludes the proof.
\end{proof}

The following proposition provides another instance on which $D^2_{\pistar} \ll C^{\pistar}$.

\begin{proposition}
For any $C \geq 2$ and $\eta \geq 2$, there exists a KL-regularized linear bandit instance, such that $C^{\pistar} = C/2$ and $D^2_{\pistar} = \Theta(1)$.
\end{proposition}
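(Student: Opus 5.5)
The plan is to build a two-armed instance where the feature difference between any two hypotheses is proportional \emph{in magnitude} to the same quantity on every arm. That makes $D^2_{\fcl}(\cdot;\piref)\equiv 1$ identically. We then tune $\piref$ so that the forward-KL optimal policy from \Cref{lem:fkl:opt} puts exactly $C/2$ times the reference mass on the rewarding arm.

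\emph{Instance.} Take $d=2$ and arms $a_1,a_2$ with features $\bphi(a_1)=(1,1)$ and $\bphi(a_2)=(1,-1)$. Let the function class be $\fcl=\{g_t(a)=\la(1/2,t/2),\bphi(a)\ra : t\in[-1,1]\}$, so every $g_t$ takes values in $[0,1]$. The ground truth is $t=1$, giving $r(a_1)=1$ and $r(a_2)=0$, so \Cref{assume:general-function-approx} holds. For any $g_t,g_{t'}\in\fcl$ we have $(g_t-g_{t'})(a)=\pm(t-t')/2$, hence $(g_t-g_{t'})^2(a)=(t-t')^2/4$ on both arms. Therefore the ratio in \Cref{def:bandit:D-sq} equals $1$ for every $(a,\pi)$ and every pair with $t\neq t'$. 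This gives $D^2_{\fcl}(a;\piref)=1$ for both arms and so $D^2_{\pistar}=1=\Theta(1)$, independently of $\piref$ and $\eta$.

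\emph{Tuning $\piref$.} Set $\piref(a_1)=p$ and $\piref(a_2)=1-p$, and write $q\coloneqq C/2\ge 1$. By \Cref{lem:fkl:opt}, $\pistar(a_1)=p/(\eta(\lambda-1))$ and $\pistar(a_2)=(1-p)/(\eta\lambda)$, with $\lambda>1$ fixed by normalization. Requiring $\pistar(a_1)/p=q$ forces $\lambda=1+1/(\eta q)$. Normalization then reads $pq+(1-p)/(\eta+1/q)=1$, which gives
\[
p=\frac{1-(\eta+1/q)^{-1}}{q-(\eta+1/q)^{-1}}.
\]
For $\eta\ge 2$ we have $(\eta+1/q)^{-1}<1$, and for $q>1$ the denominator exceeds the numerator, so $p\in(0,1)$ and \Cref{assump:full-supp} holds.

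It remains to check $\pistar(a_2)/\piref(a_2)=1/(\eta\lambda)<1\le q$, so the supremum in \Cref{def:density-ratio-concentrability} is attained at $a_1$ and $C^{\pistar}=q=C/2$. The edge case $C=2$ (i.e.\ $q=1$) needs a separate look, since then the required ratio is only $1$. I expect this boundary case, together with making sure the parametrization of $\fcl$ meets the paper's boundedness convention, to be the only delicate points; the rest is direct substitution.
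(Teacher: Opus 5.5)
Your construction is correct for $C>2$, and it takes a genuinely different route from the paper.

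\textbf{The paper's route.} The paper uses three arms with features $(1,0)$, $(0,1)$, $(1,1)$, a full-dimensional parameter ball, $\theta^*=(1,1)$ and $\piref(a_3)=1/C$. The rarely sampled arm $a_3$ is the best one, so $\pi^*(a_3)\ge(\eta-1)/\eta$ and $C^{\pi^*}\ge C/2$. Meanwhile $D^2(a_3)\le 2D^2(a_1)+2D^2(a_2)\le 16$, because $\bphi(a_3)=\bphi(a_1)+\bphi(a_2)$ and the two spanning arms each carry $\piref$-mass at least $1/4$.

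\textbf{Your route.} You restrict $\fcl$ to a segment on which every difference $g_t-g_{t'}$ has the same magnitude on both arms. This gives $D^2_{\fcl}(\cdot;\pi)\equiv 1$ for every $\pi$. You then invert the closed form of Lemma~\ref{lem:fkl:opt} to choose $\piref$.

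\textbf{What your route buys.} You get exact equalities, $C^{\pi^*}=C/2$ and $D^2_{\pi^*}=1$. The paper only shows $C/2\le C^{\pi^*}\le C$, so its ``$=$'' is really $\Theta(C)$. Your instance also respects the $[0,1]$ range of Assumption~\ref{assume:general-function-approx}, whereas the paper's reward on $a_3$ equals $2$.

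\textbf{What the paper's route buys.} It is a less degenerate illustration. Small $D^2$ there arises from feature geometry under a full-dimensional parameter set, which is the mechanism described before Definition~\ref{def:bandit:D-sq}. In yours it arises from an effectively one-parameter class, where a single observation on either arm identifies the parameter. That is admissible under the paper's definitions, but it is closer to a trivial case.

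\textbf{The case $C=2$.} This is the only open point; you flagged it but did not close it. With $t^*=1$ the construction genuinely fails there. Setting $q=1$ forces $p=1$, and under Assumption~\ref{assump:full-supp} any non-constant reward yields $\pi^*\neq\piref$, hence $C^{\pi^*}>1$.

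The fix is immediate. For $C=2$ take $t^*=0$, so $r\equiv 1/2$. Lemma~\ref{lem:fkl:opt} then gives $\lambda=1/2+1/\eta$ and $\pi^*=\piref$ for every full-support $\piref$, so $C^{\pi^*}=1=C/2$. The value $D^2_{\pi^*}=1$ is unaffected, since it does not depend on $t^*$.

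Your boundedness worry is already settled by $g_t\in[0,1]$.
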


\begin{proof}
We consider the function class of $\btheta \in \RR^2$ and $\|\btheta\| \leq \sqrt{2}$. The instance consists of three arms, where $\bphi(a_1) = (1,0)$, $\bphi(a_2) = (0,1)$, and $\bphi(a_3) = (1,1)$. The ground truth parameter $\btheta^* = (1, 1)$. The reference policy is given by $\piref(a_1) = \piref(a_2) = 1/2 - 1/2C$ and $\piref(a_3) = 1/C$, where $C \geq 2$. We further fix any $\eta \geq 2$. A direct computation yields that 
\begin{align*}
    \pi^*(a_3) = \frac{(\eta -1) + \sqrt{(\eta - 1)^2 + 4\eta C^{-1}}}{2\eta} \geq \frac{\eta-1}{\eta}, 
\end{align*}
which results in $C^{\pi^*} \geq C/2$. On the other hand, we know that for $i=1,2$, we have
$D^2(a_i) \leq \piref(a_i)^{-1} \leq 4$. As for $a_3$, since we have $\la \btheta, \phi(a_3) \ra^2 = \la \btheta, \phi(a_1) + \phi(a_2) \ra^2 \leq 2 \la \btheta, \phi(a_1)\ra^2 + 2\la \btheta, \phi(a_2)\ra$, which gives that $D^2(a_3) \leq 2D^2(a_1) + 2D^2(a_2) \leq 16$. Therefore, taking expectation over $\pi^*$, we know that $D^2_{\pistar} \leq 16$ which is a constant.
\end{proof}

\color{black}

\section{Missing Proof in \Cref{sec:prelim}}

\begin{proof}[Proof of \Cref{lem:fkl:opt}]
Consider the primal problem\footnote{We drop again the context $s \in \cS$ and its conditioning in this proof since the optimal solution for each context is fully decoupled.}
\begin{equation}
    \begin{aligned}
            \min_{\pi} &- \dotp{r}{\pi} + \eta^{-1}\kl{\piref}{\pi} \\
    \text{s.t. } &\dotp{\one}{\pi} = 1,   -\pi \leq \zero;
    \end{aligned}\tag{P}\label{eq:fkl:primal}
\end{equation}
whose Lagrangian with dual variables $(\nu, \lambda)$ is
\begin{align*}
    \cL(\pi, \nu, \lambda) \coloneqq - \dotp{r}{\pi} + \eta^{-1}\kl{\piref}{\pi} + \lambda (\dotp{\one}{\pi} - 1) - \dotp{\nu}{\pi}.
\end{align*}
Since the only inequality constraint in \Cref{eq:fkl:primal} is affine, Slater's condition is satisfied, which implies strong duality. Thus, $\pi$ solves \Cref{eq:fkl:primal} as long as $(\pi, \nu, \lambda)$ solves $\min_{\pi \in \RR^{\cA}}\max_{\nu \in \RR^{\cA}_+, \lambda \in \RR} \cL(\pi, \nu, \lambda)$, for the latter of which the KKT conditions are necessary:
\begin{align}
    \nabla_{\pi} \cL(\pi, \nu, \lambda) &= \zero, \label{eq:fkl:1st-cond} \\
    \pi &\geq \zero, \dotp{\one}{\pi} = 1, \label{eq:fkl:primal-feas} \\
    \nu &\geq \zero, \pi \odot\nu = \zero. \label{eq:fkl:dual-feas}
\end{align}
\Cref{eq:fkl:1st-cond} implies
\begin{align*}
    \forall a \in \pi(a) = \frac{\eta^{-1}\piref(a)}{\lambda - r(a) - \nu(a)}.
\end{align*}
Thus, if $a \in \supp(\piref)$, $\nu(a) = 0$ by \Cref{eq:fkl:dual-feas}; otherwise, $\pi(a) = \piref(a) = 0$. Thus, $\lambda$ solves
\begin{align}
    \sum_{a \in \cA}\frac{\eta^{-1}\piref(a)}{\lambda - r(a)} = 1. \label{eq:fkl:normalization}
\end{align}
Since $\lambda \mapsto {\eta^{-1}\piref(a)}/\big({\lambda - r(a)}\big)$ is monotone and continuous for $\lambda > \max_{a \in \mathrm{supp}(\piref)} r(a)$, whose range is $(0, +\infty)$; such a $\lambda$ solving \Cref{eq:fkl:normalization} is unique.
When $\mathrm{supp}(\piref) = \cA$, \Cref{eq:fkl:primal} is at least a strict convex minimization problem by \Cref{lem:fkl:sc}, and hence the solution is unique.
\end{proof}

\section{Missing Proofs in \Cref{sec:ub}}\label{app:ub-missing-proof}

\subsection{Proof of Lemma~\ref{lem:pessimistic-tabular}}

\begin{proof}[Proof of Lemma~\ref{lem:pessimistic-tabular}]
The argument follows standard concentration analyses in the literature; we include the details for completeness. We first show that the event $\confbandittb(\delta)$ holds with high probability. Fix any $(s,a) \in \cS \times \cA$. When $N(s,a)=0$, the claim is immediate. Otherwise, in the case $N(s,a)\ge 1$, we condition on the value of $N(s,a)$ and apply Azuma-Hoeffding's inequality (Lemma~\ref{lem:azuma-hoeffding}). This leads to that with probability at least $1-\delta/(2SA)$,
\begin{align*}
   \fgt(s,a) -  \frac{1}{N(s,a)} \sum_{i=1}^n r_i \ind \{(s_i, a_i) = (s,a)\} \leq \sqrt{\frac{2\log(2SA/\delta)}{N(s,a)}} \leq b(s,a).
\end{align*}
Taking a union bound over all $(s,a) \in \cS \times \cA$, we obtain that $\cE_1$ holds with probability at least $1 - \delta/2$. Next, we establish the second event. By Lemma~\ref{lem:binary-concentration}, for any fixed $(s,a)$, with probability at least $1-\delta/(2SA)$,
\begin{align*}
    \frac{1}{N \vee 1} \leq \frac{8\log(2SA/\delta)}{n \rho(s) \piref(a|s)}.
\end{align*}
Taking another union bound over all $(s,a)\in\cS\times\cA$ shows that the event $\confbanditbin$ holds with probability at least $1-\delta/2$.
Finally, applying a union bound over the events $\confbandittb$ and $\confbanditbin$ completes the proof.
\end{proof}


\subsection{Proof of Theorem~\ref{thm:tabular}}

On event $\confbandittb(\delta) \cap \confbanditbin(\delta)$, we know that $\fps(s,a) \leq \fgt(s,a)$ for all $(s, a) \in \cS \times \cA$. Therefore, invoking Lemma~\ref{lem:fkl:mab:perf-diff}, gives that
\begin{align*}
    \suboptfkl(\hat{\pi}) &\leq 2 \eta \EE_{s \sim \rho} \Bigg[ \sum_{a \in \cA } \bigg( \frac{\pistarfkl^2(a|s)}{\piref(a|s)}(\fgt(s,a) - \fps(s,a))^2 \bigg) \Bigg] \\
    & = 2 \eta \EE_{s \sim \rho} \Bigg[ \sum_{a \in \cA } \bigg[ \bigg(\frac{\pistarfkl(a|s)}{\piref(a|s)}\bigg)^2\piref(a|s)(\fgt(s,a) - \fps(s,a))^2 \bigg] \Bigg] \\
    & \leq 2 \eta (C^{\pi^*})^2\EE_{(s,a) \sim \rho\times \piref} \big[\big(\fls(s,a) - \fgt(s,a)\big)^2 \big] \\
    &\leq 2 \eta (C^{\pi^*})^2  \sum_{s \in \cS} \rho(s) \sum_{a \in \cA} \piref(a|s)\frac{32\log^2(2|\cS||\cA|/\delta)}{n\rho(s)\piref(a|s)} \\
    & = \tilde{O} \big(\eta (C^{\pi^*})^2 |\cS| |\cA| n^{-1} \big),
\end{align*}
where the second inequality holds by $\pistarfkl(a|s)/\piref(a|s) \leq C^{\pi*}$ and the last inequality holds due to event $\confbanditbin(\delta)$. By Lemma~\ref{lem:pessimistic-tabular}, we know that $\confbandittb(\delta) \cap \confbanditbin(\delta)$ holds with probability at least $1-\delta$, which finishes the proof.

\subsection{Proof of Theorem~\ref{thm:function-approximation}}

\begin{proof}[Proof of Theorem~\ref{thm:function-approximation}]
On event $\confbanditfa(\delta)$, we know that $\fps(s,a) \leq \fgt(s,a)$ for all $(s,a) \in \cS \times \cA$. Therefore, invoking Lemma~\ref{lem:fkl:mab:perf-diff}, gives that
\begin{align*}
    \suboptfkl(\hat{\pi}) &\leq 2 \eta \EE_{s \sim \rho} \Bigg[ \sum_{a \in \cA } \bigg( \frac{\pistarfkl^2(a|s)}{\piref(a|s)}(\fgt(s,a) - \fps(s,a))^2 \bigg) \Bigg] \\
    & = 2 \eta \EE_{s \sim \rho} \Bigg[ \sum_{a \in \cA } \bigg[ \bigg(\frac{\pistarfkl(a|s)}{\piref(a|s)}\bigg)\pistarfkl(a|s)(\fgt(s,a) - \fps(s,a))^2 \bigg] \Bigg] \\
    & \leq 2 \eta C^{\pi^*} \EE_{(s,a) \sim \rho\times \pistarfkl} \big[\big(\fls(s,a) - \fgt(s,a)\big)^2 \big] \\
    &\lesssim \eta C^{\pi^*} D_{\pi^*}^2 n^{-1} \log N_{\fcl}(\epsilon_c),
\end{align*}
where the second inequality holds by $\pistarfkl(a|s)/\piref(a|s) \leq C^{\pi^*}$ and the last inequality holds due to the definition of $D_{\pi^*}^2$. Finally, by Lemma~\ref{lem:pessimistic-function-approximation}, $\confbanditfa(\delta)$ holds with probability at least $1-\delta$, which finishes the proof.
\end{proof}

\color{black}



\section{Missing Proofs in \Cref{sec:lb}}\label{app:lb-proofs}

\subsection{Proof of \Cref{thm:fkl:mab:lb-ref=unif-A-arms:fast-rate:contextual}}
We first present (for each given context) a suboptimality decomposition lemma applicable for general learners.\footnote{We drop the dependency on context in \Cref{lem:fkl:learner-agnostic-perf-diff} as the suboptimality gaps on each $s \in \cS$ are purely additive.}

\begin{lemma}\label{lem:fkl:learner-agnostic-perf-diff}
    Let $\hat{\pi} \in \Delta(\cA)$ be \emph{any} distribution with $\supp(\hat\pi) = \cA$, then under \Cref{assump:full-supp},
\begin{align}
    \suboptfkl(\hat\pi) = \eta^{-1} \EE_{a \sim \piref}\bigg[ \psi\Big( \frac{\hat{\pi}(a)} {\pistarfkl(a)} \Big) \bigg],
\end{align}
where $\psi(x) \coloneqq x - 1 - \log x$.
\end{lemma}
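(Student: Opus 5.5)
We prove the identity by direct computation from the closed form of $\pistarfkl$ in \Cref{lem:fkl:opt}, with $\lambda$ its normalization constant and $r(a) = \lambda - \eta^{-1}\piref(a)/\pistarfkl(a)$. We drop the context throughout. Equivalently, we could reuse the Bregman identity $\suboptfkl(\hat\pi) = \breg{h}{\hat\pi}{\pistar}$ from \Cref{eq:lem:fkl:value-based-perf-diff}. That identity, however, was derived for $\hat\pi$ being a regularized maximizer. For an arbitrary full-support $\hat\pi$ the cleanest route is to expand both objectives explicitly.

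The first step writes
\begin{align*}
\suboptfkl(\hat\pi) = \dotp{r}{\pistar - \hat\pi} + \eta^{-1}\Big(\kl{\piref}{\hat\pi} - \kl{\piref}{\pistar}\Big).
\end{align*}
The KL difference equals $\eta^{-1}\sum_a \piref(a)\log(\pistar(a)/\hat\pi(a)) = -\eta^{-1}\EE_{\piref}[\log x_a]$, where $x_a \coloneqq \hat\pi(a)/\pistar(a)$. This is well defined because both $\hat\pi$ and $\pistar$ have full support, by assumption and by \Cref{assump:full-supp} respectively.

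The second step substitutes $r(a) = \lambda - \eta^{-1}\piref(a)/\pistar(a)$ into the linear term. Since $\pistar$ and $\hat\pi$ are both probability vectors, $\sum_a(\pistar - \hat\pi)(a) = 0$, so the constant $\lambda$ cancels. What remains is
\begin{align*}
-\eta^{-1}\sum_a \frac{\piref(a)}{\pistar(a)}\big(\pistar(a) - \hat\pi(a)\big) = \eta^{-1}\EE_{\piref}[x_a - 1].
\end{align*}
Adding the two terms gives $\eta^{-1}\EE_{\piref}[x_a - 1 - \log x_a] = \eta^{-1}\EE_{\piref}[\psi(x_a)]$, which is the claimed identity.

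The only point needing care is that the closed form, and hence the substitution for $r$, holds on all of $\cA$. This follows from \Cref{assump:full-supp}: every $a$ lies in $\supp(\piref)$, so the KKT multiplier $\nu(a)=0$ for every $a$. Beyond that the argument is routine algebra, with no real obstacle.
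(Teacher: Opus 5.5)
Your proposal is correct and follows the same route as the paper. You substitute $r(a) = \lambda - \eta^{-1}\piref(a)/\pistarfkl(a)$ from \Cref{lem:fkl:opt} into the expanded suboptimality, cancel $\lambda$ because $\pistarfkl - \hat\pi$ sums to zero, and add the resulting $\eta^{-1}\EE_{\piref}[x_a - 1]$ to the log-ratio term from the KL difference, exactly as the paper does (and your note on the Bregman identity's scope matches the paper's accompanying remark).
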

\begin{proof}[Proof of \Cref{lem:fkl:learner-agnostic-perf-diff}]
\Cref{lem:fkl:opt} implies that $\forall a \in \cA$,
\begin{align*}
    r(a) = \lambda - \eta^{-1} \frac{\piref(a)}{\pistar(a)},
\end{align*}
which, substituted into $\subopt(\cdot)$, yields
\begin{align*}
    \mathrm{LHS} &\coloneqq \obj(\pistar;r) - \obj(\hat{\pi}; r) \\
    &= \dotp{r}{\pistar - \hat\pi} + \eta^{-1} \sum_{a \in \cA} \piref(a) \log\frac{\piref(a)}{\hat{\pi}(a)} - \eta^{-1}\sum_{a\in\cA} \piref(a) \log \frac{\piref(a)}{\pistar(a)} \\
    &= \dotp{\lambda \one - \eta^{-1} \frac{\piref}{\pistar}}{\pistar - \hat\pi} - \eta^{-1} \sum_{a \in \cA} \piref(a) \log \frac{\hat{\pi}(a)}{\pistar(a)} \\
    &= -\eta^{-1} \sum_{a \in \cA} \piref(a) \Big( 1 - \frac{\hat\pi(a)}{\pistar(a)} \Big) - \eta^{-1} \sum_{a \in \cA} \piref(a) \log \frac{\hat{\pi}(a)}{\pistar(a)} \\
    &= \eta^{-1} \EE_{a \sim \piref} \bigg[ \frac{\hat\pi(a)}{\pistar(a)} - 1 - \log \frac{\hat\pi(a)}{\pistar(a)} \bigg],
\end{align*}
where the penultimate equality is because both $\pistar$ and $\hat{\pi}$ are probability distributions.
\end{proof}

\begin{remark}
   Although the \emph{pattern} of \Cref{lem:fkl:learner-agnostic-perf-diff} fits into \Cref{eq:lem:fkl:value-based-perf-diff} exactly, their applicable scopes are different because the first equality of \Cref{eq:lem:fkl:value-based-perf-diff} is a specialization of \Cref{lem:simplex-conjugate-bregman} to forward KL, which only needs mild regularity conditions of the regularizer $h$ but requires $\hat{\pi}$ to be the maximizer of $\objfkl(\cdot; \hat{r})$. On the other hand, \Cref{lem:fkl:learner-agnostic-perf-diff} is tailored to forward KL but $\hat{\pi}$ here can be essentially an arbitrary probability distribution.
\end{remark}
Now we are ready to prove \Cref{thm:fkl:mab:lb-ref=unif-A-arms:fast-rate:contextual} as follows.

\begin{proof}[Proof of \Cref{thm:fkl:mab:lb-ref=unif-A-arms:fast-rate:contextual}]
Let $K \coloneqq A/2$. We consider MABs with context set $\cS = \seq{S}$ and action set $\cA = \overline{[A]}$. We fix $\rho = \mathsf{Uniform}(\cS)$ $\piref = \mathsf{Uniform}(\cA)$, where every reward follows a Bernoulli distribution. We consider the subset of instances with the following parameterization. Let $\cV = \{\pm 1\}^{S\times K}$ and we use $v_{s,k}$ to denote its $(s,k)$-th entry given any $v \in \cV$. The reward distribution given $v \in \cV$ is given as follows:
\begin{align*}
    r_v(s, 2k) = 0.5 - v_k c,\ r_v(s, 2k+1) = 0.5 + v_k c; \quad \forall \ (s,k) \in \seq{S} \times \seq{K},
\end{align*}
where $c \in (0, 0.25)$ is to be specified.

We now compute the optimal policy $\pi_v$ for each instance parameterized by $v \in \cV$. Fix any $s$, we know by \Cref{lem:fkl:opt} that $\pistar_v(\cdot|s) = (\eta A)^{-1}/(\lambda - r_v(s, \cdot))$; and thus all instances share the same $\lambda$ over all the contexts $s \in \cS$ and $\lambda$ is determined by 
\begin{align*}
    \sum_{k=0}^{K-1} \Big( \frac{(\eta A)^{-1}}{\lambda - 0.5-c} + \frac{(\eta A)^{-1}}{\lambda - 0.5 + c} \Big) = 1 \Longrightarrow \lambda - 0.5 = \frac{1 + \sqrt{1 + 4\eta^2c^2}}{2\eta},
\end{align*}
which is because $\lambda > \max_{v,a} r_v(a)$. Let $\iota = \lambda - 0.5$, then $\iota = \eta(\iota^2 - c^2)$, and thus we set $\kappa_{v,s,a} \coloneqq (-1)^{a - 2\floor{a/2}} \cdot v_{s,\floor{a/2}}$ and $\omega \coloneqq c / \iota = 2\eta c / \big( 1 + \sqrt{1 + 4\eta^2c^2}\big)$ to obtain
\begin{align*}
    \pistar_v(a|s) &= \frac{1}{\eta A} \cdot \frac{1}{\iota + (-1)^{a - 2\floor{a/2}} \cdot v_{s, \floor{a/2}} \cdot c} \\
    & = \frac{\iota - \kappa_{v,s,a} c}{\eta A (\iota^2 - c^2)} \\
    & = A^{-1}(1 - \kappa_{v,s,a} c/\iota) \\
    & = \frac{1 - \kappa_{v,s,a} \omega}{A}.
\end{align*}
Note that \Cref{lem:fkl:learner-agnostic-perf-diff} implies
\begin{align*}
    \subopt(\hat{\pi}; r_v) &=  \sum_{s=0}^{S-1}\sum_{k=0}^{K-1} \frac{1}{\eta SA}\bigg[ \psi\Big( \frac{\hat{\pi}(2k |s)}{\pistar_v(2k|s)} \Big) + \psi\Big( \frac{\hat{\pi}(2k+1|s)}{\pistar_v(2k+1|s)} \Big) \bigg] \\
    &= \sum_{s=0}^{S-1}\sum_{k=0}^{K-1}\underbrace{ \frac{1}{\eta SA} \bigg[ \psi\Big( \frac{A\hat{\pi}(2k|s)}{1 - v_k \omega} \Big) + \psi\Big( \frac{A\hat{\pi}(2k+1|s)}{1 + v_k \omega} \Big) \bigg] }_{\eqqcolon \Psi_{s,k}(\hat{\pi};v)}.
\end{align*}
Then for $v \sim_{s,j} v'$ (i.e., $v, v' \in \{\pm 1\}^{S \times K}$ differ only in the $(s,j)$-th coordinate),
\begin{align}
     & \Psi_{s,j}(\hat{\pi};v) +  \Psi_{s,j}(\hat{\pi};v') \notag\\
     & \quad = \frac{1}{\eta SA}\bigg( \psi\Big( \frac{A \hat\pi(2j|s)}{1 - \omega}\Big) + \psi\Big( \frac{A \hat\pi(2j|s)}{1 + \omega}\Big) +  \psi\Big( \frac{A \hat\pi(2j+1|s)}{1 - \omega}\Big) + \psi\Big( \frac{A \hat\pi(2j+1|s)}{1 + \omega}\Big) \bigg). \label{eq:fkl:fast-rate-A-arms:separation-1}
\end{align}
Now we specify $c = \sqrt{SA/n}$, which is less than $0.25$ by design. Recall that $\psi(x) = x -1-\log x$, elementary calculus shows that $o \mapsto \psi\big( o/(1+\omega) \big) + \psi\big(o/ (1-\omega) \big)$ is minimized at $o = 1 - \omega^2$, and thus
\begin{align}
    \Cref{eq:fkl:fast-rate-A-arms:separation-1} &\geq  \frac{2}{\eta SA} \log \frac{1}{1 - \omega^2} \notag\\
    &= \frac{2}{\eta SA} \cdot \log \frac{1 + \sqrt{1 + 4\eta^2 c^2}}{2} \notag\\
    & \geq  \frac{2}{\eta SA} \cdot \frac{\sqrt{1 + 4\eta^2c^2} - 1}{4} \notag \\
    & = \frac{\sqrt{1 + 4\eta^2c^2} - 1}{2\eta SA} \notag\\
    & \geq \frac{2\eta c^2}{SA(1+\sqrt{17})} \notag \\
    & = \Omega(\eta n^{-1}), \label{eq:fkl:fast-rate-cb:separation-2}
\end{align}
where the second inequality is because $\eta^{-1} > c/2$ and $\log(1+x) \geq x/2$ for $x\in[0,2]$, and the last inequality is again due to $\eta^{-1} > c/2$.

Now, given any reward function $r$, let $\PP_{\piref, \mathtt{r}}$ be the distribution of $(a, \mathtt{r})$ for $(s,a) \sim \rho \times \piref$ and $\mathtt{r} \sim \mathsf{Bernoulli}(r(a))$. We further use $\PP_{\piref, v}$ for $v\in \cV$ to denote $\PP_{\piref, r}$ if $r$ is parameterized by $v\in \cV$. The divergence decomposition lemma~\citep[Lemma~15.1]{lattimore2020bandit} then implies that for any $(s, j) \in \seq{S} \times \seq{K}$ and $v \sim_{s,j} v'$,
\begin{align}
    \kl{\PP_{\piref, v}^{\otimes n}}{\PP_{\piref, v'}^{\otimes n}} &= \frac{n}{SA} \kl{\mathrm{Bern}(r_v(2j)}{\mathrm{Bern}(r_{v'}(2j)} \\
    & \quad + \frac{n}{SA} \kl{\mathrm{Bern}(r_v(2j+1)}{\mathrm{Bern}(r_{v'}(2j+1)} \notag\\
    &\leq  \frac{128}{3}\frac{nc^2}{SA} = \frac{128}{3}, \label{eq:fkl:fast-rate-cb:kl}
\end{align}
where we utilize $\kl{\mathrm{Bern}(p)}{\mathrm{Bern}(q)} \leq (p-q)^2/[q(1-q)]$ and $c \in [0, 0.25]$. By \Cref{lem:assouad}, \Cref{eq:fkl:fast-rate-cb:separation-2,eq:fkl:fast-rate-cb:kl} together imply the desired lower bound.
\end{proof}

\subsection{Lower Bound for Two-Armed Bandits: Phase Transition}\label{subsec:fkl:two-arm}

We provide here the proof of a hardness result for offline bandits with two arms under forward KL regularization, which demonstrates the phase transition (as $\eta$ increases) from a linear-in-$\eta$ fast rate to an $\eta$-free slow rate, as sketched in \Cref{sec:lb}.

\begin{theorem}\label{thm:fkl:mab:lb-ref=unif}
Let $\hat\pi \in \Delta(\cA)$ be any estimator from $n$ action-reward pairs, then for any $n > 16$ and $\eta > 0$,
\begin{align}
\sup_{(\cA, r, \piref) \in \mathrm{MAB}_2} \EE_{\cD \sim P_{\piref, r}} \suboptfkl(\hat\pi; \cA, r, \piref) \gtrsim  \eta n^{-1} \wedge n^{-1/2}  . \label{eq:fkl:lb}
\end{align}
\end{theorem}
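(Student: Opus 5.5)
We reduce to a two-point (Le Cam) argument on two-armed bandits with $\piref = \mathsf{Unif}(\{0,1\})$ and Bernoulli rewards, reusing the machinery from the proof of \Cref{thm:fkl:mab:lb-ref=unif-A-arms:fast-rate:contextual} with $S=1$, $A=2$. The instances are $r_{\pm}(0) = 0.5 \mp c$ and $r_{\pm}(1) = 0.5 \pm c$ for a gap $c\in(0,0.25)$ to be tuned. As computed there, both instances share the same normalizer $\iota = (1+\sqrt{1+4\eta^2c^2})/(2\eta)$. With $\omega = c/\iota$, the optimal policies are $\pistar_{\pm}(0) = (1\pm\omega)/2$ and $\pistar_{\pm}(1) = (1\mp\omega)/2$.

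By \Cref{lem:fkl:learner-agnostic-perf-diff}, each suboptimality equals $\eta^{-1}\EE_{a\sim\piref}\psi(\hat\pi(a)/\pistar_\pm(a))$. Exactly as in \Cref{eq:fkl:fast-rate-A-arms:separation-1}, the sum of the two suboptimalities is bounded below, uniformly in $\hat\pi$, by minimizing $o\mapsto\psi(o/(1+\omega))+\psi(o/(1-\omega))$ at each arm. This gives
\begin{align*}
\suboptfkl(\hat\pi;r_+)+\suboptfkl(\hat\pi;r_-) \ge \eta^{-1}\log\frac{1}{1-\omega^2} = \eta^{-1}\log\frac{1+\sqrt{1+4\eta^2c^2}}{2} \eqqcolon \Delta(c).
\end{align*}
Since both summands are nonnegative, Le Cam's method (or \Cref{lem:assouad} with a single coordinate) gives
\begin{align*}
\max_\pm \EE\,\suboptfkl \gtrsim \Delta(c)\,\big(1-\tv{\PP_+^{\otimes n}}{\PP_-^{\otimes n}}\big).
\end{align*}
The KL bound \Cref{eq:fkl:fast-rate-cb:kl} gives $\kl{\PP_+^{\otimes n}}{\PP_-^{\otimes n}}\lesssim nc^2$. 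Choosing $c = 1/\sqrt{n}$ (which is $<0.25$ because $n>16$) keeps the TV bounded away from $1$ via Pinsker or Bretagnolle--Huber. The lower bound therefore reduces to evaluating $\Delta(n^{-1/2})$.

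The key step, and the only real difference from the fast-rate proof, is lower bounding $\Delta(c)$ in both regimes rather than only when $\eta c\lesssim 1$. Set $x=2\eta c$.
\begin{itemize}
\item \textbf{Case $x\le 1$.} Use $\log\frac{1+\sqrt{1+x^2}}{2}\ge(\sqrt{1+x^2}-1)/4\gtrsim x^2$, giving $\Delta\gtrsim \eta c^2=\eta/n$.
\item \textbf{Case $x\ge1$.} Use $\log\frac{1+\sqrt{1+x^2}}{2}\ge \log\frac{1+x}{2}$. Here I would avoid the crude $\log$ growth bound and instead use $\log\frac{1+\sqrt{1+x^2}}{2} \gtrsim \min(x^2,x)$; this holds since the function is increasing, comparable to $x^2$ near $0$, and exceeds $c_0 x$ on $[1, X]$ for bounded $X$.
\end{itemize}
The main obstacle is that for very large $x$ the function grows only like $\log x$, so a linear-in-$x$ lower bound fails. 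I would handle this by not fixing $c=n^{-1/2}$ when $\eta \ge \sqrt n$. Instead I take $c = \min(n^{-1/2}, 1/(2\eta))\vee$ chosen so that $x = 2\eta c \asymp 1$ whenever possible. Then $\Delta \gtrsim \eta^{-1}\cdot 1$, which is too small, so the choice must instead be made on $c$ directly.

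I therefore reconsider and keep $c=n^{-1/2}$ always. In the regime $\eta\ge\sqrt n$, I lower bound via the suboptimality in terms of $\omega$: for $x\ge1$, $\omega = x/(1+\sqrt{1+x^2})\ge 1/(1+\sqrt2)$ is bounded below. I then redo the minimization not with the loose $\log$ identity but directly: the separation is $\eta^{-1}\log\frac{1}{1-\omega^2}$, and with $\omega\to 1$ this behaves like $\eta^{-1}\log(\eta c)$. This alone is insufficient, so the plan is to additionally lower bound the \emph{unregularized} part. Since $\objfkl$ contains $\dotp{r}{\pi}$, a policy that is close to $\pistar_+$ is far from $\pistar_-$ in total variation. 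Using the identity $\suboptfkl(\hat\pi)=\dotp{r-\hat r}{\cdot}$-type decomposition is not available for arbitrary $\hat\pi$, so I would instead directly use $\psi(t)\ge (t-1)^2/(2\max(t,1))$ to get $\suboptfkl\gtrsim \eta^{-1}\EE_{\piref}(\hat\pi/\pistar-1)^2/\max$. With $\pistar_\pm(1)$ near $0$ or $1$ and $\pistar_\pm(1)\approx 1/(\eta c)$ on the bad arm, this yields a separation of order $c$. Combined with the KL bound, this gives $n^{-1/2}$.

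Taking the minimum over the two regimes yields \Cref{eq:fkl:lb}. The delicate step I expect to spend most effort on is this large-$\eta$ separation computation, i.e.\ showing the summed $\psi$-loss is $\gtrsim c$ when $\eta c\gg1$ by exploiting the tiny mass $\approx 1/(2\eta\cdot 2c)$ that $\pistar$ places on the worse arm.
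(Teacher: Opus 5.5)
Your final plan works, but it is a different and more piecemeal route than the paper's, and the decisive large-$\eta$ step is only asserted. The per-arm minimization you import from the $A$-arm proof drops the constraint $\hat\pi(0)+\hat\pi(1)=1$: the minimizer $o=1-\omega^2$ has total mass $1-\omega^2<1$. That is exactly why $\Delta(c)=\eta^{-1}\log\frac{1}{1-\omega^2}\approx\eta^{-1}\log(\eta c)$ degenerates for $\eta c\gg 1$, and why the $A$-arm theorem needs $n>4\eta^2SA$.

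The paper never relaxes. Because $\acute r+\grave r=\mathbf{1}$, the summed suboptimality equals $2\max_\pi\objfkl(\pi;\acute r)-1+2\eta^{-1}\kl{\piref}{\hat\pi}$ for every $\hat\pi$. Its minimum, attained at $\hat\pi=\piref$, is the closed form $\frac{4c^2}{2\lambda-1}-\eta^{-1}\log\frac{\eta(2\lambda-1)}{2}$. With the single choice $c=n^{-1/2}$, two elementary estimates show this is $\gtrsim c$ when $\eta^{-1}\le c/2$ and $\gtrsim\eta c^2$ otherwise.

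Your route keeps the learner-agnostic $\psi$-decomposition and adds a separate argument for the slow-rate regime. What it buys is a transparent mechanism: the optimal policy puts only $\Theta(1/(\eta c))$ mass on the bad arm, so any estimator pays $\Theta(c)$ on one of the two instances. The paper's route buys an exact one-shot minimization.

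To turn your sketch into a proof, make the pigeonhole step explicit. For any $\hat\pi$, pick $a^\dagger$ with $\hat\pi(a^\dagger)\ge 1/2$, and the instance $v$ in which $a^\dagger$ is the bad arm, so $\pistar_v(a^\dagger)=(1-\omega)/2$. For $x=2\eta c\ge 1$ you have $1-\omega=\frac{1+(\sqrt{1+x^2}-x)}{1+\sqrt{1+x^2}}\le\frac{1+1/(2x)}{1+x}\le\frac{3}{2(1+x)}$. Hence $t\coloneqq\hat\pi(a^\dagger)/\pistar_v(a^\dagger)\ge\frac{2(1+x)}{3}\ge\frac{4}{3}$. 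Then $\psi(t)\ge\frac{(t-1)^2}{2t}\ge\frac{t}{32}$, and so $\suboptfkl(\hat\pi;r_v)\ge\frac{\psi(t)}{2\eta}\ge\frac{c}{48}$. The summed loss is therefore $\ge c/48$ uniformly in $\hat\pi$ for all $x\ge 1$ at once, so your intermediate regime $x\in[1,X]$ is unnecessary.

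Also clean up the following in the write-up:
\begin{itemize}
\item Delete the claim $\log\frac{1+\sqrt{1+x^2}}{2}\gtrsim\min(x^2,x)$, which is false for large $x$.
\item Delete the abandoned re-tuning of $c$.
\item Fix the sign: $r_+(0)=0.5-c$ gives $\pistar_+(0)=(1-\omega)/2$, not $(1+\omega)/2$.
\item Use Bretagnolle--Huber or the $\exp(-\mathrm{KL})$ form of Le Cam rather than Pinsker. At $c=n^{-1/2}$ the KL bound is a constant like $16$, too large for Pinsker to keep the total variation below $1$.
\end{itemize}
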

\begin{proof}
    We construct two MABs sharing $\cA = \{0, 1\}$ and $\piref = \mathsf{Uniform}(\cA)$,\footnote{In this proof, we drop notations that are clear in the context to make the presentation less heavy.} where the rewards follow Bernoulli distribution with means specified as
\begin{align}
    \acute{r} = (0.5+c, 0.5 - c), \grave{r} = (0.5 - c, 0.5 + c).
\end{align}
Here, $c \in (0, 0.25)$ will be specified later.
By \Cref{lem:fkl:opt}, $\pistar(\cdot) =  ({2\eta})^{-1} \big({\lambda - r(\cdot)}\big)^{-1}$,
which means the two instances $\acute{r}$ and $\grave{r}$ share the same $\lambda$, i.e., the $\lambda$ in both cases satisfies
\begin{align}
    (\lambda - 0.5  - c)(\lambda - 0.5 + c) = \frac{2\lambda - 1}{2\eta}. \label{eq:fkl:lambda}
\end{align}
To get an eligible distribution $\pistar$, we require $\lambda >  0.5 + c$ according to \Cref{eq:fkl:sol}, which means \Cref{eq:fkl:lambda} yields
\begin{align}
    2\lambda - 1 = \eta^{-1} + \sqrt{\eta^{-2} + 4c^2}. \label{eq:fkl:2lam-1}
\end{align}
Thus, for any estimator $\hat\pi \in \Delta(\{0, 1\})$,
\begin{align}
    \subopt(\hat\pi; \acute{r}) + \subopt(\hat\pi; \grave{r}) &= \dotp{\acute{r}}{\acute{\pistar}} +   \dotp{\grave{r}}{\grave{\pistar}} - \overbrace{\dotp{\acute{r} + \grave{r}}{\hat\pi}}^{=1} + \overbrace{2\eta^{-1}\kl{\piref}{\hat\pi}}^{\geq0} \notag\\
    &\quad\ -\eta^{-1}\kl{\piref}{\acute{\pistar}} - \eta^{-1}\kl{\piref}{\grave{\pistar}} \notag\\
    &\geq 2\dotp{\acute{r}}{\acute{\pistar}}- 1 - 2\eta^{-1}\kl{\piref}{\acute{\pistar}} \notag\\
    &= 2\Big( {(0.5 + c)}\acute{\pistar}(0) + {(0.5 - c)}\acute{\pistar}(1)  \Big) - 1 \notag\\
    &\quad\ -\eta^{-1}\Big( \log\eta(\lambda - 0.5 - c) + \log\eta(\lambda - 0.5 + c) \Big) \notag\\
    &= \frac{c\eta^{-1}}{\eta(\lambda - 0.5 - c)} - \frac{c\eta^{-1}}{\eta(\lambda - 0.5 + c)}  \notag\\
    &\quad\ - \eta^{-1}\Big( \log\eta(\lambda - 0.5 - c) + \log\eta(\lambda - 0.5 + c) \Big) \notag\\
    &= \frac{4c^2}{2\lambda - 1} - \eta^{-1}\log\frac{\eta(2\lambda - 1)}{2}  \label{eq:fkl:pre-lb-shared} \\
    &= \frac{4c^2}{\eta^{-1} + \sqrt{\eta^{-2} + 4c^2}} - \eta^{-1}\log\Big( 1 + \frac{\sqrt{1+4c^2\eta^2} - 1}{2} \Big). \label{eq:fkl:lb-shared}
\end{align}
where \Cref{eq:fkl:pre-lb-shared} is due to \Cref{eq:fkl:lambda} and \Cref{eq:fkl:lb-shared} follows from \Cref{eq:fkl:2lam-1}.
Given any mean function $r$, let $P_{\piref, r}$ be the distribution of $(a, \mathtt{r})$ for $a \sim \piref$ and $\mathtt{r} \sim \mathsf{Bernoulli}(r(a))$. The tensorization property of $\mathsf{KL}$ \citep[Chapter~2]{polyanskiy2025information} then implies
\begin{align}
    \kl{P_{\piref, \acute r}^{\otimes n}}{P_{\piref, \grave r}^{\otimes n}} = n \kl{P_{\piref, \acute r} }{P_{\piref, \grave r} } = 2nc \log\frac{1+2c}{1-2c} \leq 16nc^2, \label{eq:fkl:calc}
\end{align}
where the inequality follows from the requirement $c \in (0, 0.25)$. Since $n > 16$. we can set $c = n^{-1/2}$.

\noindent\textbf{Case $\eta^{-1} \leq c  / 2 = 0.5n^{-1/2}$.} Using standard $\sqrt{x+y} \leq \sqrt{x} + \sqrt{y}$ and $\log(1+x) \leq x$ for $x,y\geq0$,
\begin{align*}
    \Cref{eq:fkl:lb-shared} &\geq \frac{4c^2}{2\eta^{-1} + 2c} - \eta^{-1}\log(1 + c\eta) \geq \frac{2c^2}{\eta^{-1} + c} - c \geq {c}/{3},
\end{align*}
which, combined with \Cref{lem:two-point,eq:fkl:calc}, yields
\begin{align*}
    \text{LHS of \Cref{eq:fkl:lb}} \gtrsim n^{-1/2}.
\end{align*}
\noindent\textbf{Case $\eta^{-1} > c  / 2 = 0.5n^{-1/2}$.} Using $1 =\sup_{x > 0} x^{-1}\log(1+x)$,
\begin{align*}
    \Cref{eq:fkl:lb-shared} &= \frac{2}{1 + \sqrt{1 + 4c^2\eta^2}}\bigg(2\eta c^2 - \eta c^2 \cdot \frac{\log\Big( 1 + \frac{\sqrt{1+4c^2\eta^2} - 1}{2} \Big)}{\frac{\sqrt{1+4c^2\eta^2} - 1}{2}} \bigg) \notag \\
    &\geq \frac{2}{1 + \sqrt{17}} (2\eta c^2 - \eta c^2) \geq \eta c^2/3,
\end{align*}
which, combined with \Cref{lem:two-point,eq:fkl:calc}, yields
\begin{align*}
    \text{LHS of \Cref{eq:fkl:lb}} \gtrsim \eta n^{-1}.
\end{align*}
\end{proof}

\section{Missing Proofs of \Cref{sec:lim}}\label{app:other-missing}

\subsection{Proof of \Cref{lem:fkl:sc}}

\begin{proof}[Proof of \Cref{lem:fkl:sc}]
    Pick an arbitrary $\pi \in \ri\big(\Delta(\cA)\big)$, let $F(\cdot) = \kl{\piref}{\cdot}$. Then
\begin{align}
    \nabla F(\pi) = -\frac{\piref}{\pi} \Rightarrow \nabla^2 F(\pi) = \diag\Big\{\frac{\piref}{\pi^2}\Big\}. \label{eq:hessian-forward-kl}
\end{align}
Here, both divisions are element-wise. $\forall \mu, \nu \in \Delta(\cA)$,
\begin{align*}
    (\mu - \nu)^\top  \nabla^2 F(\pi)   (\mu - \nu)  &= \sum_{a\in\cA} \frac{\piref(a)\big(\mu(a) - \nu(a)\big)^2}{\big(\pi(a)\big)^2} \\
    &\geq \alpha \sum_{a\in\cA} \frac{\big(\mu(a) - \nu(a)\big)^2}{\big(\pi(a)\big)^2}  \\
    & \ge \alpha  \frac{ \Big(\sum_{a\in\cA}{|\mu(a) - \nu(a)|}\Big)^2}{ \sum_{a\in\cA}\big(\pi(a)\big)^2} \\
    & \geq 4\alpha \cdot \big(\tv{\mu}{\nu}\big)^2,
\end{align*}
where the second inequality follows from Cauchy–Schwarz.\footnote{
Technically, $\dim \Delta(\cA) = A-1 < A$; and thus by $\nabla F(\pi)$ (resp. $\nabla^2 F(\pi)$) for $\pi \in \ri(\Delta(\cA))$ in this proof, we actually mean the Euclidean gradient (resp. Hessian) of the analytical continuation of $F$ defined as
$F_\mathrm{ext}(\mu) \coloneqq \dotp{\piref}{\log \frac{\piref}{\mu}}$ for $\mu \in \RR_{++}^{\cA}$ and $+\infty$ otherwise. A similar convention follows in the proof of \Cref{lem:fkl:opt}.
} Here, we recall that $\mathsf{TV}$ is equivalent to the $L^1$ distance up to a multiplicative constant $2$, and thus the conclusion follows.
\end{proof}

\subsection{Proof of \Cref{lem:fkl:mab:perf-diff-fail}}

\begin{proof}[Proof of \Cref{lem:fkl:mab:perf-diff-fail}]
For $u \in [0, 1]$, $r_u \coloneqq (1-u)r + u\hat{r}$, and $\pi_u = \argmax_{\pi \in \Delta(\cA)} \objfkl(\pi; r_{u})$, the latter of which exists and is unique by \Cref{lem:fkl:opt}. Let
\begin{align*}
    V(u) \coloneqq \objfkl(\pi_u; r_u), \quad \Phi(u) \coloneqq \objfkl(\pi_u; r).
\end{align*}
\Cref{lem:fkl:opt} and the definition of $\objfkl$ guarantee that both $V(\cdot)$ and $u \mapsto \objfkl(\pi, r_u)$ is differentiable given any $\pi \in \Delta(\cA)$. Therefore, by \Cref{lem:envelope}, $\forall u \in (0, 1)$,
\begin{align*}
    V'(u) = \Big\langle \frac{\ud}{\ud u}r_u, \pi_u \Big\rangle = \dotp{\hat{r} - r}{ \pi_u},
\end{align*}
which, substituted to $\Phi(u) = V(u) + \dotp{r - r_u}{\pi_u} = V(u) + u\dotp{\hat{r}-  r}{\pi_u}$, yields
\begin{align}
  \forall u \in (0,1),  \Phi'(u) &= V'(u) + \frac{\ud}{\ud u}\Big( u\dotp{r- \hat{r}}{\pi_u} \Big) \notag\\
  &= \dotp{\hat{r} - r}{ \pi_u} + \dotp{r- \hat{r}}{\pi_u} + u \Big\langle{r- \hat{r}}, {\frac{\ud}{\ud u}\pi_u} \Big\rangle = u \Big\langle {r- \hat{r}}, {\frac{\ud}{\ud u}\pi_u} \Big\rangle.
\end{align}
By definition,
\begin{align}
    \mathrm{LHS\ of\ \Cref{eq:lem-perf-diff-full-supp}} &= \Phi(0) - \Phi(1) = - \int_{0}^{1} \Phi'(u) \ud u = \int_{0}^{1} u \Big\langle {\hat{r}- {r}}, {\frac{\ud}{\ud u}\pi_u} \Big\rangle \ud u. \label{eq:perf-diff-full-supp-integral-rep}
\end{align}
Let $h(\cdot) \coloneqq \eta^{-1}\kl{\piref}{\cdot}$. Since $h$ is strictly convex and continuously differentiable on $\ri(\Delta(\cA))$, $\mathrm{dom}(h) \coloneqq \{ \pi \in \Delta(\cA): h(\pi) < \infty\} = \ri(\Delta(\cA))$, and the uniqueness of $\pi_u$ is guaranteed by \Cref{lem:fkl:opt}, applying \Cref{lem:simplex-conjugate-duality} to $h$ yields that $\forall u \in (0,1)$, $\pi_u = \nabla h^*(r_u)$, where the notation $h^*$ is consistent with \Cref{lem:simplex-conjugate-duality}. Consequently,
\begin{align}
    {\frac{\ud}{\ud u}\pi_u} =  {\frac{\ud}{\ud u}} \nabla h^*(r_u) = \big[ \nabla^2 h^*(r_u) \big] \cdot \frac{\ud}{\ud u}r_u = \big[ \nabla^2 h^*(r_u) \big](\hat{r} - r), \label{eq:pi-u-derivative}
\end{align}
where $\hat{r} - r$ is again viewed as a vector. Substituting \Cref{eq:pi-u-derivative} into \Cref{eq:perf-diff-full-supp-integral-rep} yields
\begin{align*}
    \mathrm{LHS\ of\ \Cref{eq:lem-perf-diff-full-supp}} &= \int_{0}^{1} u (\hat{r} - r)^\top \big[ \nabla^2 h^*(r_u) \big](\hat{r} - r) \ud u \\
    &= \int_{0}^{1} u (r - \hat{r})^\top \big[ \nabla^2 h(\pi_u) \big]^{-1} (r - \hat{r}) \ud u - \int_{0}^{1} u \frac{ \big( \one^\top \big[ \nabla^2 h(\pi_u) \big]^{-1} (r - \hat{r}) \big)^2 }{ \one^\top \big[ \nabla^2 h(\pi_u) \big]^{-1} \one } \ud u \\
    &= \eta \int_{0}^{1} \bigg( \sum_{a} \Big( \frac{\pi_u^2}{\piref}(r - \hat{r})^2 \Big)(a) - \frac{ \Big( \sum_{a} \big(\frac{\pi_u^2}{\piref}(r-\hat{r})\big)(a) \Big)^2 }{\sum_{a} \frac{\pi_u^2}{\piref}(a)} \bigg)u\ud u,
\end{align*}
where $\frac{\pi_u^2}{\piref}(r - \hat{r})^2$ is viewed as a function, the second equality follows from \Cref{lem:simplex-conjugate-hessian} and \Cref{assump:full-supp} and the last equality follows from direct calculations detailed in \Cref{eq:hessian-forward-kl}. The final result thus follows from the \emph{second} mean value theorem for definite integrals.
\end{proof}

\subsection{Proof of \Cref{eq:lem:fkl:mab:perf-diff-fail--derivative}}

\begin{proof}[Proof of \Cref{eq:lem:fkl:mab:perf-diff-fail--derivative}]

Recall that $F(\bar{u})$ denotes \Cref{eq:lem-perf-diff-full-supp}, $\tilde{F}(\bar{u})$ denotes \Cref{eq:lem-perf-diff-full-supp-coarse};
$w_u(a) \coloneqq \frac{\pi_u(a)}{\piref(a)}$, $Z_u \coloneqq \sum_{a} \frac{\pi_u^2(a)}{\piref(a)}$, and $\mu_u(a) \coloneqq \frac{\pi_u^2(a)}{\piref(a)}/ Z_u $, and $g \coloneqq r - \hat{r}$.
First, the chain rule directly gives
\begin{align}
    \tilde{F}'(u) = \eta \inner{ g \odot w_u }{ g \odot \frac{\ud}{\ud u} \pi_u }. \label{eq:derivative-2st-part-chain-rule}
\end{align}
We follow the same pipeline as that in the proof of \Cref{lem:fkl:mab:perf-diff-fail} to calculate $\frac{\ud}{\ud u} \pi_u$. Specifically, we again substitute $\big[\nabla^2 h(\pi_u) \big]^{-1} = \eta \cdot \diag(\pi_u^2 / \piref)$ into \Cref{lem:simplex-conjugate-hessian}, and then substitute the result into \Cref{eq:pi-u-derivative} to obtain
\begin{align}
    \eta^{-1} \frac{\ud}{\ud u} \pi_u = Z_u \big(- \mu_u \odot g + \inner{\mu_u}{ g} \mu_u \big). \label{eq:derivative-wrt-u}
\end{align}
We set $X(a) \coloneqq-  w_u(a) \cdot g(a)$ and substitute \Cref{eq:derivative-wrt-u} into \Cref{eq:derivative-2st-part-chain-rule} to obtain
\begin{align}
    \eta^{-2} \tilde{F}'(u) &=  \EE_{a \sim \piref} X(a)^3 -  \Big( \EE_{a \sim \pi_u} X(a)^2 \Big)  \EE_{a \sim \pi_u} X(a) \big/ Z_u. \label{eq:derivative-2st-part-post-substitution}
\end{align}
Another application of the chain rule yields that $\tilde{F}'(u) - F'(u)$
\begin{align}
     &= \frac{\ud}{\ud u} \bigg(\frac{\eta Z_u}{2} \cdot    \inner{\mu_u(a)}{g(a)}^2   \bigg)  \notag\\
    &= - \frac{2\eta \EE_{a\sim \pi_u} X(a)}{Z_u} \inner{w_u}{g \odot \frac{\ud}{\ud u} \pi_u} -  \frac{\eta}{Z_u^2} \Big( \EE_{a \sim \pi_u} X(a) \Big)^2 \inner{w_u}{ \frac{\ud}{\ud u} \pi_u }. \label{eq:derivative-2nd-part-pre-substitution}
\end{align}
Substituting \Cref{eq:derivative-wrt-u} into \Cref{eq:derivative-2nd-part-pre-substitution} yields
\begin{align*}
    \eta^{-2} \cdot \Cref{eq:derivative-2nd-part-pre-substitution} &= \frac{\EE_{a\sim \pi_u} X(a) }{Z_u} \cdot \Big( \EE_{a \sim \pi_u} X(a)^2 - \EE_{a \sim \mu_u} X(a) \EE_{a \sim \pi_u} X(a) \Big) \\
    & - \Big( \EE_{a \sim \pi_u} X(a) / Z_u \Big)^2 \cdot \Big( Z_u \cdot \EE_{a \sim \mu_u} X(a) - \inner{w_u}{\mu_u} \cdot \EE_{a \sim \pi_u} X(a) \Big),
\end{align*}
which, added to \Cref{eq:derivative-2st-part-post-substitution}, yields
\begin{align}
    \eta^{-2} F'(u) &= \EE_{a \sim \piref} X(a)^3 - \frac{3}{Z_u} \EE_{a \sim \pi_u}X(a)^2\EE_{a \sim \pi_u} X(a) \notag\\
    &+ \frac{3}{Z_u}\Big(\EE_{a \sim \pi_u} X(a) \Big)^2 \EE_{a \sim \mu_u} X(a) - \bigg(\frac{\EE_{a \sim \pi_u} X(a)}{Z_u} \bigg)^3 \cdot \inner{w_u}{\mu_u} Z_u. \label{eq:derivative-pre-final}
\end{align}
$\eta^2 \cdot \Cref{eq:derivative-pre-final}$ is exactly $- \eta^2Z_u \cdot \EE_{a \sim \mu_u} \Big[ w_u(a) \cdot \big( g(a) - \EE_{a \sim \mu_u} g(a) \big)^3 \Big]$.

\end{proof}

\section{Technical Tools}

\subsection{Differentiating maximal functions}

We adapt simplified versions of \citet[Proposition~4.5.1]{bertsekas2003convex} and \citet[Theorem~1]{milgrom2002envelope} as follows.
\begin{lemma}[Danskin's Theorem, {\citealt{bertsekas2003convex}}]\label{lem:danskin}
    Let $\Pi \subset \RR^m$ be compact, $G: \RR^n \times \Pi \to \RR$ be continuous, and $\forall \pi \in \Pi, r \mapsto G(r, \pi)$ be convex. For $g(\cdot) \coloneqq \max_{\pi \in \Pi} G(\cdot, \pi)$,
    if $\argmax_{\pi \in \Pi} G(\tilde{r}, \pi) = \{\tilde{\pi}\}$ is a singleton and $r \mapsto G(r, \tilde{\pi})$ is differentiable at $r = \tilde{r}$, then $\nabla g(\tilde{r})$ exists and $\nabla g(\tilde{r}) = \nabla_1 G(\tilde{r}, \tilde{\pi})$, where $\nabla_1$ is the gradient with respect to the first input.
\end{lemma}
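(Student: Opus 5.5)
The statement to prove is Danskin's theorem in the simplified form of \Cref{lem:danskin}. I will argue directly from the definition of the gradient, using convexity of $g$ to turn one-sided directional information into differentiability.

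The first step is to note that $g$ is convex and finite. It is a pointwise maximum of the convex functions $r \mapsto G(r,\pi)$, and each maximum is attained because $\Pi$ is compact and $G$ is continuous. Hence at $\tilde r$ the directional derivative $g'(\tilde r; d) \coloneqq \lim_{t\downarrow 0} [g(\tilde r + td) - g(\tilde r)]/t$ exists in every direction $d$. Moreover, $g$ is differentiable at $\tilde r$ as soon as $d \mapsto g'(\tilde r; d)$ is linear. So it suffices to show $g'(\tilde r; d) = \la \nabla_1 G(\tilde r, \tilde\pi), d\ra$ for all $d$.

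The lower bound is immediate. Since $g(\tilde r + td) \geq G(\tilde r + td, \tilde\pi)$ and $g(\tilde r) = G(\tilde r, \tilde\pi)$, dividing by $t$ and letting $t \downarrow 0$ gives $g'(\tilde r; d) \geq \la \nabla_1 G(\tilde r,\tilde\pi), d\ra$, by differentiability of $G(\cdot,\tilde\pi)$ at $\tilde r$.

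The upper bound is the main obstacle, because it requires controlling the maximizers $\pi_t \in \argmax_{\pi} G(\tilde r + td, \pi)$. The plan is as follows.
\begin{enumerate}[label=(\roman*)]
\item By compactness, every subsequence of $\pi_t$ has a further convergent subsequence with some limit $\bar\pi$.
\item Continuity of $G$ together with $G(\tilde r + td, \pi_t) \geq G(\tilde r + td, \pi)$ for every $\pi$ gives, in the limit, that $\bar\pi$ maximizes $G(\tilde r,\cdot)$. Hence $\bar\pi = \tilde\pi$ by uniqueness, so $\pi_t \to \tilde\pi$.
\item Convexity of $G(\cdot,\pi_t)$ on the segment from $\tilde r$ to $\tilde r + sd$, for a fixed $s > t$, yields
\[
\frac{g(\tilde r + td) - g(\tilde r)}{t} \leq \frac{G(\tilde r + td, \pi_t) - G(\tilde r, \pi_t)}{t} \leq \frac{G(\tilde r + sd, \pi_t) - G(\tilde r, \pi_t)}{s}.
\]
\item Letting $t \downarrow 0$ and using $\pi_t \to \tilde\pi$ with continuity of $G$, I obtain $g'(\tilde r; d) \leq [G(\tilde r + sd, \tilde\pi) - G(\tilde r, \tilde\pi)]/s$.
\item Letting $s \downarrow 0$ then gives $g'(\tilde r; d) \leq \la \nabla_1 G(\tilde r,\tilde\pi), d\ra$.
\end{enumerate}

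Combining the two bounds shows that the directional derivative is linear in $d$ and equals $\la \nabla_1 G(\tilde r,\tilde\pi), d\ra$. A finite convex function whose directional derivatives at a point are linear is differentiable there, with that linear functional as its gradient. Therefore $\nabla g(\tilde r) = \nabla_1 G(\tilde r,\tilde\pi)$.
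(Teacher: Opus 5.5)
Your proof is correct. The lower bound via the fixed maximizer $\tilde\pi$ goes through, as does the convergence $\pi_t \to \tilde\pi$ forced by compactness, joint continuity and uniqueness. So do the monotone difference-quotient bound at a fixed step $s$ followed by the iterated limit ($t \downarrow 0$, then $s \downarrow 0$), and the final appeal to linearity of $g'(\tilde r;\cdot)$ for a finite convex function on $\mathbb{R}^n$.

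The paper gives no proof of its own and simply imports the result from Proposition~4.5.1 of Bertsekas, Nedi\'c and Ozdaglar. The standard proof there has essentially the same structure: a lower bound from a maximizer, and an upper bound from limit points of the perturbed maximizers combined with convexity of $G(\cdot,\pi)$. It differs from yours only in the technical device used to pass convexity through the limit.
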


\begin{lemma}[The Envelope Theorem, {\citealt{milgrom2002envelope}}]\label{lem:envelope}
    Let $\Pi$ be a set and $J: \Pi \times [0, 1] \to \RR$ be a function. Let $V(\cdot) \coloneqq \sup_{\pi \in \Pi} J(\pi, \cdot)$. For any $u \in (0, 1)$ and $\tilde{\pi} \in \Pi$, if $J(\tilde{\pi}, u) = V(u)$, $V'(u)$ exists, and $\partial_u J(\tilde{\pi}, u)$ exists; then $V'(u) = \partial_u J(\tilde{\pi}, u)$.
\end{lemma}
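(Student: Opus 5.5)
The plan is the standard one-line argument of Milgrom and Segal, which needs no regularity on $\Pi$ and no convexity, only the pointwise optimality of $\tilde{\pi}$ at $u$. Fix $u \in (0,1)$ and $\tilde{\pi} \in \Pi$ with $J(\tilde{\pi}, u) = V(u)$. Define the auxiliary function
\begin{align*}
    D(t) \coloneqq V(t) - J(\tilde{\pi}, t), \quad t \in [0,1].
\end{align*}
Since $V(t) = \sup_{\pi \in \Pi} J(\pi, t) \geq J(\tilde{\pi}, t)$ for every $t$, we have $D(t) \geq 0$ on $[0,1]$. The assumption $J(\tilde{\pi}, u) = V(u)$ gives $D(u) = 0$, so $u$ is a global minimizer of $D$. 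If $V(t) = +\infty$ for some $t$, then $D(t) = +\infty \geq 0$, which is harmless. Moreover, differentiability of $V$ at $u$ already forces $V$ to be finite near $u$.

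Next I will use the one-sided difference quotients instead of appealing to Fermat's rule abstractly. Differentiability of $D$ at $u$ follows from the assumed existence of $V'(u)$ and $\partial_u J(\tilde{\pi}, u)$. For small $\delta > 0$ with $u \pm \delta \in [0,1]$ (possible since $u$ is interior),
\begin{align*}
    \frac{D(u+\delta) - D(u)}{\delta} \geq 0, \qquad \frac{D(u-\delta) - D(u)}{-\delta} \leq 0.
\end{align*}
Letting $\delta \downarrow 0$, both quotients converge to $D'(u) = V'(u) - \partial_u J(\tilde{\pi}, u)$. Hence $D'(u) \geq 0$ and $D'(u) \leq 0$, so $V'(u) = \partial_u J(\tilde{\pi}, u)$.

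There is no real obstacle here. The only point requiring care is that $u$ lies in the open interval $(0,1)$, so that both one-sided quotients are available. At an endpoint one would get only an inequality, which is exactly why the statement restricts to $u \in (0,1)$.
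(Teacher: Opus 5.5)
Your proof is correct. It is the same argument as the source the paper relies on: the paper gives no proof of its own and cites Milgrom and Segal (2002, Theorem~1), whose proof is exactly this observation that $u$ minimizes $V(\cdot) - J(\tilde{\pi}, \cdot)$, followed by the two-sided first-order condition at an interior point.
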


\subsection{Properties of convex conjugate over the simplex}\label{app:prop-conjugate-over-simplex}

The \emph{convex conjugate}, whose supremum is taken over the entire Euclidean space, has been a well-established notion~\citep{rockafellar1997convex}; it has various handy properties for the analysis of bandit learning (see, e.g., \citet[Chapter~26]{lattimore2020bandit}). We adapt some of them to the case where the supremum is taken over the probability simplex $\Delta(\cA)$ for the sake of rigorousness.

\begin{lemma}\label{lem:simplex-conjugate-duality}
Suppose a closed convex function $h: \Delta(\cA) \to \RR \cup \{+\infty\}$ is continuously differentiable and strictly convex on $\ri(\Delta(\cA))$, and $\mathrm{dom}(h) \supset \ri(\Delta(\cA))$. Suppose for any $r \in \RR^\cA$, $\pi_r \in \ri(\Delta(\cA))$ exists, where $\pi_r \coloneqq \argmax_{\pi \in \Delta(\cA)} \dotp{r}{\pi} - h(\pi)$.
Then $h^*(\cdot) \coloneqq \max_{\pi \in \Delta(\cA)} \dotp{\cdot}{\pi} - h(\pi)$ is differentiable and $\nabla h^*(r) = \pi_r$.
\end{lemma}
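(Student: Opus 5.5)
The plan is to establish the two claims separately. First, that $h^*$ is differentiable with $\nabla h^*(r) = \pi_r$, via Danskin's theorem (\Cref{lem:danskin}). Second, if needed, that the maximizer characterization is consistent, via first-order optimality on the relative interior.

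Set $G(r,\pi) \coloneqq \dotp{r}{\pi} - h(\pi)$ on $\RR^\cA \times \Delta(\cA)$, so that $h^*(r) = \sup_{\pi \in \Delta(\cA)} G(r,\pi)$. For each fixed $\pi$, the map $r \mapsto G(r,\pi)$ is affine, hence convex and differentiable, with $\nabla_1 G(r,\pi) = \pi$.

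The main obstacle is that \Cref{lem:danskin} requires $G$ to be continuous and real-valued on a compact set, whereas $h$ may be $+\infty$ on the relative boundary (as for forward KL).

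\begin{enumerate}
\item \textbf{Reduce to a compact set.} Fix $\tilde r$. By hypothesis, $\pi_{\tilde r} \in \ri(\Delta(\cA))$ exists. Since $h$ is closed convex, $G(r,\cdot)$ is upper semicontinuous. Strict convexity makes the maximizer unique, so $\pi_{\tilde r}$ is the singleton argmax.
\item \textbf{Localize in $r$.} Restrict to a small neighborhood $U$ of $\tilde r$. The candidate compact set is $\Pi_c \coloneqq \{\pi : h(\pi) \le c\}$ for a large constant $c$. This set is compact because $h$ is closed, it lies inside $\mathrm{dom}(h)$, and $h$ is finite on it.
\item \textbf{Show the restricted problem agrees with the original.} For $r \in U$, any near-maximizer has $h(\pi) \le \dotp{r}{\pi} - G(r,\pi_{\tilde r}) + O(1) \le c$, because $\dotp{r}{\pi}$ is bounded on the simplex. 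Hence $h^*$ on $U$ coincides with the maximum over $\Pi_c$.
\item \textbf{Secure continuity.} If $h$ is not continuous on $\Pi_c$ up to the boundary, replace $\Pi_c$ by a closed ball $B \subset \ri(\Delta(\cA))$ around $\pi_{\tilde r}$, on which $h$ is continuous since it is $C^1$ there. Convexity then shows that a local maximum of the concave function $G(r,\cdot)$ over $B$ that lies in the interior of $B$ is global.
\end{enumerate}

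To use the ball $B$, I must ensure $\pi_r \in \mathrm{int}\,B$ for $r$ near $\tilde r$. I would argue this via upper semicontinuity of the argmax correspondence. By Berge's maximum theorem applied on $\Pi_c$, uniqueness gives continuity of $r \mapsto \pi_r$ at $\tilde r$, so $\pi_r \to \pi_{\tilde r}$. If upper semicontinuity of $h$ on $\Pi_c$ causes trouble, I would instead use the following. Strict concavity of $G(\tilde r,\cdot)$ gives a strict gap $\max_{\partial B} G(\tilde r,\cdot) < G(\tilde r,\pi_{\tilde r})$. This gap persists for $r$ near $\tilde r$, since $G$ is Lipschitz in $r$ uniformly in $\pi$ with constant $1$ in $\ell_\infty$–$\ell_1$. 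By concavity, the maximizer over $\Delta(\cA)$ must then lie inside $B$. This second argument is cleaner, and I would use it.

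With $B$ compact and $G$ continuous on $\RR^\cA \times B$, the function $\max_{\pi \in B} G(\cdot,\pi)$ agrees with $h^*$ near $\tilde r$ and has the singleton argmax $\{\pi_{\tilde r}\}$. Danskin's theorem (\Cref{lem:danskin}) then yields $\nabla h^*(\tilde r) = \nabla_1 G(\tilde r, \pi_{\tilde r}) = \pi_{\tilde r}$. The step I expect to need the most care is this localization, namely the strict-gap argument transferring global optimality to the interior ball.
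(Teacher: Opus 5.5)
Your argument is correct, and it takes a genuinely different route from the paper's.

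\textbf{The paper's route.} The paper does not localize at all. It extends $h$ by $+\infty$ outside $\Delta(\cA)$ to a proper closed convex $\tilde h$ on $\RR^{\cA}$, and observes that $\tilde h^* = h^*$ is finite everywhere by compactness of the simplex. The conjugate subgradient theorem then gives $\partial h^*(r) = \argmax_{\mu}\{\dotp{r}{\mu} - \tilde h(\mu)\} = \{\pi_r\}$. It concludes because a convex function that is finite near $r$ and has a singleton subdifferential there is differentiable at $r$.

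\textbf{Your route.} The paper explicitly remarks that the blow-up of $h$ on the relative boundary is what prevents a direct appeal to Danskin's theorem. Your final construction is exactly a workaround for that obstacle. You take a relative ball $B \subset \ri(\Delta(\cA))$ around $\pi_{\tilde r}$ and the gap $\gamma = G(\tilde r,\pi_{\tilde r}) - \max_{\partial B} G(\tilde r,\cdot) > 0$. This gap persists for $\|r-\tilde r\|_\infty < \gamma/2$, since $|G(r,\pi)-G(\tilde r,\pi)| \le \|r-\tilde r\|_\infty$ on the simplex. Your segment and concavity argument then shows that every $\pi \notin B$ has $G(r,\pi) < G(r,\pi_{\tilde r})$. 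Hence $\max_{\pi \in B} G(\cdot,\pi)$ coincides with $h^*$ near $\tilde r$, and \Cref{lem:danskin} applies to it verbatim.

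\textbf{What each buys.} The paper's argument is shorter. It needs only closedness, convexity and uniqueness of the maximizer; neither continuity of $h$ nor $\pi_r \in \ri(\Delta(\cA))$ enters the differentiability claim. Closedness is genuinely used there, to get the inclusion $\partial \tilde h^*(r) \subseteq \argmax$. Yours is more elementary and purely local. It needs no closedness of $h$ and uses existence and uniqueness of the maximizer only at $\tilde r$. The price is that it requires $\pi_{\tilde r} \in \ri(\Delta(\cA))$ and finiteness, hence continuity, of $h$ near $\pi_{\tilde r}$.

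\textbf{For the write-up.} Drop the sublevel-set and Berge detours (your steps 2--3). Take $B$, its interior and $\partial B$ relative to the affine hull $\{\pi : \dotp{\one}{\pi} = 1\}$. Note that the strict gap needs only uniqueness of the maximizer, compactness of $\partial B$, and continuity of $G(\tilde r,\cdot)$ on it; strict concavity is not required.
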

\begin{proof}[Proof of \Cref{lem:simplex-conjugate-duality}]
First, note that $\pi_r$ in the relative interior is uniquely well-defined by the existence assumption and strict convexity. Also, $h$ induces another proper, closed, and convex function $\tilde{h}: \RR^{\cA} \to (-\infty, +\infty]$ defined as
\begin{align*}
    \tilde{h}(\pi) \coloneqq \begin{cases}
        h(\pi) , &\pi \in \Delta(\cA); \\
        +\infty ,& \text{otherwise};
    \end{cases}
\end{align*}
which is also bounded from below~\citep[Theorem~2.12]{beck2017first}. Here, the (unconstrained) conjugate of $\tilde{h}$ evaluates to $h^*$ because
\begin{align*}
    \tilde{h}^*(r) = \sup_{\pi \in \RR^{\cA}} \dotp{r}{\pi} - \tilde{h}(\pi) = \max_{\pi \in \Delta(\cA)} \dotp{r}{\pi} - {h}(\pi) = h^*(r),
\end{align*}
which, together with the compactness of $\Delta(\cA)$, implies that $h^*$ is finite everywhere. Also, since $h^* = \tilde{h}^*$ implies that $h^*$ is convex~\citep[Theorem~4.3]{beck2017first}. Therefore, the (assumed) uniqueness of $\pi_r = \argmax_{\mu \in \RR^\cA} \dotp{r}{\mu} - \tilde{h}(\mu)$ applying to the conjugate subgradient theorem~\citep[Theorem~4.20]{beck2017first} yields that
\begin{align}
    \pi_r \in \partial \tilde{h}^*(r) = \partial h^*(r) = \{\pi_r\}. \label{eq:unique-suggradient}
\end{align}
Since we already shown $h^*$ is finite everywhere, applying \citet[Theorem~3.33]{beck2017first} to \Cref{eq:unique-suggradient} yields that $h^*$ is differentiable and $\nabla h^*(r) = \pi_r$.
\end{proof}

\begin{lemma}\label{lem:simplex-conjugate-bregman}
    Under the conditions (and following the notation) in \Cref{lem:simplex-conjugate-duality}, $\forall r, r' \in \RR^\cA$,
\begin{align*}
    \breg{h^*}{r}{r'} = \breg{h}{\pi_{r'}}{\pi_r},
\end{align*}
where the Bregman divergence induced by $h$ is defined as $\breg{h}{\pi_{r'}}{\pi_r} \coloneqq h(\pi_{r'}) -h(\pi_{r}) - \dotp{\nabla h(\pi_{r})}{r'-r}$ and similarly for $h^*$.
\end{lemma}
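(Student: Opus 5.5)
The plan is a direct computation. I expand both Bregman divergences, using $\nabla h^*(r') = \pi_{r'}$ from \Cref{lem:simplex-conjugate-duality}, and then use the first-order optimality condition of $\pi_r$ to swap $\nabla h(\pi_r)$ for $r$. I read the right-hand side with the standard Bregman definition from the notation section,
\[
\breg{h}{\pi_{r'}}{\pi_r} = h(\pi_{r'}) - h(\pi_r) - \dotp{\nabla h(\pi_r)}{\pi_{r'} - \pi_r},
\]
so the inner product is against $\pi_{r'} - \pi_r$ rather than $r' - r$. This is the reading under which the identity is consistent; the $r'-r$ in the displayed definition I take to be a typo.

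\textbf{Step 1: expand the dual side.} By definition, $h^*(r) = \dotp{r}{\pi_r} - h(\pi_r)$ and $h^*(r') = \dotp{r'}{\pi_{r'}} - h(\pi_{r'})$. Together with $\nabla h^*(r') = \pi_{r'}$ this gives
\[
\breg{h^*}{r}{r'} = \dotp{r}{\pi_r} - h(\pi_r) - \dotp{r'}{\pi_{r'}} + h(\pi_{r'}) - \dotp{\pi_{r'}}{r - r'}.
\]
The $\dotp{r'}{\pi_{r'}}$ terms cancel, leaving
\[
\breg{h^*}{r}{r'} = h(\pi_{r'}) - h(\pi_r) - \dotp{r}{\pi_{r'} - \pi_r}.
\]

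\textbf{Step 2: optimality condition for $\pi_r$.} It then suffices to show
\[
\dotp{\nabla h(\pi_r)}{\pi_{r'} - \pi_r} = \dotp{r}{\pi_{r'} - \pi_r}.
\]
I will prove that $\nabla h(\pi_r) - r \in \mathrm{span}(\one)$. Fix any direction $d \in \RR^{\cA}$ with $\dotp{\one}{d} = 0$. Since $\pi_r \in \ri(\Delta(\cA))$, the point $\pi_r + t d$ stays in $\Delta(\cA)$ for all $|t|$ small enough.

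The map $t \mapsto \dotp{r}{\pi_r + t d} - h(\pi_r + t d)$ is differentiable near $t = 0$, because $h$ is continuously differentiable on $\ri(\Delta(\cA))$. It also attains an interior maximum at $t = 0$, so its derivative vanishes there: $\dotp{r - \nabla h(\pi_r)}{d} = 0$. As $d$ ranges over the orthogonal complement of $\one$, this forces $\nabla h(\pi_r) = r - c\,\one$ for some scalar $c$.

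\textbf{Step 3: conclude.} Since $\pi_{r'}$ and $\pi_r$ are both probability vectors, $\dotp{\one}{\pi_{r'} - \pi_r} = 0$. Hence
\[
\dotp{\nabla h(\pi_r)}{\pi_{r'} - \pi_r} = \dotp{r}{\pi_{r'} - \pi_r} - c\,\dotp{\one}{\pi_{r'} - \pi_r} = \dotp{r}{\pi_{r'} - \pi_r}.
\]
Substituting this into the expression from Step 1 yields exactly $\breg{h}{\pi_{r'}}{\pi_r}$.

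\textbf{Main obstacle.} The only delicate point is Step 2, namely what $\nabla h$ means on the lower-dimensional simplex. I would handle it as in the footnote to the proof of \Cref{lem:fkl:sc}, working with a differentiable extension of $h$ to an open neighborhood in $\RR^{\cA}$. Any other choice of extension changes $\nabla h(\pi_r)$ only by a multiple of $\one$, so the identity does not depend on the choice. The tangent-direction argument above then goes through unchanged.
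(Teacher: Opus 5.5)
Your proposal is correct and follows the paper's proof essentially step for step. You expand $\breg{h^*}{r}{r'}$ using $\nabla h^*(r') = \pi_{r'}$ to get $h(\pi_{r'}) - h(\pi_r) - \langle r, \pi_{r'} - \pi_r\rangle$, then use first-order optimality of $\pi_r \in \ri(\Delta(\cA))$ to show $r - \nabla h(\pi_r)$ is a multiple of the all-ones vector; you are also right that the $r'-r$ in the displayed definition should read $\pi_{r'}-\pi_r$, which is what the paper's proof concludes with. The only difference is cosmetic: you derive the optimality condition from directional derivatives along tangent directions rather than the paper's Lagrange-multiplier and complementary-slackness wording, which is a slightly more self-contained justification of the same step.
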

\begin{proof}
    Direct calculates yield
\begin{align}
    \text{LHS} &= h^*(r) - h^*(r') - \dotp{\nabla h^*(r')}{ r - r'} \notag \\
    &= h^*(r) - h^*(r') - \dotp{ \pi_{r'} }{ r - r'} \notag \\
    &= \big( \dotp{r}{\pi_r} - h(\pi_r) \big) - \big( \dotp{r'}{\pi_{r'}} - h(\pi_{r'}) \big) - \dotp{\pi_{r'}}{r} +  \dotp{\pi_{r'}}{r'} \notag \\
    &= h(\pi_{r'}) - h(\pi_{r}) - \dotp{r}{\pi_{r'}  - \pi_{r}}, \label{eq:simplex-conjugate-bregman-half}
\end{align}
where the second and third equalities follow from \Cref{lem:simplex-conjugate-duality}. Because $\pi_r \in \ri(\Delta(\cA))$ maximizes $\dotp{r}{\cdot} - h(\cdot)$ over $\Delta(\cA)$, the first-order optimality condition (of the Lagrangian) under the normalization constraint (i.e., probabilities sum to one) implies that $\exists \lambda \in \RR$ such that
\begin{align*}
  \forall a \in \cA,  r(a) - \frac{\partial h}{\partial \pi(a)}(\pi_r) = \lambda,
\end{align*}
where we note that there is no Lagrangian multiplier (dual variable) for the non-negativity constraint due to complementary slackness and the assumption that $\pi_r \in \ri(\Delta(\cA))$. Thus, $r - \nabla h(\pi_r) \in \mathrm{span}(\one)$.\footnote{Recall that this $\mathrm{span}(\one)$ is inherent, as elaborated in \Cref{rmk:continuation}.} This relation implies $\dotp{r}{\pi_{r'}  - \pi_{r}} = \dotp{\nabla h(\pi_r)}{\pi_{r'}  - \pi_{r}}$ and in turn implies
\begin{align*}
    \Cref{eq:simplex-conjugate-bregman-half} = h(\pi_{r'}) - h(\pi_{r}) - \dotp{\nabla h(\pi_r)}{\pi_{r'}  - \pi_{r}} = \text{RHS}.
\end{align*}
\end{proof}

\begin{remark}\label{rmk:continuation}
    To be conceptually minimal (without the introduction any convex-analytical definitions of \emph{gradient} for functions defined on low-dimensional convex subsets), we understand the ``continuous differentiability'' for $h$ in \Cref{lem:simplex-conjugate-duality,lem:simplex-conjugate-bregman} as: $h$ admits a \emph{continuation} $h_\mathrm{ext}: \RR_+^{\cA} \to (-\infty, +\infty]$ such that the \emph{Euclidean gradient} $\mu \mapsto \nabla h_\mathrm{ext}(\mu)$ is continuous. The continuation $h_\mathrm{ext}$ is not necessarily unique, but this non-uniqueness does not hurt in the proof of either \Cref{lem:simplex-conjugate-duality} or \Cref{lem:simplex-conjugate-bregman} because if there are two different valid continuations $h_\mathrm{ext}^{(1)}$ and $h_\mathrm{ext}^{(2)}$, they must agree on $\Delta(\cA)$, viz., their directional derivatives along any path strictly inside $\Delta(\cA)$ must be equal, i.e., $\forall \pi \in \ri(\Delta(\cA))$, $\dotp{\nabla h_\mathrm{ext}^{(1)}(\pi)}{\tb} = \dotp{\nabla h_\mathrm{ext}^{(2)}(\pi)}{\tb} $ as long as the tangent direction $\tb$ satisfies $\dotp{\tb}{\one} = 0$; in which case $\nabla h_\mathrm{ext}^{(1)}(\pi) - \nabla h_\mathrm{ext}^{(2)}(\pi) \in \mathrm{span}(\one)$, and thus the first-order quantity $\dotp{\nabla h_\mathrm{ext}(\pi)}{\pi' - \pi}$ between any two probability vectors $\pi'$ and $\pi$ does NOT vary with the choice of continuation. We thus only write $\nabla h$ in place of $\nabla h_\mathrm{ext}$ in these lemma statements and proofs.
\end{remark}
\begin{remark}
    The $\mathrm{dom}(h) \supset \ri(\Delta(\cA))$ in \Cref{lem:simplex-conjugate-duality} implies that we allow $h$ to blow up on the boundary of $\Delta(\cA)$, which is the case for forward KL $h(\cdot) = \eta^{-1} \kl{\piref}{\cdot}$ even under \Cref{assump:full-supp}. This is the reason that prevents us from using Danskin's theorem (\Cref{lem:danskin}) to directly prove \Cref{lem:simplex-conjugate-duality}.
\end{remark}

\begin{lemma}\label{lem:simplex-conjugate-hessian}
Under the conditions (and following the notation) in \Cref{lem:simplex-conjugate-duality}, if $h$ is also twice continuously differentiable on $\ri(\Delta(\cA))$ and there is a $r_\flat$ such that the Hessian matrix $\nabla^2 h(\pi_{r_\flat}) \succ \zero_{A \times A}$, then $h^*$ is also twice continuously differentiable on $\ri(\Delta(\cA))$ and
\begin{align}
    \big[ \nabla^2 h^*(r_\flat) \big] = \big[ \nabla^2 h(\pi_{r_\flat}) \big]^{-1} - \frac{\big[ \nabla^2 h(\pi_{r_\flat}) \big]^{-1} \one \one^\top \big[ \nabla^2 h(\pi_{r_\flat}) \big]^{-1}}{\one^\top \big[ \nabla^2 h(\pi_{r_\flat}) \big]^{-1} \one}, \label{eq:simplex-conjugate-hessian-invert}
\end{align}
where $\one \in \RR^\cA$ is the all-one vector. Consequently, let $\Pb \coloneqq \Ib - A^{-1}\one\one^\top$,\footnote{$\Pb$ is the the orthogonal projection matrix onto the tangent space of $\Delta(\cA)$ at any $\pi \in \Delta(\cA)$.} \Cref{eq:simplex-conjugate-hessian-invert} implies
\begin{align*}
    \big[ \nabla^2 h^*(r_\flat) \big]\cdot \big[ \nabla^2 h(\pi_{r_\flat}) \big] \cdot \Pb = \Pb.
\end{align*}
\end{lemma}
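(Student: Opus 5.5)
The plan is to use the implicit function theorem on the first-order optimality system that characterizes $\pi_r$, and then eliminate the Lagrange multiplier to recover \Cref{eq:simplex-conjugate-hessian-invert}. By \Cref{lem:simplex-conjugate-duality}, $\nabla h^*(r) = \pi_r$. It therefore suffices to show that $r \mapsto \pi_r$ is continuously differentiable near $r_\flat$ and to compute its Jacobian.

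\textbf{Step 1: the optimality system.} As in the proof of \Cref{lem:simplex-conjugate-bregman}, interiority of $\pi_r$ gives the system
\begin{align*}
    \nabla h(\pi_r) - r + \lambda(r)\one = \zero, \qquad \one^\top \pi_r = 1,
\end{align*}
for some scalar $\lambda(r)$. Here $\nabla h$ denotes the gradient of a continuation as in \Cref{rmk:continuation}, assumed twice continuously differentiable. Define $\Phi(\pi,\lambda; r) \coloneqq \big(\nabla h(\pi) - r + \lambda\one,\ \one^\top\pi - 1\big)$.

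\textbf{Step 2: invertibility of the Jacobian.} The Jacobian of $\Phi$ in $(\pi,\lambda)$ is the bordered matrix
\begin{align*}
    \Mb \coloneqq \begin{pmatrix} \Hb & \one \\ \one^\top & 0 \end{pmatrix}, \qquad \Hb \coloneqq \nabla^2 h(\pi_{r_\flat}) \succ \zero .
\end{align*}
Its Schur complement is $-\one^\top \Hb^{-1}\one < 0$, so $\Mb$ is invertible. The implicit function theorem then gives a $C^1$ local solution $(\tilde\pi(r),\tilde\lambda(r))$ near $r_\flat$.

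\textbf{Step 3: identify the implicit solution with $\pi_r$.} I must check that $\tilde\pi(r)$ coincides with $\pi_r$. I would argue via sufficiency of the KKT conditions: $\tilde\pi(r)$ lies in $\ri(\Delta(\cA))$ by continuity, and it satisfies stationarity together with the normalization constraint. Convexity of $h$ then makes $\tilde\pi(r)$ a maximizer. Strict convexity gives uniqueness, hence $\tilde\pi(r)=\pi_r$. This step, ensuring the local implicit branch is the global argmax, is the main subtlety, but convexity handles it.

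\textbf{Step 4: compute the Jacobian.} Differentiating the system in $r$ gives
\begin{align*}
    \Hb\,\ud\pi + \one\,\ud\lambda = \ud r, \qquad \one^\top \ud\pi = 0 .
\end{align*}
The first equation yields $\ud\pi = \Hb^{-1}(\ud r - \one\,\ud\lambda)$. Substituting into the constraint gives $\ud\lambda = \one^\top\Hb^{-1}\ud r / \one^\top\Hb^{-1}\one$. Plugging this back produces exactly \Cref{eq:simplex-conjugate-hessian-invert}. Continuity of the Hessian follows from continuity of the implicit derivatives; strictly, this holds near points where positive-definiteness persists, which I would note is an open condition.

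\textbf{Step 5: the projection identity.} Write $\Gb \coloneqq \nabla^2 h^*(r_\flat)$. Multiplying \Cref{eq:simplex-conjugate-hessian-invert} on the right by $\Hb$ gives
\begin{align*}
    \Gb\Hb = \Ib - \frac{\Hb^{-1}\one\one^\top}{\one^\top\Hb^{-1}\one}.
\end{align*}
Since $\one^\top \Pb = \zero^\top$, the rank-one term is annihilated when this is multiplied on the right by $\Pb$. Hence $\Gb\Hb\Pb = \Pb$.
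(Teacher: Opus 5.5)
Your proposal is correct and follows essentially the paper's route: apply the implicit function theorem to the bordered first-order system $\nabla h(\pi)-r\pm\lambda\one=\zero$, $\one^\top\pi=1$ around $(\pi_{r_\flat},\lambda_{r_\flat},r_\flat)$, then differentiate in $r$ and eliminate the multiplier to obtain the projected-inverse formula. The differences are only in the details. Your Schur-complement argument for invertibility is sounder than the paper's claim that the bordered Jacobian is positive definite: its quadratic form equals $\ab^\top\Hb\ab$, which vanishes at $(\ab,b)=(\zero,1)$, so the Jacobian is only nonsingular. You identify the implicit branch with $\pi_r$ via KKT sufficiency and strict convexity, whereas the paper uses continuity of $\nabla h^*$ plus the local uniqueness in the implicit function theorem. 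Finally, you explicitly verify $\nabla^2 h^*(r_\flat)\,\Hb\,\Pb=\Pb$ from $\one^\top\Pb=\zero^\top$, which the paper only asserts.
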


\begin{proof}
     Let $\Hb \coloneqq \big[ \nabla^2 h(\pi_{r_\flat}) \big]$. Then $\Hb \succ \zero_{A \times A}$ by assumption. By \Cref{lem:simplex-conjugate-duality}, $h^*$ is a closed convex function that is differentiable everywhere and $\nabla h^*(r) = \pi_r$, and thus \Cref{lem:differentiable-to-C-1--boosting} implies that $r \mapsto \nabla h^*(r)$ is continuous in $r$. Since $\pi_{r_\flat} \in \ri\big( \Delta(\cA) \big)$, there is a open neighborhood $\cU \subset \ri\big( \Delta(\cA) \big)$ of $r_\flat$ such that $\forall r \in \cU$, the non-negativity constraint is inactive due to complementary slackness. And thus the first-order optimality condition and the primal feasibility $\one^\top \pi = 0$ are necessary and sufficient for characterizing the maximizer $\pi_r$ (unique by assumption) for every $r \in \cU$, i.e., there exists a unique Lagrangian multiplier $\lambda_r$ for each $r \in \cU$ such that
\begin{align*}
\begin{cases}
    \nabla h(\pi_r) - r - \lambda_r \one &= \zero, \\
    \inner{\one}{\pi_r} - 1 &= 0.
\end{cases}
\end{align*}
Let the mapping $\fb: \RR^A \times \RR \times \RR^A \to \RR^{A+1}$ be
\begin{align*}
    \fb(\pi, \lambda, r) \coloneqq \begin{bmatrix}
        \nabla h(\pi) - r - \lambda \one \\
        \inner{\one}{\pi} -  1.
    \end{bmatrix}
\end{align*}
Then $\fb(\pi_{r_\flat}, \lambda_{r_\flat}, r_\flat) = \zero_{A+1}$. The Jacobian matrix $\Jb_\flat$ of $\fb$ with respect to the variables $(\pi, \lambda)$ (i.e., excluding the explicit dependency of $r$) evaluated at $(\pi_{r_\flat}, \lambda_{r_\flat}, r_\flat)$ is thus
\begin{align*}
    \Jb_\flat = \begin{bmatrix}
        \Hb & -\one_A \\
        \one_A^\top & 0
    \end{bmatrix}.
\end{align*}
Note that for any $[\ab^\top\ b]^\top \in \RR^{A+1}$, 
\begin{align*}
    \begin{bmatrix}
        \ab^\top & b
    \end{bmatrix} \begin{bmatrix}
        \Hb & -\one_A \\
        \one_A^\top & 0
    \end{bmatrix} \begin{bmatrix}
        \ab \\
        b
    \end{bmatrix} = \ab^\top \Hb \ab.
\end{align*}
Thus, the positive definiteness assumption on $\Hb$ implies $\Jb_\flat \succ \zero$, and thus $\Jb_\flat$ is non-singular; applying the implicit function theorem on which gives that $r \mapsto (\pi_r, \lambda_r)$ is continuously differentiable in an open neighborhood $\tilde{\cU} \subset \ri\big(\Delta(\cA)\big)$ of $r_\flat$. Recall that $\nabla h^*(r) = \pi_r$, we further obtain that $r \mapsto \nabla h^*(r)$ is continuously differentiable locally, viz., $h^*$ is twice continuously differentiable at $r_\flat$. Therefore, it is legal to calculate $\nabla^2 h^*(r_\flat)$. Recall that for $r \in \cU$, $\fb(\pi_r, \lambda_r, r) \equiv \zero_{A+1}$, we use the chain rule for taking the derivative on both sides of $\fb(\pi_r, \lambda_r, r) = \zero_{A+1}$ with respect to $r$ and evaluate the derivative at $r = r_\flat$ to obtain
\begin{align*}
    \begin{bmatrix}
        \Hb & -\one \\
        \one^\top & 0
    \end{bmatrix} \begin{bmatrix}
        \nabla_r \pi_{r_\flat} \\
        \nabla_r \lambda_{r_\flat}
    \end{bmatrix} + \begin{bmatrix}
        -\Ib_{A\times A} \\
        \zero_A^\top
    \end{bmatrix} = \zero_{(A+1)\times A},
\end{align*}
which implies $\one^\top \nabla_r \pi_{r_\flat} = \zero_A^\top$ and $\Hb \nabla_r \pi_{r_\flat} - \one \nabla_r \lambda_{r_\flat} = \Ib_{A \times A}$.\footnote{In this equation we use the row-layout for notational simplicity, i.e., $\nabla_r \lambda_{r_\flat} \in \RR^{1 \times A}$.} Thus, $\nabla_r \pi_{r_\flat} = \Hb^{-1} + \Hb^{-1} \one \nabla_r \lambda_{r_\flat}$, substituting which into $\one^\top \nabla_r \pi_{r_\flat} = \zero^\top$ yields
\begin{align*}
  \zero_A =  \one^\top \big( \Hb^{-1} + \Hb^{-1} \one \nabla_r \lambda_{r_\flat} \big) \Longrightarrow \nabla_r \lambda_{r_\flat} = - \frac{\one^\top \Hb^{-1}}{ \one^\top \Hb^{-1} \one }.
\end{align*}
Finally,
\begin{align*}
    \nabla^2 h^*(r_{\flat}) = \nabla_r \pi_{r_\flat} = \Hb^{-1} + \Hb^{-1} \one \nabla_r \lambda_{r_\flat} =  \Hb^{-1}  - \frac{ \Hb^{-1} \one\one^\top \Hb^{-1}}{ \one^\top \Hb^{-1} \one }.
\end{align*}
\end{proof}

\begin{remark}
    \Cref{lem:simplex-conjugate-hessian} is a fine-grained version of \citet[Lemma~E.4]{zhao2026towards}, which is used in the current paper only to demonstrate why the mean-value-type argument fails to achieve single-policy concentrability under forward-KL regularization, but can be of independent interest. The counterpart of \Cref{lem:simplex-conjugate-bregman} for \emph{unconstrained} convex conjugate has been established before (see, e.g., \citet[Lemma~26.4]{lattimore2020bandit}).
\end{remark}

\subsection{Miscellaneous Lemmas}

\begin{lemma}[{\citealt[Corollary~25.5.1]{rockafellar1997convex}}]\label{lem:differentiable-to-C-1--boosting}
 Suppose $h$ is a proper convex function that is finite and differentiable on an open convex set $C$, then $f$ is continuously differentiable on $C$.
\end{lemma}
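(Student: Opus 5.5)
We plan to prove the statement (reading the typo ``$f$'' as $h$) by combining the local Lipschitz property of finite convex functions with closedness of the subdifferential graph and uniqueness of subgradients at points of differentiability.

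\textbf{Step 1 (continuity and local boundedness).} We first use the classical fact that a convex function finite on an open convex set $C\subset\RR^d$ is continuous, indeed locally Lipschitz, on $C$. Concretely, fix $x\in C$ and a closed ball $\bar B(x,2\delta)\subset C$. We bound $h$ from above on this ball by its maximum over the finitely many vertices of a simplex containing the ball, using convexity. We bound $h$ from below on the ball by convexity along lines through $x$. This gives $\sup_{\bar B(x,2\delta)}|h|\le M<\infty$. Then for every $y\in \bar B(x,\delta)$ and every subgradient $g\in\partial h(y)$, the subgradient inequality applied at the point $y+\delta g/\|g\|$ gives
\begin{align*}
\|g\|\le 2M/\delta .
\end{align*}
Since $h$ is differentiable on $C$, we have $\nabla h(y)\in\partial h(y)$, so $\nabla h$ is bounded on $\bar B(x,\delta)$.

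\textbf{Step 2 (closedness of the subgradient graph).} Let $x_k\to x$ in $C$. For every $y\in\RR^d$ (or every $y\in C$ suffices) we have
\begin{align*}
h(y)\ge h(x_k)+\langle \nabla h(x_k),y-x_k\rangle .
\end{align*}
By Step 1, $\{\nabla h(x_k)\}$ is bounded. Take any convergent subsequence with limit $g$ and pass to the limit, using the continuity of $h$ at $x$ from Step 1. This yields $h(y)\ge h(x)+\langle g,y-x\rangle$ for all $y$, i.e., $g\in\partial h(x)$.

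\textbf{Step 3 (uniqueness).} At a point of differentiability in the interior of the domain, $\partial h(x)=\{\nabla h(x)\}$. Indeed, if $g\in\partial h(x)$, then for every direction $v$ and every $t>0$,
\begin{align*}
\frac{h(x+tv)-h(x)}{t}\ge\langle g,v\rangle .
\end{align*}
Letting $t\downarrow0$ gives $\langle\nabla h(x),v\rangle\ge\langle g,v\rangle$ for all $v$. Replacing $v$ by $-v$ gives the reverse inequality, hence $g=\nabla h(x)$. Therefore every subsequential limit of the bounded sequence $\nabla h(x_k)$ equals $\nabla h(x)$, so $\nabla h(x_k)\to\nabla h(x)$. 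This proves that $\nabla h$ is continuous on $C$.

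The main obstacle is Step 1, namely the uniform bound on $|h|$ near $x$, which is what makes the gradients locally bounded. We would handle it with the simplex-vertex argument for the upper bound and the reflection inequality
\begin{align*}
h(x)\le \tfrac12 h(x+u)+\tfrac12 h(x-u)
\end{align*}
for the lower bound. Everything else is routine limit-taking.
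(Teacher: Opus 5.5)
Your proof is correct. The paper gives no argument of its own; it simply cites Rockafellar's Corollary~25.5.1, and your three ingredients (local boundedness of subgradients, closedness of the subgradient graph, and $\partial h(x)=\{\nabla h(x)\}$ at points of differentiability) are precisely the results from Sections~24--25 of Rockafellar on which that corollary rests, so your proposal amounts to unpacking the citation.

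Two small points. In Step~2, continuity of $h$ at $x$ already follows from differentiability, so Step~1 is only needed to bound the gradients. In Step~1, the simplex must be chosen with its vertices inside $C$, which openness of $C$ permits after shrinking $\delta$.
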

 
\begin{fact}\label{fact:fkl-phi-convex-conjugate}
    $\phi(y) = -y - \log(1-y)$ is convex on $(-\infty, 1)$, whose (constrained) convex conjugate is
\begin{align*}
    \phi^*(z) \coloneqq \sup_{y<1} yz - \phi(y) = \begin{cases}
        z - \log(1+z), &z > -1;\\
        +\infty, &z \leq -1.
    \end{cases}
\end{align*}
And the Fenchel-Young inequality $yz \leq \phi(y) + \phi^*(z)$ holds for any  $y < 1$ and $z \in \RR$.
\end{fact}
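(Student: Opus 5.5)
The plan is to verify the three claims of \Cref{fact:fkl-phi-convex-conjugate} by elementary one-variable calculus: convexity of $\phi$, the closed form of $\phi^*$, and then the Fenchel--Young inequality.

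\textbf{Convexity.} On $(-\infty,1)$ we have $\phi'(y) = -1 + (1-y)^{-1}$ and $\phi''(y) = (1-y)^{-2} > 0$. Hence $\phi$ is in fact strictly convex there.

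\textbf{The conjugate.} Fix $z\in\RR$ and set $G_z(y) \coloneqq yz - \phi(y) = y(1+z) + \log(1-y)$ for $y<1$. Since $\phi$ is convex, $G_z$ is concave, so any stationary point is a global maximizer. The two cases are as follows.
\begin{itemize}
\item If $z > -1$, solving $G_z'(y) = (1+z) - (1-y)^{-1} = 0$ gives $1-y = (1+z)^{-1}$, i.e.\ $y^\star = z/(1+z)$. This point satisfies $y^\star<1$, so it is admissible. Substituting,
\begin{align*}
G_z(y^\star) = \frac{z(1+z)}{1+z} - \log(1+z) = z - \log(1+z).
\end{align*}
\item If $z \le -1$, take $y \to -\infty$. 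Then $y(1+z) \ge 0$ because both factors are nonpositive, while $\log(1-y) \to +\infty$. Hence $\sup_{y<1} G_z(y) = +\infty$.
\end{itemize}
This yields exactly the stated piecewise formula for $\phi^*$.

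\textbf{Fenchel--Young.} This is immediate from the definition. For any $y<1$ and $z\in\RR$, we have $yz - \phi(y) \le \sup_{y'<1}\{y'z - \phi(y')\} = \phi^*(z)$, and this holds trivially when $\phi^*(z)=+\infty$.

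I do not expect a real obstacle. The only point needing care is the boundary case $z=-1$, where no stationary point exists; there $G_{-1}(y) = \log(1-y)$ is unbounded as $y\to-\infty$, which confirms the value $+\infty$.
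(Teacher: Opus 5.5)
Your proof is correct and follows essentially the same route as the paper: show $\phi''(y)=(1-y)^{-2}>0$, maximize the concave function $y(1+z)+\log(1-y)$ at its stationary point $y^\star = z/(1+z)$, and read off Fenchel--Young from the definition of the supremum. Your version is slightly more complete than the paper's. You explicitly verify the $+\infty$ branch for $z \le -1$ by letting $y \to -\infty$, and you state Fenchel--Young for every real $z$; the paper's proof leaves the first implicit and writes the second only for $z \ge 0$.
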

\begin{proof}
Direct calculation gives that $\phi'(y) = -1 + (1-y)^{-1}$, thus $\phi''(y) = (1-y)^{-2} > 0$ for all $y < 1$, concluding that $\phi$ is convex. Now fix any $z > -1$, let $\phi_z(y) = yz - \phi(y)$ for $y < 1$, then
\begin{align*}
    \phi_z'(y) = 1 +z - \frac{1}{1-y}.
\end{align*}
Therefore $\phi_z$ takes maximum at $z(1+z)^{-1}$. Plugging in this value of $y$ gives that $\phi^*(z) = z - \log(1+z)$. Finally, for any $y < 1$ and $z \geq 0$, $\phi^*(z) \geq yz - \phi(y)$, resulting in $yz \leq \phi(y) + \phi^*(z)$.
\color{black}
\end{proof}

\begin{lemma}[Lemma A.1, \citealt{xie2021policy}]\label{lem:binary-concentration}
Suppose $N \sim \mathsf{Bin}(n,p)$ where $n \geq 1$ and $p >0$, then
\begin{align*}
    \PP\Big[N \geq \frac{np}{2}\Big] \geq 1 - \exp(-np/8).
\end{align*}
Equivalently, with probability at least $1 - \delta$,
\begin{align*}
    \frac{p}{N \vee 1} \leq \frac{8\log(1/\delta)}{n}.
\end{align*}
\end{lemma}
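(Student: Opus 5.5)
We prove the tail bound with the multiplicative Chernoff (Cramér) method for the lower tail, and then turn it into the high-probability ``equivalent'' form by splitting on the size of $np$ relative to $\log(1/\delta)$. No earlier result of the paper is needed. The only facts used are that $N$ is a sum of $n$ independent $\mathsf{Bernoulli}(p)$ variables and the elementary inequality $1+x \le e^x$.

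\emph{Step 1 (lower tail).} For $\lambda > 0$, Markov's inequality applied to $e^{-\lambda N}$ gives
\begin{align*}
\PP[N \le (1-t)np] \le e^{\lambda(1-t)np}\,\EE e^{-\lambda N} = e^{\lambda(1-t)np}\big(1 - p + p e^{-\lambda}\big)^n \le \exp\big(np(e^{-\lambda} - 1 + \lambda(1-t))\big).
\end{align*}
We take $\lambda = \log\frac{1}{1-t}$ with $t \in (0,1)$, which yields the bound $\exp\big(-np\,[\,t + (1-t)\log(1-t)\,]\big)$. The scalar inequality $(1-t)\log(1-t) \ge -t + t^2/2$ on $[0,1)$ follows because the difference vanishes at $t=0$ and has derivative $-\log(1-t) - t \ge 0$. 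It gives $\PP[N \le (1-t)np] \le e^{-t^2 np/2}$. Setting $t = 1/2$ gives $\PP[N \ge np/2] \ge 1 - e^{-np/8}$. Strictly speaking, this controls the event $N > np/2$, which is only stronger.

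\emph{Step 2 (the $1/(N\vee 1)$ form).} We split into two cases.
\begin{itemize}
\item If $np < 8\log(1/\delta)$, the claim holds deterministically, since $N \vee 1 \ge 1$ gives $p/(N\vee 1) \le p < 8\log(1/\delta)/n$.
\item If $np \ge 8\log(1/\delta)$, Step 1 gives $N \ge np/2$ with probability at least $1 - e^{-np/8} \ge 1 - \delta$. Since $p > 0$, this forces $N > 0$, so $N \ge 1$ and $p/(N\vee 1) = p/N \le 2/n$. This is at most $8\log(1/\delta)/n$ as soon as $\log(1/\delta) \ge 1/4$.
\end{itemize}

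\emph{Main obstacle.} The only delicate point is this last constant bookkeeping. For the ``equivalently'' statement to hold for all $\delta$, we need $\delta \le e^{-1/4}$. That is the regime used in the paper, since \Cref{lem:pessimistic-tabular} applies the lemma at level $\delta/(2|\cS||\cA|) \le 1/2$. Otherwise, one can simply enlarge the numerical constant. Everything else is the routine Chernoff computation in Step 1.
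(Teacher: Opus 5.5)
Your proof is correct and follows the standard argument behind the cited Lemma~A.1 of \citet{xie2021policy; the paper itself only cites that lemma and gives no proof}: a multiplicative Chernoff lower-tail bound at $t=1/2$, then a case split on whether $np$ exceeds $8\log(1/\delta)$. Your caveat about $\delta$ points to a real defect in the statement: the second form fails for $\delta$ close to $1$ (e.g.\ $p=1$ forces $N=n$, so $p/(N\vee 1)=1/n>8\log(1/\delta)/n$ once $\delta>e^{-1/8}$). This does not affect the paper, since Lemma~\ref{lem:pessimistic-tabular} invokes the bound at level $\delta/(2|\mathcal{S}||\mathcal{A}|)\le 1/2<e^{-1/4}$.
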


\begin{lemma}[Azuma-Hoeffding's inequality]\label{lem:azuma-hoeffding}
Let $Z_0, Z_1, \cdots, Z_n$ be a martingale sequence of random variables such that for all $i$, there exists a constant $c_i$ such that $|Z_i - Z_{i-1}| < c_i$, then
\begin{align*}
    \PP[Z_n - Z_0 \geq t] \leq \exp\bigg(-\frac{t^2}{2\sum_{i=1}^n c_i^2}  \bigg).
\end{align*}
In particular, if $\sup_{i} c_i \leq M$, then $\forall \delta \in (0, 1)$, the following inequality holds with probability at least $1-\delta$:
\begin{align*}
    Z_n - Z_0 \leq M\sqrt{2n\log(1/\delta)}.
\end{align*}
\end{lemma}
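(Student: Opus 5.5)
This is the standard Azuma--Hoeffding inequality, and I will prove it by the Chernoff method together with Hoeffding's lemma applied conditionally to each martingale increment. Write $D_i \coloneqq Z_i - Z_{i-1}$ and let $\cF_{i}$ be the natural filtration, so that $\EE[D_i \mid \cF_{i-1}] = 0$ and $|D_i| < c_i$ almost surely.

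\textbf{Step 1: Conditional Hoeffding's lemma.} For any $\lambda \in \RR$ I will show
\begin{align*}
\EE\big[e^{\lambda D_i} \mid \cF_{i-1}\big] \le e^{\lambda^2 c_i^2/2}.
\end{align*}
Because $D_i$ takes values in $[-c_i, c_i]$, convexity of $x \mapsto e^{\lambda x}$ gives
\begin{align*}
e^{\lambda D_i} \le \frac{c_i - D_i}{2c_i} e^{-\lambda c_i} + \frac{c_i + D_i}{2c_i} e^{\lambda c_i}.
\end{align*}
Taking conditional expectations and using the zero conditional mean yields $\EE[e^{\lambda D_i} \mid \cF_{i-1}] \le \cosh(\lambda c_i)$. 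The elementary bound $\cosh x \le e^{x^2/2}$ then follows by comparing Taylor series termwise, since $(2k)! \ge 2^k k!$.

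\textbf{Step 2: Tower property.} Iterating conditional expectations from $i=n$ down to $i=1$ gives
\begin{align*}
\EE\big[e^{\lambda(Z_n - Z_0)}\big] \le \exp\Big(\tfrac{\lambda^2}{2}\textstyle\sum_i c_i^2\Big).
\end{align*}
Markov's inequality applied to $e^{\lambda(Z_n - Z_0)}$ then bounds $\PP[Z_n - Z_0 \ge t]$ by $\exp\big(-\lambda t + \lambda^2 \sum_i c_i^2 / 2\big)$. Choosing $\lambda = t / \sum_i c_i^2$ produces the stated tail bound.

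\textbf{Step 3: The ``in particular'' form.} Using $\sum_i c_i^2 \le n M^2$ and setting the tail bound equal to $\delta$ gives $t = M\sqrt{2n\log(1/\delta)}$, which is the claimed high-probability inequality.

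The only nontrivial ingredient is Step 1, specifically making the convexity bound hold conditionally and then verifying $\cosh x \le e^{x^2/2}$. Everything else is routine.
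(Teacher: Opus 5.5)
The paper gives no proof of this lemma and simply quotes it as a standard tool. Your argument is the textbook proof and is correct: a conditional Hoeffding lemma via the convexity chord bound and $\cosh x \le e^{x^2/2}$, then the tower property, then a Chernoff bound with $\lambda = t/\sum_{i} c_i^2$. One thing to state explicitly is that the tail bound is for $t > 0$, since you need $\lambda > 0$ in Markov's inequality and the displayed inequality fails for negative $t$ (e.g.\ a constant martingale). This restriction holds automatically in the ``in particular'' part because $\log(1/\delta) > 0$.
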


\begin{lemma}[Le Cam's two-point method, \citealt{lecam1973convergence,yu1997assouad}]\label{lem:two-point}
    Let $\cR$ be the set of instances, $\Pi$ be the set of estimators, and $L:\Pi \times \cR \to \RR_+$ be a loss function. For $\acute{r}, \grave{r} \in \cR$, suppose $\exists c > 0$ such that
\begin{align*}
    \inf_{\pi \in \Pi} L(\pi, \acute{r}) + L(\pi, \grave{r}) \geq c,
\end{align*}
then 
\begin{align*}
    \inf_{\pi \in \Pi} \sup_{r \in \cR} \EE_{\cD \sim P_r} L\big( \pi(\cD), r \big) \geq \frac{c}{4}\cdot \exp\big(-\kl{P_{\acute{r}}}{P_{\grave{r}}} \big),
\end{align*}
where the trajectory distribution of $\pi$ interacting with instance $r$ is denoted by $P_r$.
\end{lemma}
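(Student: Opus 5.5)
The plan is to reduce the minimax risk to a Bayes risk under the uniform prior on $\{\acute{r},\grave{r}\}$, and then lower bound the overlap of $P_{\acute{r}}$ and $P_{\grave{r}}$ using the Bretagnolle--Huber inequality. If $\kl{P_{\acute{r}}}{P_{\grave{r}}}=+\infty$ the right-hand side is $0$ and there is nothing to prove, so assume it is finite. Then $P_{\acute{r}}\ll P_{\grave{r}}$, and both have densities $p,q$ with respect to the common dominating measure $\mu \coloneqq P_{\acute{r}}+P_{\grave{r}}$.

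First, fix any estimator $\pi(\cdot)$. The supremum over $\cR$ is at least the average over the two instances:
\begin{align*}
\sup_{r\in\cR}\EE_{P_r}L(\pi(\cD),r) \ge \frac12\int \big[L(\pi(\cD),\acute{r})\,p + L(\pi(\cD),\grave{r})\,q\big]\ud\mu \ge \frac12\int \min(p,q)\big[L(\pi(\cD),\acute{r})+L(\pi(\cD),\grave{r})\big]\ud\mu .
\end{align*}
The second step uses $L\ge 0$. Pointwise in $\cD$, the bracket is at least $c$ by hypothesis, since $\pi(\cD)\in\Pi$. Hence the risk is at least $\frac{c}{2}\int\min(p,q)\,\ud\mu$, uniformly over estimators, so the same bound holds for the infimum.

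Second, I prove Bretagnolle--Huber, namely $\int\min(p,q)\,\ud\mu\ge\frac12\exp(-\kl{P_{\acute{r}}}{P_{\grave{r}}})$. By Cauchy--Schwarz,
\begin{align*}
\Big(\int\sqrt{pq}\,\ud\mu\Big)^2 = \Big(\int\sqrt{\min(p,q)\max(p,q)}\,\ud\mu\Big)^2 \le \int\min(p,q)\,\ud\mu\int\max(p,q)\,\ud\mu .
\end{align*}
Moreover $\int\max(p,q)\,\ud\mu = 2-\int\min(p,q)\,\ud\mu\le 2$. For the left-hand side, apply Jensen's inequality to the concave $\log$ under $P_{\acute{r}}$:
\begin{align*}
\log\int\sqrt{pq}\,\ud\mu = \log\EE_{P_{\acute{r}}}\sqrt{q/p} \ge \tfrac12\EE_{P_{\acute{r}}}\log(q/p) = -\tfrac12\kl{P_{\acute{r}}}{P_{\grave{r}}}.
\end{align*}
This step is legitimate because the integral effectively runs over $\{p>0\}$. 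Combining the two displays gives the claimed overlap bound.

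Multiplying the two bounds gives $\frac{c}{2}\cdot\frac12 e^{-\mathsf{KL}}=\frac{c}{4}e^{-\mathsf{KL}}$, which is the claim. The only delicate point is the Jensen step, where null sets must be handled correctly: restrict to $\{p>0\}$, and note that on this set $q>0$ $P_{\acute{r}}$-a.s. by finiteness of the KL. Everything else is routine.
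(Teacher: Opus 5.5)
Your proof is correct. You lower bound the supremum by the two-point average risk, bound that pointwise by $\frac{c}{2}\int\min(p,q)\,d\mu$, and then apply the Bretagnolle--Huber inequality $\int\min(p,q)\,d\mu\ge\frac12 e^{-\mathsf{KL}(P_{\acute r}\|P_{\grave r})}$, which you prove correctly via Cauchy--Schwarz and Jensen with the null sets handled properly; this gives exactly the constant $c/4$. The paper does not prove this lemma but cites Le Cam and Yu, and your argument is the standard one behind that citation.
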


\begin{lemma}[Assouad's Lemma, \citealt{yu1997assouad,chen2024assouad}]\label{lem:assouad}
   Let $\cR$ be the set of instances, $\Pi$ be the set of estimators, $\Theta \coloneqq \{\pm1\}^S$ for some $S > 0$, and $\{L_j\}_{j=1}^S$ be $S$ functions from $\Pi \times \cR$ to $\RR_+$. Suppose $\{r_\theta\}_{\theta \in \Theta} \subset \cR$ and the loss function is
\begin{align*}
    L(\pi, r) \coloneqq \sum_{j=1}^S L_j(\pi, r), \forall (\pi, r) \in \Pi \times \cR.
\end{align*}
We denote $\theta \sim_j \theta'$ if they differ only in the $j$-th coordinate. Further assume that
\begin{align}
    \theta \sim_j \theta' \Rightarrow \inf_{\pi\in\Pi} L_j(\pi, r_\theta) + L_j(\pi, r_{\theta'}) \geq c
\end{align}
for some $c > 0$, then
\begin{align*}
    \inf_{\pi \in \Pi} \sup_{r \in \cR} \EE_{\cD \sim P_r} L\big( \pi(\cD), r \big) \geq S \cdot \frac{c}{4} \min_{\exists j: \theta \sim_j \theta'} \exp\Big( - \kl{P_{r_\theta}}{P_{r_{\theta'}}} \Big),
\end{align*}
where the trajectory distribution of $\pi$ interacting with instance $r \in \cR$ is denoted by $P_r$.
\end{lemma}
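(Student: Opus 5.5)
The plan is the standard averaging argument: lower-bound the worst case by a uniform average over the hypercube, split the loss coordinatewise, reduce each coordinate to a two-point problem between neighbouring vertices, and convert the resulting overlap into a KL bound via Bretagnolle--Huber.

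Fix an arbitrary estimator $\pi \in \Pi$, viewed as a measurable map $\cD \mapsto \pi(\cD)$.

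\textbf{Step 1 (average over the cube).} Since the supremum dominates any average,
\begin{align*}
\sup_{r\in\cR}\EE_{\cD\sim P_r} L(\pi(\cD),r) \;\ge\; 2^{-S}\sum_{\theta\in\Theta}\EE_{P_{r_\theta}} L(\pi(\cD),r_\theta) \;=\; \sum_{j=1}^S 2^{-S}\sum_{\theta\in\Theta}\EE_{P_{r_\theta}} L_j(\pi(\cD),r_\theta).
\end{align*}
For each $j$, partition $\Theta$ into the $2^{S-1}$ pairs $\{\theta,\theta'\}$ with $\theta\sim_j\theta'$. This rewrites the inner average as $2^{-S}\sum_{\text{pairs}}\big(\EE_{P_{r_\theta}}L_j + \EE_{P_{r_{\theta'}}}L_j\big)$.

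\textbf{Step 2 (pairwise two-point bound).} For a pair $\theta\sim_j\theta'$, let $p,p'$ be densities of $P_{r_\theta},P_{r_{\theta'}}$ with respect to a common dominating measure (e.g.\ their sum). Because $L_j\ge 0$,
\begin{align*}
\EE_{P_{r_\theta}}L_j(\pi(\cD),r_\theta)+\EE_{P_{r_{\theta'}}}L_j(\pi(\cD),r_{\theta'}) \;\ge\; \int \min(p,p')\,\big[L_j(\pi(\cD),r_\theta)+L_j(\pi(\cD),r_{\theta'})\big].
\end{align*}
The bracket is pointwise at least $c$ by the separation hypothesis, since $\pi(\cD)\in\Pi$ for every $\cD$. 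Hence the pair contributes at least $c\int\min(p,p') = c\,(1-\mathsf{TV}(P_{r_\theta},P_{r_{\theta'}}))$.

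\textbf{Step 3 (Bretagnolle--Huber).} Apply $\int\min(p,p')\ge \tfrac12\exp(-\KL(P_{r_\theta}\|P_{r_{\theta'}}))$. This follows from Cauchy--Schwarz, $\big(\int\sqrt{pp'}\big)^2\le \int\min(p,p')\int\max(p,p')\le 2\int\min(p,p')$, combined with Jensen, $\big(\int\sqrt{pp'}\big)^2 \ge \exp(-\KL)$. Each pair then contributes at least $\tfrac{c}{2}\,m$, where $m\coloneqq\min_{\exists j:\theta\sim_j\theta'}\exp(-\KL(P_{r_\theta}\|P_{r_{\theta'}}))$.

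\textbf{Step 4 (collect).} For each $j$ we get $2^{-S}\cdot 2^{S-1}\cdot\tfrac{c}{2}m = \tfrac{c}{4}m$. Summing over $j\in[S]$ gives $S\cdot\tfrac c4 m$. Since $\pi$ was arbitrary, taking the infimum over $\pi$ yields the claim.

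The only genuinely nontrivial ingredient is Step 3. The order of the KL inside the minimum is immaterial, because both orderings occur among neighbouring pairs, or alternatively because the Bretagnolle--Huber bound holds for either direction. One must also check that the minimum is well defined; if some KL is infinite the bound is trivially $0$.
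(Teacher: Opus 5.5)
Your proof is correct and complete. The paper gives no proof of this lemma and simply cites \citet{yu1997assouad,chen2024assouad}; your argument is the standard one from those references, namely averaging over the cube, pairing $\theta\sim_j\theta'$, using the separation pointwise inside $\int\min(p,p')$, and then applying Bretagnolle--Huber, and it recovers exactly the constant $S\cdot c/4$. You read $\Pi$ as the output space, so that the separation applies to $\pi(\cD)$ for every $\cD$. That is also the reading the paper relies on, since in its tabular lower-bound proof the separation is checked for every $\hat\pi\in\Delta(\cA)$.
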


\bibliographystyle{ims}
\bibliography{ref}

@inproceedings{jin2021pessimism,
  title={Is pessimism provably efficient for offline rl?},
  author={Jin, Ying and Yang, Zhuoran and Wang, Zhaoran},
  booktitle={International Conference on Machine Learning},
  pages={5084--5096},
  year={2021},
  organization={PMLR}
}

@article{xie2021bellman,
  title={Bellman-consistent pessimism for offline reinforcement learning},
  author={Xie, Tengyang and Cheng, Ching-An and Jiang, Nan and Mineiro, Paul and Agarwal, Alekh},
  journal={Advances in neural information processing systems},
  volume={34},
  pages={6683--6694},
  year={2021}
}

@article{rashidinejad2021bridging,
  title={Bridging offline reinforcement learning and imitation learning: A tale of pessimism},
  author={Rashidinejad, Paria and Zhu, Banghua and Ma, Cong and Jiao, Jiantao and Russell, Stuart},
  journal={Advances in Neural Information Processing Systems},
  volume={34},
  pages={11702--11716},
  year={2021}
}

@article{xie2021policy,
  title={Policy finetuning: Bridging sample-efficient offline and online reinforcement learning},
  author={Xie, Tengyang and Jiang, Nan and Wang, Huan and Xiong, Caiming and Bai, Yu},
  journal={Advances in neural information processing systems},
  volume={34},
  pages={27395--27407},
  year={2021}
}

@inproceedings{shi2022pessimistic,
  title={Pessimistic q-learning for offline reinforcement learning: Towards optimal sample complexity},
  author={Shi, Laixi and Li, Gen and Wei, Yuting and Chen, Yuxin and Chi, Yuejie},
  booktitle={International conference on machine learning},
  pages={19967--20025},
  year={2022},
  organization={PMLR}
}

@article{xiong2022nearly,
  title={Nearly minimax optimal offline reinforcement learning with linear function approximation: Single-agent mdp and markov game},
  author={Xiong, Wei and Zhong, Han and Shi, Chengshuai and Shen, Cong and Wang, Liwei and Zhang, Tong},
  journal={arXiv preprint arXiv:2205.15512},
  year={2022}
}

@article{di2023pessimistic,
  title={Pessimistic nonlinear least-squares value iteration for offline reinforcement learning},
  author={Di, Qiwei and Zhao, Heyang and He, Jiafan and Gu, Quanquan},
  journal={arXiv preprint arXiv:2310.01380},
  year={2023}
}

@article{li2024settling,
  title={Settling the sample complexity of model-based offline reinforcement learning},
  author={Li, Gen and Shi, Laixi and Chen, Yuxin and Chi, Yuejie and Wei, Yuting},
  journal={The Annals of Statistics},
  volume={52},
  number={1},
  pages={233--260},
  year={2024},
  publisher={Institute of Mathematical Statistics}
}

@inproceedings{xiong2024iterative,
  title={Iterative preference learning from human feedback: Bridging theory and practice for rlhf under kl-constraint},
  author={Xiong, Wei and Dong, Hanze and Ye, Chenlu and Wang, Ziqi and Zhong, Han and Ji, Heng and Jiang, Nan and Zhang, Tong},
  booktitle={Forty-first International Conference on Machine Learning},
  year={2024}
}

@inproceedings{
    zhao2025sharp,
    title={Sharp Analysis for {KL}-Regularized Contextual Bandits and {RLHF}},
    author={Heyang Zhao and Chenlu Ye and Quanquan Gu and Tong Zhang},
    booktitle={The Thirty-ninth Annual Conference on Neural Information Processing Systems},
    year={2025}
}

@article{kim2026coverage,
  title={Coverage Improvement and Fast Convergence of On-policy Preference Learning},
  author={Kim, Juno and Yun, Jihun and Lee, Jason D and Jun, Kwang-Sung},
  journal={arXiv preprint arXiv:2601.08421},
  year={2026}
}

@inproceedings{chen2019information,
  title={Information-theoretic considerations in batch reinforcement learning},
  author={Chen, Jinglin and Jiang, Nan},
  booktitle={International Conference on Machine Learning},
  pages={1042--1051},
  year={2019},
  organization={PMLR}
}

@article{huang2024correcting,
  title={Correcting the Mythos of KL-Regularization: Direct Alignment without Overoptimization via Chi-Squared Preference Optimization},
  author={Huang, Audrey and Zhan, Wenhao and Xie, Tengyang and Lee, Jason D and Sun, Wen and Krishnamurthy, Akshay and Foster, Dylan J},
  journal={arXiv preprint arXiv:2407.13399v3},
  year={2024}
}

@article{schulman2015trust,
  title={Trust Region Policy Optimization},
  author={Schulman, John},
  journal={arXiv preprint arXiv:1502.05477},
  year={2015}
}

@inproceedings{haarnoja2018soft,
  title={Soft actor-critic: Off-policy maximum entropy deep reinforcement learning with a stochastic actor},
  author={Haarnoja, Tuomas and Zhou, Aurick and Abbeel, Pieter and Levine, Sergey},
  booktitle={International conference on machine learning},
  pages={1861--1870},
  year={2018},
  organization={PMLR}
}

@article{rafailov2023direct,
  title={Direct preference optimization: Your language model is secretly a reward model},
  author={Rafailov, Rafael and Sharma, Archit and Mitchell, Eric and Manning, Christopher D and Ermon, Stefano and Finn, Chelsea},
  journal={Advances in Neural Information Processing Systems},
  volume={36},
  year={2023}
}

@article{xie2024exploratory,
  title={Exploratory Preference Optimization: Harnessing Implicit Q*-Approximation for Sample-Efficient RLHF},
  author={Xie, Tengyang and Foster, Dylan J and Krishnamurthy, Akshay and Rosset, Corby and Awadallah, Ahmed and Rakhlin, Alexander},
  journal={arXiv preprint arXiv:2405.21046},
  year={2024}
}

@article{williams1992simple,
  title={Simple statistical gradient-following algorithms for connectionist reinforcement learning},
  author={Williams, Ronald J},
  journal={Machine learning},
  volume={8},
  pages={229--256},
  year={1992},
  publisher={Springer}
}

@inproceedings{ziebart2008maximum,
  title={Maximum entropy inverse reinforcement learning.},
  author={Ziebart, Brian D and Maas, Andrew L and Bagnell, J Andrew and Dey, Anind K and others},
  booktitle={Aaai},
  volume={8},
  pages={1433--1438},
  year={2008},
  organization={Chicago, IL, USA}
}

@inproceedings{levine2013guided,
  title={Guided policy search},
  author={Levine, Sergey and Koltun, Vladlen},
  booktitle={International conference on machine learning},
  pages={1--9},
  year={2013},
  organization={PMLR}
}

@article{ouyang2022training,
  title={Training language models to follow instructions with human feedback},
  author={Ouyang, Long and Wu, Jeffrey and Jiang, Xu and Almeida, Diogo and Wainwright, Carroll and Mishkin, Pamela and Zhang, Chong and Agarwal, Sandhini and Slama, Katarina and Ray, Alex and others},
  journal={Advances in neural information processing systems},
  volume={35},
  pages={27730--27744},
  year={2022}
}

@inproceedings{jin2020provably,
  title={Provably efficient reinforcement learning with linear function approximation},
  author={Jin, Chi and Yang, Zhuoran and Wang, Zhaoran and Jordan, Michael I},
  booktitle={Conference on learning theory},
  pages={2137--2143},
  year={2020},
  organization={PMLR}
}

@book{lattimore2020bandit,
  title={Bandit algorithms},
  author={Lattimore, Tor and Szepesv{\'a}ri, Csaba},
  year={2020},
  publisher={Cambridge University Press}
}

@book{zhang2023ltbook,
   title={Mathematical Analysis of Machine Learning Algorithms},
   author={Zhang, Tong},
   doi={10.1017/9781009093057},
   publisher={Cambridge University Press},
   place={Cambridge},
   year={2023}
}

@article{foster2023foundations,
  title={Foundations of reinforcement learning and interactive decision making},
  author={Foster, Dylan J and Rakhlin, Alexander},
  journal={arXiv preprint arXiv:2312.16730},
  year={2023}
}

@book{wainwright2019high,
  title={High-dimensional statistics: A non-asymptotic viewpoint},
  author={Wainwright, Martin J},
  volume={48},
  year={2019},
  publisher={Cambridge university press}
}

@article{lecam1973convergence,
  title={Convergence of estimates under dimensionality restrictions},
  author={Le Cam, Lucien},
  journal={The Annals of Statistics},
  pages={38--53},
  year={1973},
  publisher={JSTOR}
}

@incollection{yu1997assouad,
  title={Assouad, fano, and le cam},
  author={Yu, Bin},
  booktitle={Festschrift for Lucien Le Cam: research papers in probability and statistics},
  pages={423--435},
  year={1997},
  publisher={Springer}
}

@book{polyanskiy2025information,
  title={Information theory: From coding to learning},
  author={Polyanskiy, Yury and Wu, Yihong},
  year={2025},
  publisher={Cambridge university press}
}

@article{chen2024assouad,
  title={Assouad, Fano, and Le Cam with Interaction: A Unifying Lower Bound Framework and Characterization for Bandit Learnability},
  author={Chen, Fan and Foster, Dylan J and Han, Yanjun and Qian, Jian and Rakhlin, Alexander and Xu, Yunbei},
  journal={Advances in Neural Information Processing Systems},
  volume={37},
  pages={75585--75641},
  year={2024}
}

@book{beck2017first,
  title={First-order methods in optimization},
  author={Beck, Amir},
  year={2017},
  publisher={SIAM}
}

@article{foster2025good,
  title={Is a Good Foundation Necessary for Efficient Reinforcement Learning? The Computational Role of the Base Model in Exploration},
  author={Foster, Dylan J and Mhammedi, Zakaria and Rohatgi, Dhruv},
  journal={arXiv preprint arXiv:2503.07453},
  year={2025}
}

@book{bertsekas2003convex,
  title={Convex analysis and optimization},
  author={Bertsekas, Dimitri and Nedic, Angelia and Ozdaglar, Asuman},
  volume={1},
  year={2003},
  publisher={Athena Scientific}
}

@article{milgrom2002envelope,
  title={Envelope theorems for arbitrary choice sets},
  author={Milgrom, Paul and Segal, Ilya},
  journal={Econometrica},
  volume={70},
  number={2},
  pages={583--601},
  year={2002},
  publisher={Wiley Online Library}
}

@inproceedings{zhao2026towards,
    title={Towards a Sharp Analysis of Offline Policy Learning for \$f\$-Divergence-Regularized Contextual Bandits},
    author={Qingyue Zhao and Kaixuan Ji and Heyang Zhao and Tong Zhang and Quanquan Gu},
    booktitle={The Fourteenth International Conference on Learning Representations},
    year={2026}
}

@book{rockafellar1997convex,
  title={Convex analysis},
  author={Rockafellar, R Tyrrell},
  volume={28},
  year={1997},
  publisher={Princeton university press}
}

@inproceedings{aminian2025kl,
  title={KL-Regularized RLHF with Multiple Reference Models: Exact Solutions and Sample Complexity},
  author={Aminian, Gholamali and Asadi, Amir R and Shenfeld, Idan and Mroueh, Youssef},
  booktitle={The Thirty-ninth Annual Conference on Neural Information Processing Systems},
  year={2025}
}

@article{lee2026regularized,
  title={Regularized Online RLHF with Generalized Bilinear Preferences},
  author={Lee, Junghyun and Hong, Minju and Jun, Kwang-Sung and Yun, Chulhee and Yun, Se-Young},
  journal={arXiv preprint arXiv:2602.23116},
  year={2026}
}

@article{nayak2025achieving,
  title={Achieving Logarithmic Regret in KL-Regularized Zero-Sum Markov Games},
  author={Nayak, Anupam and Yang, Tong and Yagan, Osman and Joshi, Gauri and Chi, Yuejie},
  journal={arXiv preprint arXiv:2510.13060},
  year={2025}
}

@article{zhang2026beyond,
  title={Beyond Pessimism: Offline Learning in KL-regularized Games},
  author={Zhang, Yuheng and Chen, Claire and Jiang, Nan},
  journal={arXiv preprint arXiv:2604.06738},
  year={2026}
}

@article{ji2026near,
  title={Near-Optimal Regret for KL-Regularized Multi-Armed Bandits},
  author={Ji, Kaixuan and Zhao, Qingyue and Zhao, Heyang and Di, Qiwei and Gu, Quanquan},
  journal={arXiv preprint arXiv:2603.02155},
  year={2026}
}

@inproceedings{zhao2025logarithmic,
  title={Logarithmic Regret for Online KL-Regularized Reinforcement Learning},
  author={Zhao, Heyang and Ye, Chenlu and Xiong, Wei and Gu, Quanquan and Zhang, Tong},
  booktitle={International Conference on Machine Learning},
  pages={77864--77884},
  year={2025},
  organization={PMLR}
}

@inproceedings{tiapkin2023fast,
  title={Fast rates for maximum entropy exploration},
  author={Tiapkin, Daniil and Belomestny, Denis and Calandriello, Daniele and Moulines, Eric and Munos, Remi and Naumov, Alexey and Perrault, Pierre and Tang, Yunhao and Valko, Michal and Menard, Pierre},
  booktitle={International Conference on Machine Learning},
  pages={34161--34221},
  year={2023},
  organization={PMLR}
}

@article{wu2025greedy,
  title={Greedy Sampling Is Provably Efficient for RLHF},
  author={Wu, Di and Shi, Chengshuai and Yang, Jing and Shen, Cong},
  journal={arXiv preprint arXiv:2510.24700},
  year={2025}
}

@article{gx2025kl,
  title={KL-Regularized Reinforcement Learning is Designed to Mode Collapse},
  author={GX-Chen, Anthony and Prakash, Jatin and Guo, Jeff and Fergus, Rob and Ranganath, Rajesh},
  journal={arXiv preprint arXiv:2510.20817},
  year={2025}
}

@article{wang2023beyond,
  title={Beyond reverse kl: Generalizing direct preference optimization with diverse divergence constraints},
  author={Wang, Chaoqi and Jiang, Yibo and Yang, Chenghao and Liu, Han and Chen, Yuxin},
  journal={arXiv preprint arXiv:2309.16240},
  year={2023}
}

@article{tang2025few,
  title={On a few pitfalls in kl divergence gradient estimation for rl},
  author={Tang, Yunhao and Munos, R{\'e}mi},
  journal={arXiv preprint arXiv:2506.09477},
  year={2025}
}

@article{shah2025comedy,
  title={A Comedy of Estimators: On KL Regularization in RL Training of LLMs},
  author={Shah, Vedant and Obando-Ceron, Johan and Jain, Vineet and Bartoldson, Brian and Kailkhura, Bhavya and Mittal, Sarthak and Berseth, Glen and Castro, Pablo Samuel and Bengio, Yoshua and Malkin, Nikolay and others},
  journal={arXiv preprint arXiv:2512.21852},
  year={2025}
}

@article{zhang2026ema,
  title={EMA Policy Gradient: Taming Reinforcement Learning for LLMs with EMA Anchor and Top-k KL},
  author={Zhang, Lunjun and Ba, Jimmy},
  journal={arXiv preprint arXiv:2602.04417},
  year={2026}
}

@inproceedings{shan2025forward,
  title={Forward kl regularized preference optimization for aligning diffusion policies},
  author={Shan, Zhao and Fan, Chenyou and Qiu, Shuang and Shi, Jiyuan and Bai, Chenjia},
  booktitle={Proceedings of the AAAI Conference on Artificial Intelligence},
  volume={39},
  number={13},
  pages={14386--14395},
  year={2025}
}

@book{bishop2006pattern,
  title={Pattern recognition and machine learning},
  author={Bishop, Christopher M and Nasrabadi, Nasser M},
  volume={4},
  number={4},
  year={2006},
  publisher={Springer}
}

@article{ji2023language,
  title={Language model decoding as direct metrics optimization},
  author={Ji, Haozhe and Ke, Pei and Wang, Hongning and Huang, Minlie},
  journal={arXiv preprint arXiv:2310.01041},
  year={2023}
}

@article{guo2025deepseek,
  title={Deepseek-r1: Incentivizing reasoning capability in llms via reinforcement learning},
  author={Guo, Daya and Yang, Dejian and Zhang, Haowei and Song, Junxiao and Wang, Peiyi and Zhu, Qihao and Xu, Runxin and Zhang, Ruoyu and Ma, Shirong and Bi, Xiao and others},
  journal={arXiv preprint arXiv:2501.12948},
  year={2025}
}

@inproceedings{mcallester1999pac,
  title={PAC-Bayesian model averaging},
  author={McAllester, David A},
  booktitle={Proceedings of the twelfth annual conference on Computational learning theory},
  pages={164--170},
  year={1999}
}

@inproceedings{gentile2022achieving,
  title={Achieving minimax rates in pool-based batch active learning},
  author={Gentile, Claudio and Wang, Zhilei and Zhang, Tong},
  booktitle={International Conference on Machine Learning},
  pages={7339--7367},
  year={2022},
  organization={PMLR}
}

@inproceedings{agarwal2023vo,
  title={VO $ Q $ L: Towards Optimal Regret in Model-free RL with Nonlinear Function Approximation},
  author={Agarwal, Alekh and Jin, Yujia and Zhang, Tong},
  booktitle={The Thirty Sixth Annual Conference on Learning Theory},
  pages={987--1063},
  year={2023},
  organization={PMLR}
}

@article{yin2021towards,
  title={Towards instance-optimal offline reinforcement learning with pessimism},
  author={Yin, Ming and Wang, Yu-Xiang},
  journal={Advances in neural information processing systems},
  volume={34},
  pages={4065--4078},
  year={2021}
}

@article{wang2022gap,
  title={On gap-dependent bounds for offline reinforcement learning},
  author={Wang, Xinqi and Cui, Qiwen and Du, Simon},
  journal={Advances in Neural Information Processing Systems},
  volume={35},
  pages={14865--14877},
  year={2022}
}

@article{min2021variance,
  title={Variance-aware off-policy evaluation with linear function approximation},
  author={Min, Yifei and Wang, Tianhao and Zhou, Dongruo and Gu, Quanquan},
  journal={Advances in neural information processing systems},
  volume={34},
  pages={7598--7610},
  year={2021}
}

@article{zanette2021provable,
  title={Provable benefits of actor-critic methods for offline reinforcement learning},
  author={Zanette, Andrea and Wainwright, Martin J and Brunskill, Emma},
  journal={Advances in neural information processing systems},
  volume={34},
  pages={13626--13640},
  year={2021}
}

@inproceedings{scherrer2014approximate,
  title={Approximate policy iteration schemes: A comparison},
  author={Scherrer, Bruno},
  booktitle={International Conference on Machine Learning},
  pages={1314--1322},
  year={2014},
  organization={PMLR}
}

@inproceedings{xie2021batch,
  title={Batch value-function approximation with only realizability},
  author={Xie, Tengyang and Jiang, Nan},
  booktitle={International Conference on Machine Learning},
  pages={11404--11413},
  year={2021},
  organization={PMLR}
}

\end{document}